\newtheorem{rmk}{Remark}
\newtheorem*{rmk*}{Remark}
\newtheorem{defn}{Definition}
\newtheorem{prop}{Proposition}
\newtheorem{expres}{Experimental Result}
\def\cA{{\cal A}}
\newcommand{\diag}{\mbox{diag}}
\renewcommand{\epsilon}{\varepsilon}
\renewcommand{\phi}{\varphi}
\def\boxit#1{\vbox{\hrule\hbox{\vrule\kern3pt\vbox{\kern3pt#1\kern3pt}\kern3pt\vrule}\hrule}}
\date{}
\author[1]{Jean-Claude Belfiore}
\author[1,2]{Daniel Bennequin}
\author[1]{Xavier Giraud}
\affil[1]{Huawei Wireless Technology Lab. Paris Research Center}
\affil[2]{University of Paris Diderot, Faculty of Mathematics}
\title{Logical Information Cells I}
\begin{document}
	
	\maketitle \tableofcontents
\newpage
\begin{abstract}
	In this study we explore the spontaneous apparition of visible intelligible reasoning in simple artificial networks, and we
	connect this experimental observation with a notion of semantic information. \\
	\indent We start with the reproduction of a DNN model of natural neurons in monkeys, studied by Neromyliotis and Moschovakis in 2017 and 2018, to explaini how "motor equivalent neurons", coding only for the action of pointing, are supplemented by other neurons for specifying the actor of the action, the eye $E$, the hand $H$, or the eye and the hand together $EH$. There appear inner neurons performing a logical work, making intermediary proposition, for instance $E \vee EH$. Then, we remarked that adding a second hidden layer and choosing a symmetric metric for learning, the activities of the neurons become almost quantized and more informative.
	Using the work of Carnap and Bar-Hillel 1952, we define a measure of the \emph{logical value} for collections of such cells. The logical score growths with the depth of the layer, i.e. the information on the output decision increases, which confirms a kind of bottleneck principle.\\
	\indent Then we study a bit more complex tasks, \emph{a priori} involving predicate logic, using the operators $\forall$ and $\exists$. In these experiments, bars of different lengths and colors ($R$,$G$,$B$,...) are
	presented and the network has to decide if some pairs are disjoint $D$, intersect only $IO$ or are related by inclusions $II$. Also for this task, even for two bars, the logical value increases considerably with the
	total depth: three hidden layers being sufficient and necessary. Amazingly, with less layers the network performs well,
	but it uses other strategies, like Fourier
	analysis;  then a bifurcation occurs with more hidden layers.\\
	\indent Also amazing is the fact that the logical population takes almost no account of the statistics in the data;
	for instance $D$ and $IO$ are the most frequent inputs, but most of the neurons eliminate $D$ or $II$.\\
	\indent With a richer learning, for instance varying the lengths of two bars, and exchanging the long and the small ones, the network
	develops a richer and more inventive logic; for instance it becomes able to treat directly with $IO$.\\
	\indent We compare the logic and the measured weights. This shows, for groups of neurons, a neat correlation between
	the logical score and the size of the weights. It exhibits a form of sparsity between the layers. The most spectacular result concerns
	the triples which can conclude for all conditions:
	when applying their weight matrices to their logical matrix, we recover
	the classification. This shows that weights precisely perform the proofs.
\end{abstract}
\newpage
\section{Towards Reasoning Networks \\ (\textit{Réseaux Raisonnants})}
In this study we explore the spontaneous apparition of reasoning in simple artificial networks, and we
interpret this experimental observation in terms of semantic information. \\

\noindent We prefer to write Réseaux raisonnants than Réseaux pensants, which had resonated better with the Roseau pensant of Blaise Pascal
("L’homme n’est qu’un roseau, le plus faible de la nature; mais c’est un roseau pensant", in his Pensées),
because human thinking cannot be summarized by logical grouping and deduction.
\subsection{Generalities}

\indent The general hypothesis is as follows: when confronted to a supervised (or reinforced) classification task, or to the determination of structures
in the data which can be formulated in logical terms, a simple
deep neural network ($DNN$) develops by itself a coherent way of reasoning, which proceeds by decomposing the decision into
elementary steps, and recomposing them. This is the opposite of the current hypothesis
made today, that DNNs have difficulties with compositional structures, privileging global correlations. Now our hypothesis must be made more precise, and limited. In particular, we confirm here the known fact, that, in order to work efficiently and reasonably, the network requires specially good metrics (loss functions) for the output, convenient non-linearities and a sufficiently large number of layers, as soon as the task becomes
more complex.\\
\indent The kind of reasoning which appears here experimentally, shares several essential properties with human reasoning, inference and deduction;
in particular it relies on attentive preferences for certain characteristics of the data, and a kind of
discretization of the message that are related to them, as words do, for instance.  The cumulative experience here is short in time and uses a large set of data, nothing is done by long time evolution,
natural wisdom or preliminary memories. However, as we will report, remarkable inventions are made by the network itself. It makes us expect
that introducing other aspects, like \emph{a priori} knowledge, or long and short term memories, will considerably increase the capacity of discrimination, imagination and reasoning of these networks.\\

In a theoretical companion paper, \emph{Topos and Stacks of Deep Neural Networks} \cite{belfiore2021topos}, we have presented a notion of \emph{semantic functioning} of DNNs, based on topos theory, stacks geometry and logics. This notion generalizes what we use in the present experimental study. These two studies were conducted in parallel, in order to confront top down principles of Topological Information Theory with a bottom up view of the activity of artificial neurons.\\

In another companion study, \emph{Logico-probalisitic Information}, Belfiore, Bennequin and Giraud \cite{logico-probabilist}, to appear soon, we study the relation between the logical information which appears here and the Shannon probabilistic information, applied to subjective probabilities in Bayesian networks of (logical) variables.
Probabilities are certainly necessary for more complex tasks, for the acquisition of uncertain knowledge, and their flexible use.
Even in the present study of very simple networks,
Bayesian rules had interest, however, for the essential part, logic only was sufficient. This probabilistic study connects the present work with the
today current interpretation of semantic information, as it is defined and used for instance by Bao et al. 2011 \cite{bao-basu}, 2014 \cite{basu-bao} and Xie et al. 2020 \cite{xie}, for semantic communication, generalizing Carnap and Bar-Hillel 1952 \cite{CBH52}. For other related directions, we can cite Barwise and Seligman 1997 \cite{Barwise1997-BARIFT}, Floridi 2004 \cite{Floridi2004}, D'Alfonso 2011 \cite{DAlfonso2011-DALOQS}.\\

\subsection{The experiments}
In this contribution, we consider two sets of experiments with Deep Neural Networks (DNNs). One set to highlight the capacity of the network of doing {\em propositional calculus} and the second one to highlight the ability of the network of doing {\em predicate calculus}. 

\subsubsection{Motor equivalence. Propositional Logic.}

In the first experiment an action is registered as an entry, which is made by several possible actors $a,b,c$ in $A$ and the network must decide whom was the actor. In the first layer, each neuron reacts to a given action according to three functions
$\varphi_a,\varphi_b,\varphi_c$ of the physical parameters of the action, say $s\in S$ (concretely an angle). First, only one hidden layer was introduced; the $DNN$ learned and discriminated. To make apparent a logical functioning here, we examined the activity of the neurons of the hidden layer as a function of the actor and of the parameter.
We observed that some cells, for some of the actors, develop, by learning, an almost saturated answer at a common value, which then, does not depend anymore on $s\in S$. This allows to exclude these actors when the activity departs from this value. This can be seen as a preliminary knowledge about the future decision, which can be deduced logically.
This experiment reproduces the results of a network imagined by two neurophysiologists, E.Neromyliotis and A.K.Moschovakis, in $2017$ and $2018$ \cite{nero-moscho-17,nero-moscho-18} to study the
ambiguity of motor equivalent cells, in the spirit of the famous mirror cells, coding for an action independently of the fact
that it is you or someone else which achieves this action.\\
\indent To explore the logical potential of the network, we added another hidden layer. And in this case we saw an impressive diversity of logical analysis, made by cells which saturated at different values; many cells made Boolean statements in $a,b,c$ for different intervals of spiking activity. However, in general no individual cell contained sufficient information to conclude alone, but collectively were able to conclude in a logical manner. \\
The fact that the answer logically deducible from the cells, is correct or not, is also important, and we always verified if this was the case or not; remarkably this was always the case. Another important aspect is the capacity of generalization of the network; we verified that it was very good and induced a nice adaptation of the logical cells.\\
More details are given in the corresponding section below.

\indent In order to understand this experiment and more general situations, we have introduced a (preliminary) notion of logical information.

\noindent We are working with a collection of input data $\Xi$, and
a neural network $X_w$ which has learned its weights $w\in W$ in order to answer a well posed final question about data (which corresponds, in our example, to a classification).
We look at all sets of activities in a given inner layer $L_k$, conditioned by some intermediate statements $P$ about the data (named questions), the goal being the final classification. A proposition $P$ corresponds to a sub-collection $\Xi_P$ of $\Xi$. The corresponding sets of activities constitute a receptive field. And we suppose that some neurons develop quantized activities $\epsilon$ for some propositions. We introduce then the set $D_k$ made of those $\epsilon$'s and their complements. A definition of the semantic information in $L_k$, with respect to the final questions at the output layer, is the collection $I_k^{\epsilon},\epsilon\in D_k$ of the propositions that can be decided (i.e. proved or disproved) from each collection of activations states $\epsilon \in D_k$.\\
\indent Therefore this information depends both on the known individual receptive fields, attached to semantics propositions, and on specific sets of activations in the whole layer. It is important to remind that we are forced, by the experimental setting, to consider sets of sets of propositions like $\epsilon_a\models P_1,...,P_m$ for $a\in L_k$. The question becomes: can we deduce logically the final questions, the classification, by considering only $L_k$, knowing the statistics of some responses, and using the sets $I_k^{\epsilon(\xi)}$, associated to a possible input data $\xi\in \Xi$? And if not, which part of the classification can be decided? 
Then we interpret the collection of sets $I^{\epsilon}_k, \epsilon\in D_k$, as a model in $L_k$ of the global problem posed to the network. In other terms, each collection $I^{\epsilon}_k$, for a given $\epsilon\in D_k$, is viewed as axioms for a theory, and we ask if the final questions are decidable or not in this theory \cite{LK-book}. 
\begin{rmk*}
	\normalfont Nothing, a priori, forbids to collect information from several layers and compare them, asking what a layer knows about another one. This leads to a notion of shared information in the network\footnote{We interpret these shared information in terms of categories, functors and natural transformations in a Grothendieck topos}.
\end{rmk*}
\begin{rmk*}
	\normalfont The semantic information in a layer does not purely describe objective operations of a network which has learned because the necessary saturation,
	for having logical cells, depends on three almost independent factors: 
	\begin{enumerate}
		\item the data collection $\Xi$,
		\item the network $X$ and the metric used for learning,
		\item our own choice of the set of intermediate semantic propositions $\{P\}$, for generating saturation over the corresponding subsets $\Xi_P$
		\footnote{Theoretically it would be possible to cancel point $3$, by considering all propositions, but practically the number of choices is too large for that. As it happens in Physics, the result of the experiment depends on the theory and on the experimental design, in particular, what is measured. We will present, in appendix, the exact parameters we have used in the experiments.}. 
	\end{enumerate}
\end{rmk*}

\subsubsection{Topology of colored segments. Predicate Logic.}

\indent With the second example we tested the ability of a simple
DNN to manage predicates logics. This example is inspired by image analysis or speech analysis, but it is also
extremely simple, considering  one dimensional images of two or three colored segments, or the superposition of two of three voices in time, and asking if they intersect or not, and if one is included in another, or superpose with it. The main interest was the passage from usual \emph{propositional calculus} to \emph{predicate calculus},
involving quantifiers, existential and universal. Here also everything worked well, at the condition of increasing the number of layers to at least three.\\
One of the amazing inventions of the network that we observed, was the comparison of the lengths of the objects (respectively the sentences) in the absence of any
questions about these lengths: the network understood by itself that the inclusion is possible only in one sense, without forgetting the colors (resp. the timbers). This allowed it
to generalize fairly well when the colors were exchanged. \\

With one or two hidden layers, the logical behavior was obscure. Very interestingly, with two hidden layers a kind of Fourier analysis
is developed by the network. But with three hidden layers, we observed a
wonderful set of quantized logical cells. Importantly, these logical cells were only interested by propositions which are consequences of
the output questions, and together they can answer these questions after two layers. In some sense this tells us that the propositional
calculus coming from the objectives dominates. However, also importantly, the propositions which are more complex than the others
from the point of view of predicative calculus, posed difficult problem to the cells, and were accessible to them only indirectly,
by complementing the direct decisions.\\

Thus we get a kind of dynamics of information from layers to deeper layers. Then we have a version of semantic bottleneck.
In a companion paper \cite{logico-probabilist}, considering the link with probabilistic inference, we discuss the relation of this experimental discovery
with the Bottleneck principle of Tishby, Pereira and Bialek 2000 \cite{bottleneck-2000}, Tishby and Zaslavsly 2015 \cite{bottleneck-2015}.\\
\indent It is important to say that in the two above experiments, the minimum of error, around $1/100$
is achieved with one hidden layer, and maintained with two and three hidden layers. However the logical functioning progresses with the number of added hidden layers,
showing that the semantic information increases with the depth of the network, then the minimization principle induces a maximization of information. It could be that the form of the
back-propagation algorithm, which looks like a belief propagation algorithm, is responsible of this shift to semantic and logic.\\

More complex tasks, for instance the complete description of the topology of three colored segments,
provoke the appearance of probabilistic estimations: the cells behave as Bayesian estimators, the
quantization is not so good, but the collective decisions are good. The understanding of their
information content needs a threshold, but fundamentally the principles are unchanged.
See \emph{Logico-probabilistic Information} \cite{logico-probabilist}.\\

\noindent Note that, during the work which is reported here, we had the impressions of a new kind of Physics, with biological flavors, in interaction with humans problems
and some aspects of human behaviors.

\subsubsection{Measuring Logics and Semantics}
\indent For understanding these experiments and, we hope, also more general situations, we have introduced a schematic notion of logical information.
Remind that we are working with a collection of input data $\Xi$, and
a neural network $X_w$ which has learned its weights $w\in W$ in order to answer a well posed final question about data (which corresponds to a classification).
Then we look at the whole sets of activities in a given inner layer $L_k$, conditioned by some intermediary statements $P$ about the data; we name them questions, in direction of the final classification. A proposition $P$ corresponds to a sub-collection $\Xi_P$ of $\Xi$. The corresponding sets of activities constitute a receptive field. And we suppose that some neurons develop discretized activities $\epsilon$ for some propositions. We introduce the set $D_k$ made by these $\epsilon$ and their complements. Then, by definition, the semantic information in $L_k$, with respect to the final questions, at the output, is the collection $I_k^{\epsilon};\epsilon\in D_k$ of the propositions that can be decided (i.e. proved or disproved) from each collection of activations states $\epsilon \in D_k$.\\
\indent Therefore this information depends both on the known individual receptive fields, attached to semantic propositions, and on specific sets of activations in the whole layer. It is important to remind that we are forced, by the experimental setting, to consider sets of sets of propositions like $\epsilon_a\models P_1,...,P_m$ for $a\in L_k$. The question becomes: can we deduce logically the final questions, the classification, by considering only $L_k$, knowing the statistics of certain responses, and using the sets $I_k^{\epsilon(\xi)}$, associated to a possible input data $\xi\in \Xi$? And if not what part of the classification can be decided?  Then we interpret the collection of sets $I^{\epsilon}_k; \epsilon\in d_k$, as a model in $L_k$ of the global problem posed to the network. In other terms, each collection $I^{\epsilon}_k$, for a given $\epsilon\in D_k$, is viewed as axioms for a theory, and we ask if the final questions are decidable or not in this theory. We also propose numerical measures of logical values.\\
\begin{rmk*}
	\normalfont In the more theoretical study, \emph{Topos and Stacks of DNNs} \cite{belfiore2021topos}, we define a more general notion of semantic information, which allows to compare the theories expressed by several layers about what happens in a given layer.
\end{rmk*}
\begin{rmk*}
	\normalfont The semantic information in a layer do not describes purely objective operations of a network which has learned, because the necessary saturation,	for having logical cells, depends on three almost independent factors: 
	\begin{enumerate}[label=\arabic*)]
		\item the collection of data $\Xi$,
		\item the network $X$ and the metric used for learning,
		\item our own choice of intermediate semantic propositions $P$, for generating saturation over the corresponding subsets $\Xi_P$.
	\end{enumerate}
	Theoretically it would be possible to forget point $3$, by considering all propositions, but practically, the number of choices is too large for that. As it happens in Physics, the result of the experiment depends on the theory and on the experimental design, in particular what is measured.
\end{rmk*}

\subsubsection{Neural network description}
In all experiments described in this text, we have used the fully connected network represented in figure \ref{fig-network}.
\noindent
\begin{figure}[ht]
	\begin{center}
		\includegraphics[scale=0.85]{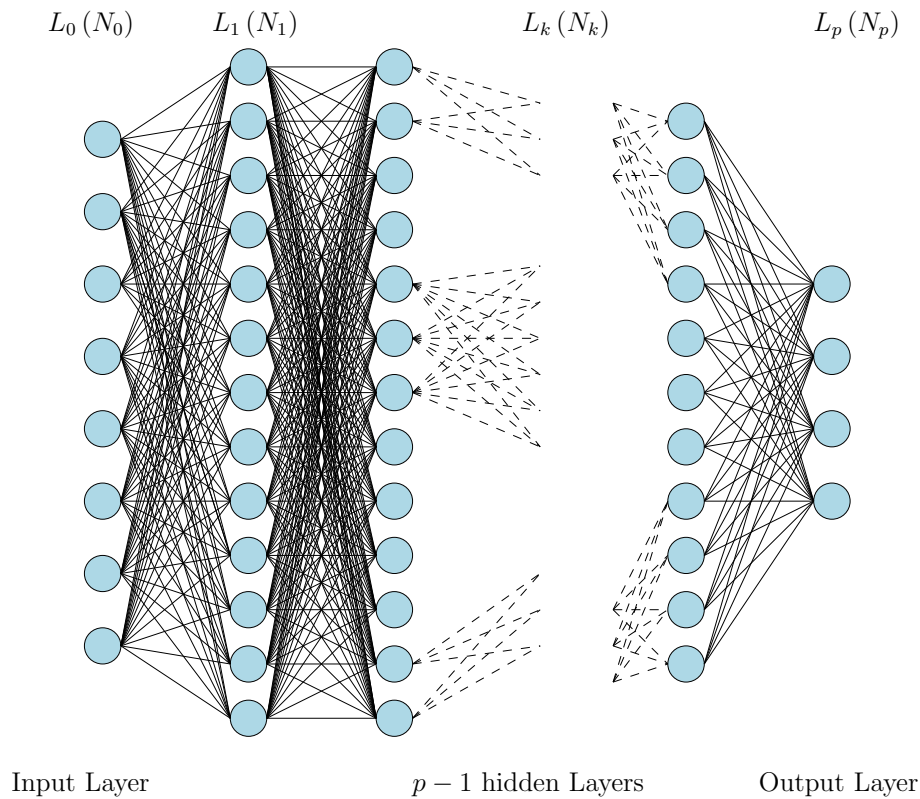}
		\caption{The networks under test}
		\label{fig-network}
	\end{center}
\end{figure}

\noindent We use the following notations:

\begin{center}
	\begin{tabular}{|l|c|}
		\hline
		Object & Notation
		\\\hline
		$k$-th layer & $L_k$\\
		Number of cells in the $k$-layer & $N_k$\\
		\hline
	\end{tabular}
\end{center}

\medskip

Implementations are performed with PYTORCH with the following options:
\begin{itemize}
	\item[$\looparrowright$] Biases are forced to zero.
	\item[$\looparrowright$] Non linear activation functions are the same on each layer, namely $x\mapsto \tanh(ax)$ where $a$ is a positive constant used to improve discretization and to speed up convergence. The chosen values have been selected trough simulation.
	\item[$\looparrowright$] Either Mean Squared Error (MSE) or CrossEntropyLoss (CE) is used as the loss function
	\item[$\looparrowright$] Adam optimizer has been selected.
\end{itemize}
\noindent
All simulations are run on a computer equipped with an intel core i7-8565U CPU.

\section{The simplest model. Propositional calculus.}\label{sec-first-exp}

\subsection{Introduction}

Many neuroscientists have made the observation that the motor system must necessarily involve
cognitive operations, cf. Georgopoulos 2000 \cite{georgopoulos}. Even the simplest animals, like the worm c-elegans or the ascidian
larva, ciona intestinalis, possess a repertoire of voluntary actions, and have to select at the right
time the most convenient one, then select in what order they must execute the sequence of actions, using
memory, anticipation and evaluation. Then it is natural to expect a sort of reasoning in every
animals (in fact every living entity, including plants). The small animals we just mentioned have brains,
containing few hundreds of neurons, interconnected by thousands of synapses, assembled in areas and organized
in moduli, dedicated to several functions. Cf. Kato et  al. 2015 \cite{kato}, Ryan et al. 2018 \cite{ryan2018}.\\
\indent In higher mammals like primates, the brain is much more
complex, but still organized in areas, moduli and networks of sub-systems, and in many cases, the
individual neurons have personal receptive fields, something of interest in the world or in the
functioning. It is not to say that assemblies are not important, to the contrary, they are the
more important ingredient for every perception, memory and decision (cf Hebb's book \cite{hebb-organization-of-behavior-1949}), however these assemblies
rely on the personalities of the individual cells.\\
\indent In their two papers \cite{nero-moscho-17,nero-moscho-18}, E. Neromyliotis and A.K. Moschovakis (N\&M) studied specific neurons
in the pre-motor cortex of monkeys (more precisely in a small region, named arcuate sulcus (AS), and
in periarcuate cortex, both concerned by the movements of the eye and of the fore-limbs. They found
two different sub-populations: 
\begin{enumerate}[label=\arabic*)]
	\item Meq cells (Movement equivalent), which fire during preparation and execution of directed
	movements of the eye and of the arm, without preference for the conditions eye alone (E), hand alone (H) and both eye
	and hand together (EH), but with preference for a goal in space depending of each condition; 
	\item  S-cells which manifest a sort of indifference
	for one or two of the above conditions but continue to prefer some directions, some of them we will call Logical Information Cells, as alluded	in \cite{nero-moscho-17}, because they announce a partial choice of condition.
\end{enumerate}
Taking into account anatomy and timing, the authors suggested that
Meq activity precedes S-cells activity, in order to prepare decision and execution in the primary motor cortex and the spinal chord.\\
\indent N\&M said that all these kinds of cells were already found by Fujii et al. 2002 \cite{fujii}, in other close areas, the supplementary eye field SEF and
the supplementary motor area SMA, specially pre-SMA, the more rostral part of SMA. This region pre-SMA is a crucial node for our discussion, because it
is involved in most of the abstract cognitive processes happening in the brain. For instance Houdé et al. 2000 \cite{houde}, using functional imaging,
have shown that, when shifting from a more perceptual task to a more deductive logical task, there is a shift of brain's activity from a more
posterior network (ventral and dorsal) to a left-prefrontal network, mainly constituted by the middle frontal gyrus MFG, the Broca's area, the anterior insula AI (sic)
and the pre-SMA. For the authors this corresponds to a network supporting logical thinking in general. Further studies have confirmed this view;
for instance Johnston and Leek 2004 \cite{JL-2004}, on mental computations, Tremblay and Small 2010 \cite{tremblay-small}, on language comprehension tasks, either with words either with body gestures.
However, we must mention the interesting  discussion about the necessary link of pre-SMA with a motor action, see Nachev et al. 2008 \cite{nachev2008}, Johnston and Leek 2009 \cite{JL2009}.\\
\indent Of course, thinking and even reasoning, is not limited to pure logical reasoning, for instance the brain conducts probabilistic estimation and inference,
as formalized by Bayes for example (cf Pearl \cite{pearl}, Johnson-Laird et al. 2015 \cite{johnson-laird}), and neuronal networks in the prefrontal cortex well correspond to this
aspect of thinking (Koechlin et al. 2003 \cite{koechlin}). A large network, named the Default Mode Network (DMN), which corresponds to the highly complex activity at rest, is also known to support
spontaneous thinking; it involves several cortical areas, in particular PFC, and sub-cortical regions, like the basal ganglia BG, the thalamus T,
the region around the Hippocampus, and also the Amygdala, known for its expression of all the emotions.
The medial temporal cortex MTL is involved in most of the Long Term Memory operations, in particular episodic and semantic memories, and in MTL
the perirhinal cortex PRC is specially concerned by concepts formations and the understanding of their meaning. Thus the brain uses a network of many networks
for reasoning and performing semantic operations. However, the pre-SMA and its cells surely have a wider role than preparing saccades or
reaching with the arms, in reasoning in general, even if it is hard to separate from some movement operations. This is a good reason for starting with these cells.

\subsection{Experimental settings}
\subsubsection{Input layer description}

The input layer emulate MEQ neuronal responses. It is inspired by biological data though we do not aim at replicating the true biological situation. We have built a layer that is likely to produce
meaningful results.
The input layer is a set of $N_0$ cells corresponding to the MEQ neurons. Given an activator $a\in \mathcal{A}_3 = \{E,H,EH\}$ corresponding to $\mathsf{Eye}$, $\mathsf{Hand}$, $\mathsf{Eye+Hand}$, the neuron $x$ in the input layer gives rise to an activation signal $\phi^{x}_{a}(\theta)\in \mathbb{R}$ defined as
$$\phi^{x}_{a}(\theta):=\frac{\exp\left(\kappa(x,a)\cos(\theta-\mu(x,a))\right)}{2\pi I_0(\kappa(x,a))}$$
$a$ represent the root cause of the signal,
$\mu(x,a)$ is the preferred angle of the neuron $x$ for a given $a\in A_3 = \{E,H,EH\}$ and $\kappa(x,a)$ is related to the inverse of the spread around the mean as shown on the figure below

\begin{figure}[ht]
	\begin{center}
		\includegraphics[scale=0.7]{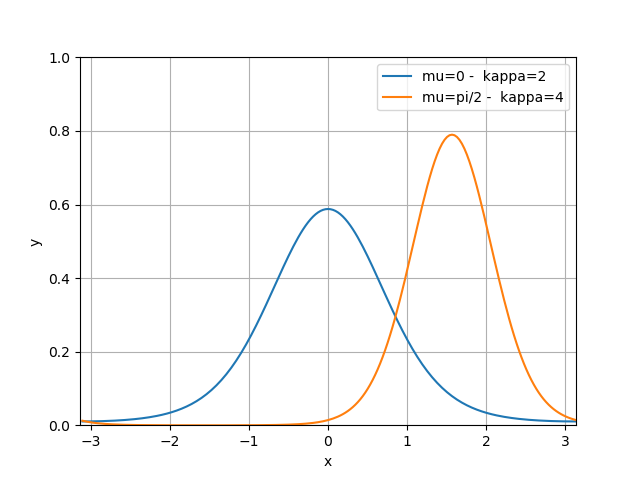}
		\caption{Von Mises distribution}
		\label{fig-VM}
	\end{center}
\end{figure}

\noindent A setting or a batch is a pair $s=(a,\theta)$ and $S_a^{1}$ is the circle $\{(a,\theta),\; \theta\in \mathbb{R}\}$ and the set of the settings is $G_3=S_E^{1}\vee S_H^{1}\vee S_{EH}^{1}$. It gives rise to an activation vector
$$(\phi^{1}_{a}(\theta),\ldots,\phi^{N_0}_{a}(\theta))$$
Additional conditions are required to build the input layer: preferred angle distribution and activation spread can be chosen. The following rules are implemented:

\begin{itemize}
	\item[$\looparrowright$] $]-\pi, \pi]$ is equipartitionned in $N_0$ subintervals with centers $\theta_1, \ldots,\theta_{N_0}$ gathered in a set $C$.
	\item[$\looparrowright$] $\tau_a:x\mapsto \mu(x,a)$ is a permutation of $C$ for all $a\in A_3$.
	
	\item[$\looparrowright$] Relative distributions $x\mapsto \tau_a(x)-\tau_E(x)$ where $a\in\{H,EH\}$ are Gaussian like with well separated maximum. Relative distributions are not that important as long as they are significantly different.
	
	\item[$\looparrowright$] For a given $a\in A_3$, all neurons have the same $\kappa(x,a)$ value, i.e. $x\mapsto\kappa(x,a)$ is constant. We use $\kappa("E")=1.0$, $\kappa("H")=2.0$ and  $\kappa("EH")=1.5$.
	
\end{itemize}

\subsubsection{The network}
\noindent We carry out experiment on three networks described as follows:

\noindent $\bullet\quad$ Number of cells in the $k$-layer

\begin{center}
	\begin{tabular}{|c|c c c c c|}
		\hline
		$p$ & $N_0$ & $N_1$ & $N_2$ & $N_3$  & $N_4$\\
		\hline
		3 & 55 & 50 & 4&&\\
		4 & 55 & 50 & 25 & 4&\\
		5 & 55 & 55 & 50 & 25 & 4\\
		\hline
	\end{tabular}
\end{center}

\noindent $\bullet\quad$ Non linearity: $x\mapsto\tanh(4x)$.

%---------------------------------
\subsubsection{The output layer and the loss function}
%---------------------------------
The three activators $a\in\cA_3$ are represented by the three roots of unity $1, \omega, \omega^2$ in order to preserve symetry. The complex number corresponding to $a$ is denoted $z_a$.
Let us assume that a setting $s=(a_0,\theta_0)$ has been selected where $a_0\in\cA_3$ and $\theta_0	\in]-\pi, \pi[$.
The output layer has four neurones:
\begin{itemize}
	\item[-] the first pair provides a complex number $z=x+iy$ and the decision is made towards the activator $a$ minimizing $|z-z_a|$. 
	\item[-] the second pair identifies $\theta$  by means of a pair $(u,v)$ which provides an estimate of $\cos(\theta)$ and $\sin(\theta)$. 
\end{itemize}

\noindent We denote $w$ the set of all weights in the neural network and $f_w$ the map applying a setting $s=(a,\theta)$ on the network output $(x, y, u, v)$.  The set of all weights $w$ minimizes the euclidean distance (MSE criterion)
$$d^2(s, f_w(s))=|z-z_a|^2 + \left(u-\cos(\theta)\right)^2 + \left(v-\sin(\theta)\right)^2$$ 
where $z$ and $z_a$ are defined in the previous subsection.

%---------------------------------
\subsubsection{Displaying the activity of a neuron}
%---------------------------------

\noindent Given a setting $s=(a,\theta)$ where $a\in \mathcal{A}_3$ and $\theta	\in]-\pi, \pi[$, we denote the output of the last hidden layer as $\phi^{y}_{a}(\theta)$ where
$y\in [1,N_{p-1}]$.  In order to visualize the response of a neuron for all settings $s=(a,\theta)$, we have associated the discrete valued parameter $a$ with a color and we have
represented the excursion of $\theta\mapsto\phi^{y}_{a}(\theta)$ by means of the 2D polar curve $\mathcal{C}_j$ which plots
$$\theta\mapsto \phi^{y}_{a}(\theta)(\cos(\theta), \sin(\theta))$$
using the color corresponding to $a$. Continuous lines correspond to positive values of $\phi^{y}_{a}(\theta)$ while dashed lines correspond to negative values.

\medskip\noindent From the examples of figure \ref{fig-cells1}, we can observe that when $a\in\{E, EH\}$, $\theta\mapsto\phi^{25}_{a}$ is negative and almost constant. In a similar way, we can observe that when $a\in\{H, EH\}$, $\theta\mapsto\phi^{28}_{a}$ is positive and almost constant.

\begin{figure}[ht]
	\begin{center}
		\includegraphics[scale=0.6]{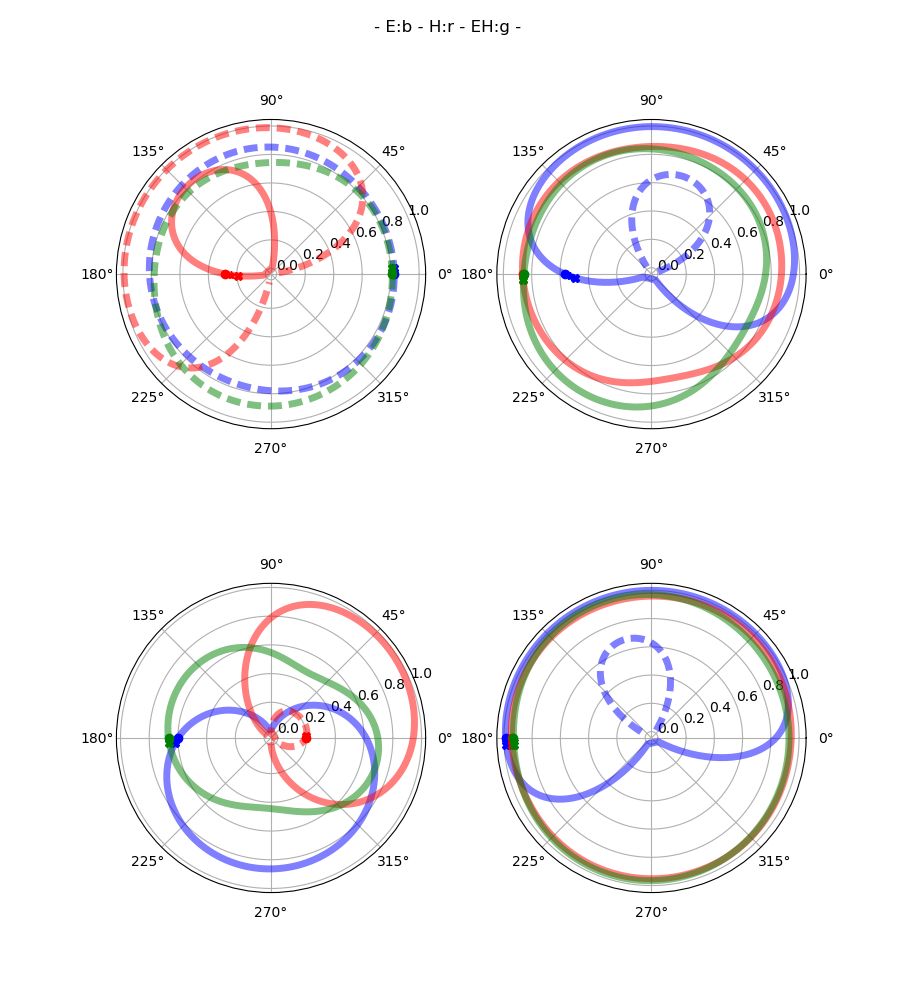}
		%\caption{$\theta\mapsto\phi^j_{a}(\cos(\theta), \sin(\theta))$ for $j=25,26,27,28$ and $a\in \cA_3$}
	\end{center}
\caption{First examples of neuron activities}
		\label{fig-cells1}
\end{figure}

%%%%%%%%%%%%%%%%%%%%%%%%%%%%%%%%%%%%%%%%%%%%%%%%%%%%%%%%%%%%%%%%%%%

This follows the model explained in \cite{nero-moscho-17,nero-moscho-18}:
\begin{enumerate}[label=\arabic*)]
	\item a first layer $L_1$ contains $55$ neurons of type $\mathsf{Meq}$;
	each one, say $x$, is represented by three $2\pi$-periodic functions $\varphi^{x}_a$ for $a=E,H$ or $EH$, with values in $[0,1]$.
	The value $\varphi^{x}_a(\theta)$ for $\theta \in [0,2\pi]$ represents the activity of the neuron $x$ when the movement is
	made in direction $\theta$ and for the condition $a$;
	\item a hidden layer $L_2$ is made by $50$ neurons with activity in $[-1,1]$, computed
	by a $L-NL$ transformation of the vector measuring the activity in $L_1$:
	\begin{equation}\label{lnlw}
		\psi^{y}=\tanh(\Sigma_x w^{y}_x\varphi^{x}).
	\end{equation}
	\item  a third and last layer is made by four neurons, $z_1, z_2, z_3, z_4$, with activity in $[-1,1]$, the two first ones correspond to
	the condition, the two other ones correspond to the angle.
\end{enumerate}

The coordinates $z_1,z_2$ of the conditions $E$, $H$, $EH$ respectively
correspond to the three vertices of an equilateral triangle in the square $[-1,1]^{2}$: $1=(1,0)$, $\omega=(-1/2,\sqrt{3}/2)$, $\omega^{2}=(-1/2,-\sqrt{3}/2)$. The coordinates $z_3,z_4$ correspond respectively to the cosine and sine of the angle $\theta$. In the functioning feed-forward network they are computed by the fully connected equation \ref{lnlw} from the activity in $L_2$.

The correspondence to be learned by the $NN$ is the natural one: in $L_3$ the description of an individual movement by $(a,\cos \theta, \sin \theta)$,
in entry $L_1$ the corresponding vector $X_1(a,\theta)$.\\

\begin{rmk}\label{rmk-encoding}
	\normalfont The choice of the four cells in $L_3$ is made for respecting at most the symmetries of the experiment.
	We have also tested a model which doesn't respect the symmetry between the three conditions $a$, replacing $z_1,z_2$ by only one neuron $z'_1$,
	taking its values in $[-1,1]$, with $-1$ for $E$, $0$ for $H$ and $1$ for $EH$. We will compare the results of this model  $z'$
	with the model $z$ (see figure \ref{fig-encoding}) in the following sections.
\end{rmk}
\begin{rmk*}
	\normalfont Each input is an angle $\theta$ and a condition $a$, but the neurons $x$	in the first layer don't register this pair, they react to it according to their receptive field, by taking a unique real value $\varphi_a^x(\theta)$. This is not so far from
	primary sensory reactions of schematized retinal cells to a colored
	flash, $\theta$ being the place in the visual plane where the
	flash appears, $a$ being the color
	($L$ for long, red, $M$ for medium, green, $S$ for short, blue), modulating the	reaction of the cell. At this stage, the two components (place and color) are intermingled, and the network has to detect (extract) the color only. Thus,	even if it was not the original motivation of our experiment, this is not very
	far from the usual exploitation of artificial neural networks. (In the visual system of primate, things are a bit different : one layer after the retina, in	the thalamus, most color neurons have a preference for three algebraic combinations of the pigments, $L +M + S$, $L-M$, $L+M-S$).
\end{rmk*}

\noindent The functions $\varphi^{x}_a$ are Von-Mises distributions densities (see Figure \ref{fig-VM}). The sampling for the $55$ cells is uniform in $\theta$,
and contains four sub-populations,
\begin{enumerate}
	\item similar preferred angle for $E,H$ and $EH$,
	\item orthogonal angles for $E$ and $H$ divided in $2.A$, resp. $2.B$ resp. $2.C$, where the preference of $EH$ is almost the same as $E$, {\em resp.} $H$, {\em resp.} another one.
\end{enumerate}

\noindent For comparison of the feed-forward element $F_w(X_1)$ with the truly expected $X_3$, we take the
Euclidian distance, or the Euclidian distance after dilatation of $z_1,z_2$ (resp. $z'_1$ in the asymmetric model mentioned in remark \ref{rmk-encoding}.

\subsection{Theoretical deduction of the movement from the first layer}

Note that the natural map $\Phi$ is from the elements $X_3$ in the disjoint of three circles $G_3=\vee_a S_a^{1}$ to the elements
$X_1$ in the hypercube $K_1=[0,1]^{55}$, then the image $K_3\subset K_1$ of this map contains the set for training, testing and generalizing together.
The network has to compute an inverse $\Phi^{-1}$ of the map $\Phi$ from $K_3$ to $G_3$.\\
We will write $A_3={E,H,EH}$ for the set of conditions. \\
\indent Of course, when functioning, the result of the feed-forward starting with a point $X_1=\Phi(a,\theta)$ does'nt give exactly a point
in $G_3$, it gives a point in the cube $I_4=[-1,1]^{4}$ (or $I_3$ for $z'$). Experiment show that the error is small: this point in $I_4$ (resp. $I_3$)
is very close from the point $(a,\cos \theta, \sin \theta)$.\\

\begin{prop}
	The map $\Phi$ is injective.
\end{prop}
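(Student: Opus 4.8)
I would prove injectivity of $\Phi$ by explicitly reconstructing the condition $a$ and then the angle $\theta$ from the activation vector $\Phi(a,\theta)$, exploiting two structural features of the construction: the three concentrations $\kappa(\cdot,E)=1.0$, $\kappa(\cdot,EH)=1.5$, $\kappa(\cdot,H)=2.0$ are pairwise distinct, and $C=\{\theta_1,\dots,\theta_{N_0}\}$ is a genuine equipartition of the circle, so that $\sum_{c\in C}e^{ic}=0$ (the centres are $N_0\ge 2$ equally spaced points, a rotated copy of the $N_0$-th roots of unity). Write $g_a(\psi)=\dfrac{\exp(\kappa_a\cos\psi)}{2\pi I_0(\kappa_a)}$, so that the $x$-th coordinate of $\Phi(a,\theta)$ is $\varphi^{x}_a(\theta)=g_a(\theta-\mu(x,a))$, and recall that for every $a$ the map $x\mapsto\mu(x,a)=\tau_a(x)$ is a permutation of $C$.

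\emph{Step 1: reading off $a$.} Take the sum of the logarithms of the $N_0$ coordinates of $\Phi(a,\theta)$:
\[
\sum_{x}\log\big(\varphi^{x}_{a}(\theta)\big)=\kappa_a\sum_{c\in C}\cos(\theta-c)-N_0\log\!\big(2\pi I_0(\kappa_a)\big)=-N_0\log\!\big(2\pi I_0(\kappa_a)\big),
\]
since $\sum_{c\in C}\cos(\theta-c)=\Re\big(e^{i\theta}\sum_{c\in C}e^{-ic}\big)=0$. Thus this quantity is independent of $\theta$ and is a function of $a$ alone; moreover $t\mapsto I_0(t)$ is strictly increasing on $(0,\infty)$ (its derivative is $I_1>0$), so the three values $-N_0\log(2\pi I_0(\kappa_a))$ for $a\in\{E,H,EH\}$ are pairwise distinct. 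Hence $\Phi(a,\theta)=\Phi(a',\theta')$ forces $\kappa_a=\kappa_{a'}$ and therefore $a=a'$.

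\emph{Step 2: reading off $\theta$.} Knowing $a$ we know $\kappa_a$ and $\tau_a$, and from each coordinate we recover $\cos(\theta-\mu(x,a))=\kappa_a^{-1}\big(\log\varphi^{x}_a(\theta)+\log(2\pi I_0(\kappa_a))\big)$, because $t\mapsto e^{\kappa_a t}$ is injective; equivalently $\cos(\theta-c)$ is known for every $c\in C$. Choosing two centres $c_1\neq c_2$ with $c_1-c_2\not\equiv 0\ (\mathrm{mod}\ \pi)$ — for instance consecutive centres, at distance $2\pi/N_0<\pi$ since $N_0=55$ — the two linear equations $\cos\theta\cos c_i+\sin\theta\sin c_i=\cos(\theta-c_i)$ have determinant $\sin(c_2-c_1)\neq 0$, so $(\cos\theta,\sin\theta)$, hence $\theta$, is determined. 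Therefore $\Phi(a,\theta)=\Phi(a,\theta')$ forces $\theta=\theta'$, and combined with Step 1 this gives injectivity of $\Phi$.

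The only genuinely non‑routine point is Step 1: one has to notice that the sum of logarithms collapses to a $\theta$‑independent quantity precisely because the preferred angles $\mu(\cdot,a)$ sweep out the full equipartition $C$, which kills the $\cos(\theta-c)$ terms, and then to use that the three concentrations are distinct and $I_0$ is strictly monotone. If one wished to avoid the equipartition identity, one could instead estimate $\max_x\varphi^{x}_a(\theta)$ in terms of $\kappa_a$ and the mesh $2\pi/N_0$, but the logarithm‑sum argument is exact and cleaner; the remaining ingredients (strict monotonicity of $t\mapsto e^{\kappa_a t}$ on $[-1,1]$ and solving a $2\times2$ linear system) are immediate.
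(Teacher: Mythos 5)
Your proof is correct, but it inverts $\Phi$ along a genuinely different route from the paper. The paper first fixes the condition and argues that $\theta$ is pinned down because a single Von Mises value determines $\theta$ only up to reflection about the preferred angle, an ambiguity killed by using two neurons with distinct preferred angles; it then separates different conditions by appealing to the sub-population $2.C$, whose preferred-angle assignments differ between conditions, so the corresponding sub-vectors cannot coincide (this last point is asserted from the design of the relative distributions rather than computed). You instead recover $a$ \emph{first}, via the exact invariant $\sum_x\log\varphi^x_a(\theta)=-N_0\log\bigl(2\pi I_0(\kappa_a)\bigr)$, which is legitimate because $\tau_a$ is declared to be a permutation of the equipartition $C$ (so the $\cos(\theta-c)$ terms cancel) and the three concentrations $1.0,1.5,2.0$ are pairwise distinct with $I_0$ strictly increasing; you then recover $\theta$ exactly by solving a nondegenerate $2\times 2$ system in $(\cos\theta,\sin\theta)$. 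The trade-off is instructive: your argument is fully explicit, needs nothing about the family $2.C$, and in fact furnishes the ``practical algorithm for computing the inverse'' that the paper says its own proof does not provide; but it leans entirely on the numerical accident that the $\kappa_a$ are distinct, whereas the paper's argument would survive equal concentrations as long as the preferred-angle permutations differ across conditions. Both are valid under the stated experimental rules; it is worth flagging in your write-up that Step 1 would fail if the concentrations were chosen equal, in which case one must fall back on the paper's mechanism.
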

\begin{proof}
	let $X_3=(a,\theta)$ be given in $G_3$, the $55$ components of vector $\Phi(X_3)$ are the numbers
	$\varphi^{x}_a(\theta), 1\leq x\leq 55$. The form of each function $\varphi^{x}_a$ implies that each of its value determines
	$\theta$ up to the symmetry with respect to the angle $\theta^{x}_a$ giving the maximum of $\varphi^{x}_a$. Consequently, as soon as we consider two neurons
	which have different values of $\theta^{x}_a$, the ambiguity is suppressed. We now turn to the condition $a$, and consider a different
	condition $b$; the last of the four families of neurons, i. e. $2.C$, implies that the two vectors $\varphi^{x}_a(\theta), x\in L_1(2.C)$
	and $\varphi^{x}_b(\theta), x\in L_1(2.C)$ are different.
\end{proof}

This proposition doesn't give a very practical algorithm for computing the inverse. We develop now such an algorithm.\\

A direct observation of the neurons in layer $L_1$ explains why they are able to construct $\Phi^{-1}$, at least within a good
approximation.\\
\indent The main observation is the following one: when an angle $\theta$ is given, the population of neurons generates a correspondence between activation value, say $1$ for $\varphi> 1/2$
versus $0$ for $\varphi< 1/2$ and a subset of $A_3={E,H,EH}$ and its complement in $A_3$. We call such a subset a \emph{simple proposition}. We will
meet more elaborate propositions in section \ref{sec-predicate}.\\
\begin{figure}[ht]
	\begin{center}
		\includegraphics[scale=0.6]{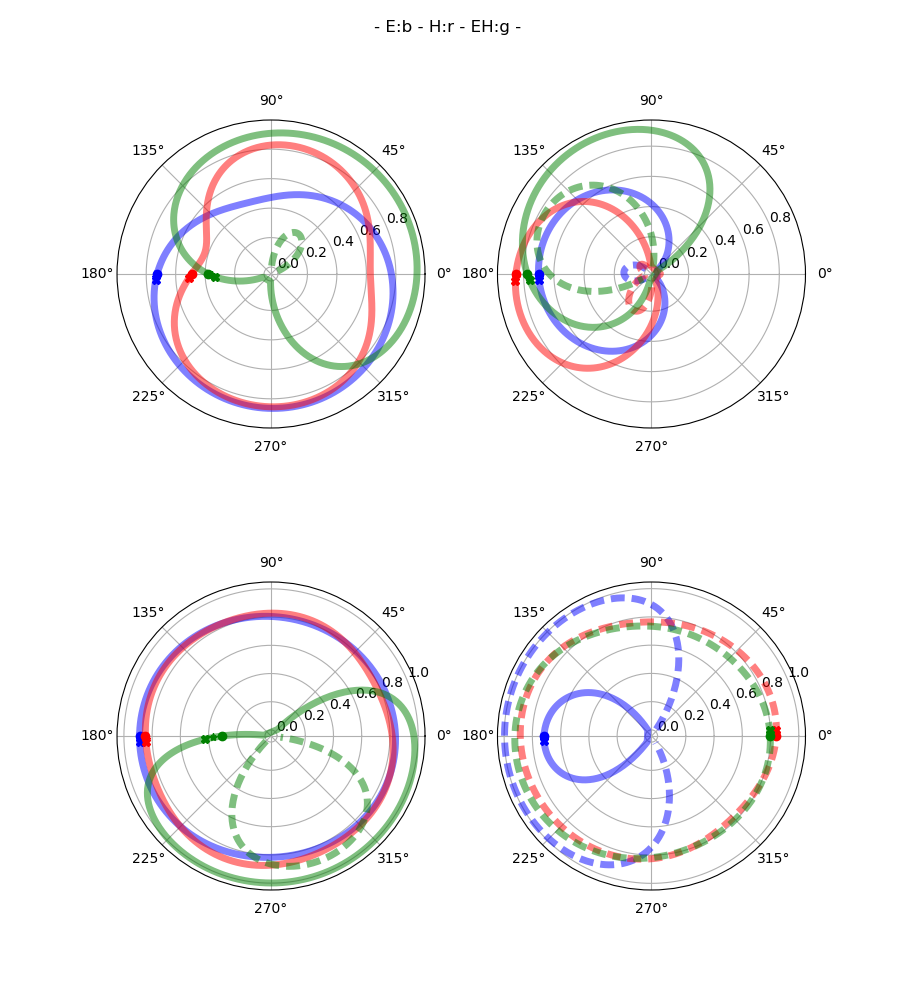}
		\caption{$\theta\mapsto\phi^j_{a}(\cos(\theta), \sin(\theta))$ for $j=17,18,19,20$ and $a\in A_3$}
	\end{center}
\end{figure}
For instance, looking at cell $17$ for $\theta=\pi/2$, we see that the cell is active in conditions $E,H$ but not $EH$, then proposition
$P$ is $E \vee H$, versus its complement $\neg P=EH$.\\
If an ambiguous situation happens, for instance near the value $1/2$ of $\varphi_a^{x}$, we can forget this cell $x$.
However experiment shows that this scarcely happens.\\
For each $\theta$, we check that the set of propositions $P(x)$ is
sufficiently rich to deduce the condition from the vector $\varphi^{x}_a(\theta), x\in L_1(2)$.\\
Note that with the population
of type $1$ only, this would not have happened, the three conditions being non-separated.
But the population $L_1(1)$ is sufficiently rich to determine the angle $\theta$ with a good approximation.\\
These two assertions have to be verified, but they correspond to our choice of distributions of densities $\varphi^{x}_a$.\\

\indent Now a possible logical algorithm works as follows:
\begin{enumerate}[label=\arabic*)]
	\item determine $\theta$ from a particular vector $X_1(17)$ of activity in $L_1(17)$, for instance by linear voting \cite{georgopoulos-1986}: take the sum over $x$
	of the cosine (resp. sine) of its preferred angle (here, in $L_1(17)$, it is the same for the three conditions),
	weighted (i.e. multiplied) by the observed activity $\varphi^{x}$, then take the arccosine (resp. the arcsine).
	\item From this approximate value, deduce the condition, as explained before, by logical computations, either the simplest
	one, either another vote: the number of times $a$ appears in the list of propositions corresponding to the vector $X_1(17)$.
	\item From this condition, use the full population of curves $\varphi^{x}_a$, to get a more accurate value of the angle $\theta$.
	\item Check that this gives the same condition as in step $2$.
\end{enumerate}

This mixture of usual decoding and logics can be seen as a sort of \emph{logical conditioning}, the conditioning being done here on
a continuous parameter like $\theta$.\\

Of course it is not the way this simple DNN has worked. But we will show now that he is probably right, because the hidden
layer contains much more interesting Logical Cells than the first layer, as we will show now.

\subsection{Characteristics of the hidden layer}

In the hidden layer we compute the activity of each cell, a real number between $0$ and $1$, again denoted $\varphi_a^{x}$, corresponding to the
movement directed to $x$ and effected by $a$.\\
\noindent
\begin{figure}[ht]
	\begin{subfigure}{.48\textwidth}
		\centering
		\includegraphics[width=.75\linewidth]{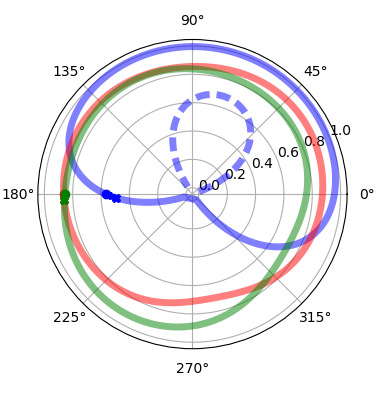}
		\caption{Hidden Layer Cell $26$}
		\label{subf-cell26}
	\end{subfigure}
	\hfill
	\begin{subfigure}{.48\textwidth}
		\centering
		\includegraphics[width=.75\linewidth]{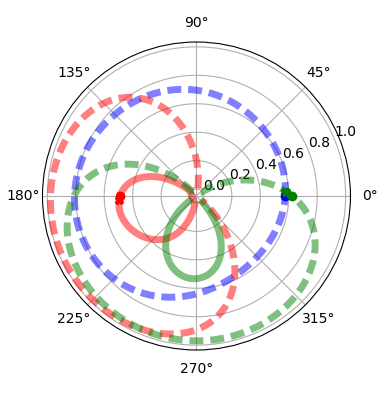}
		\caption{Hidden Layer Cell $30$}
		\label{subf-cell30}
	\end{subfigure}
	\hfill
	\begin{subfigure}{.48\textwidth}
		\centering
		\includegraphics[width=.75\linewidth]{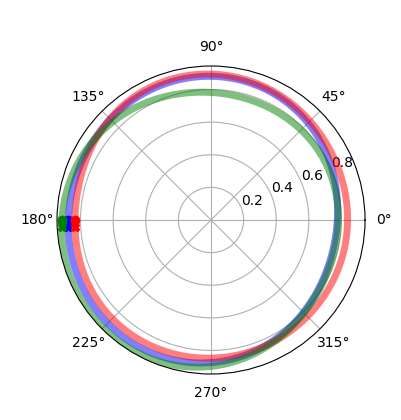}
		\caption{Hidden Layer Cell $11$}
		\label{subf-cell11}
	\end{subfigure}
	\hfill
	\begin{subfigure}{.48\textwidth}
		\centering
		\includegraphics[width=.75\linewidth]{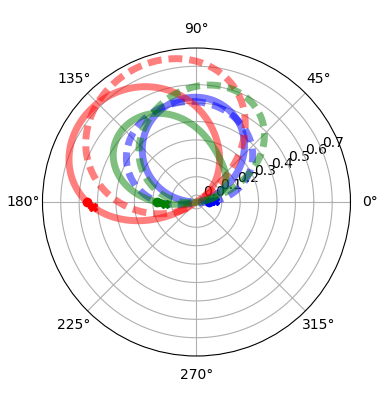}
		\caption{Hidden Layer Cell $16$}
		\label{subf-cell16}
	\end{subfigure}
	\caption{$\theta\mapsto\phi^{j}_{a}(\cos(\theta), \sin(\theta))$ for $j\in \{9,10,11,16\}$}
	\label{fig-hidden}
\end{figure}
The main observation coming from what we see in layer $L_2$, after supervised learning, is that,
in many cells, two of the curves $\varphi_a^{x}$ are almost saturated in $+1$ or $-1$, the same value for both
of them, but the third curve shows positive and negative values. Precisely $35$ cells out of $50$ have this property, that
two of the graphs stay positive or negative, the same for both, and the third one no (cf. Figure \ref{subf-cell26}). In all of these cells, the last graph had exactly two zeros, defining two segments of the circle.\\
\indent Five other cells had only one graph which doesn't change of sign. (Figure \ref{subf-cell30})\\
\indent Five other cells had the three graphs of the same sign. (Figure \ref{subf-cell11})\\
\indent And five cells had the three graphs which changed of signs. (Figure \ref{subf-cell16})\\
The enumeration of the $35$ cells gave $10$ cells of type $E$ (i.e. the transgressing curve is $\varphi_E$), $12$ of type $H$,  and
$13$ cells of type $EH$.\\
The enumeration of the $5$ cells gave four of type $E$, (i.e. the only graph which doesn't change of sign is $\varphi_E$), and one of type $H$.\\
The fundamentally new fact, with respect to $L_1$, is the possibility to use the discretization by the sign, to get logical
propositions. For instance, take one of the $35$, say of type $E$, with the two graphs $\varphi_H$ and $\varphi_{EH}$ positive,
then, if the cell fire negatively, it tells "$E$ is true".\\
For a cell with only one positive graph, say $\varphi_E$, if the cell fire negatively, it tells: $E$ is false, i.e. $H\vee EH$ is true.\\
When the three graphs have the same sign, the cell gives no information at all, its firing seems to be independent of the condition
$E$, $H$ or $EH$.\\
And for another reason, when all the graphs change their sign, the cell gives also no information, because we are not able to extract an information from the activity of the cells, knowing its receptive field.\\

However, from that, the hidden layer is able to recover the condition from its activity and the above logical formulas. Why? This is because, for each condition, the segments in the circle that are informative cover nicely the circle. This was checked by inspection, see Figure \ref{fig-hidden}, for $H$.\\
By definition, a segment is informative for a cell $i$ of type $a$, if it corresponds to the sign which gives the assertion "$a$ is true".\\
Then, assume for instance that the condition is $H$, and the movement at the input is directed to a point $x$, belonging to the informative
segment of a cell $i$ of type $a$, with a sign $+$, then the cell fires positively, consequently the cell tells "$a$ is true".\\
We see no contradiction in the reconstruction, which is not surprising, by construction of the curves $\varphi_a^{x}$, and the preceding definitions.\\

The existence of curves which almost saturate independently of the angle, giving logical propositions, confirmed the observations of Nemyriolitis and Moschovakis \cite{nero-moscho-18}.\\

The appearance of this quantization, and the easy deduction of the classification that it allows, gives us hope that with more layers, more logical functioning could appear.
And this is true.\\

\subsection{More and more layers}

We added between $L_2$ and the output a second hidden layer $L_3$, fully connected with $L_2$, containing $25$ cells.\\
The inputs were the same. The network learned very well by back-propagation, in fact it learned much better and easier than with only one hidden layer.\\
\indent Now the main observation coming from the contemplation of the last hidden layer $L_3$ was the appearance of neurons
where one or two (or three) of the graphs almost saturated at positive or negative value, but now, contrarily to what happened in $L_2$ before,
these values were sometimes different; which changes radically the things as we will see.\\
\noindent
\begin{figure}[ht]
	\begin{subfigure}{.48\textwidth}
		\centering
		\includegraphics[width=.75\linewidth]{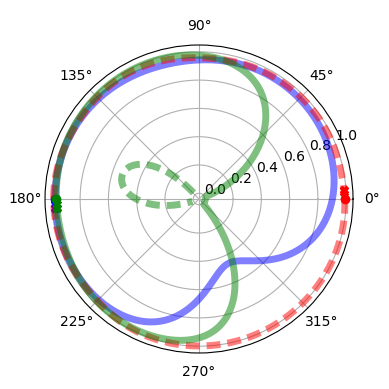}
		\caption{Layer $L_3$ Cell $5$}
		\label{subf-cell-ML-5}
	\end{subfigure}
	\hfill
	\begin{subfigure}{.48\textwidth}
		\centering
		\includegraphics[width=.75\linewidth]{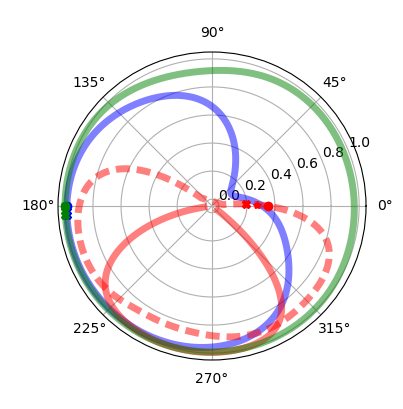}
		\caption{Layer $L_3$ Cell $3$}
		\label{subf-cell-ML-3}
	\end{subfigure}
	\hfill
	\begin{subfigure}{.48\textwidth}
		\centering
		\includegraphics[width=.75\linewidth]{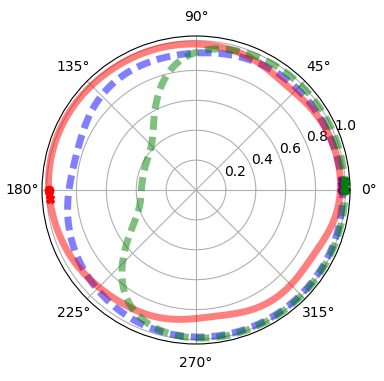}
		\caption{Layer $L_3$ Cell $21$}
		\label{subf-cell-ML-21}
	\end{subfigure}
	\hfill
	\begin{subfigure}{.48\textwidth}
		\centering
		\includegraphics[width=.75\linewidth]{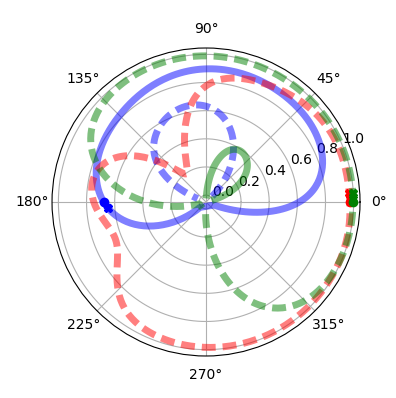}
		\caption{Layer $L_3$ Cell $1$}
		\label{subf-cell-ML-1}
	\end{subfigure}
	\caption{Multi-Layer $L_3$ with $\theta\mapsto\phi^{j}_{a}(\cos(\theta), \sin(\theta))$ for $j\in \{5,3,21,1\}$}
	\label{fig-L3}
\end{figure}

All the (apparently) stupid cells with three times the same sign disappeared. Eight cells had two saturated curves with opposite signs (see Figure \ref{subf-cell-ML-5}).
Seven cells had two saturated curves with the same sign, as in the $L_2$ before (see Figure \ref{subf-cell-ML-3}). One cell had three almost saturated curves, not all of the same sign (see Figure \ref{subf-cell-ML-21}).

Seven cells had only one saturated curve  (see Figure \ref{subf-cell-ML-1}). Eventually,  two cells had behaviors difficult to interpret, crossing all the curves.

Another noticeable thing with respect to the preceding simpler network, was that five out of seven cells with two saturations
of the same sign had an informative segment covering half of the circle. This
allowed these $7$ cells to do almost as well as the preceding  $35$ informative cells, in applying the argument of reconstruction by coverings.
(More precisely, seven of the $25$, versus $35$ cells among the
$50$ in $L_2$ before.)\\

All that gives $23$ logical cells, that are cells whose activity can be translated in a logical proposition.\\
Each of the above type of cells gives a different structure of implication:\\

\noindent \textbf{Examples.} 
($E$ is blue, $H$ is red and $EH$ is green)
\begin{itemize}
	\item Cell 1: $H \Rightarrow -1$, or equivalently $+1 \Rightarrow E\vee EH$. The information is:
	independently of the angle, if the activity is positive, the condition is either $E$ either $EH$. (Figure \ref{subf-cell-ML-1})
	\item Cell 3: $E$ and $EH$ stay strongly positive, but $H$ presents two signs, then negative implies the condition $H$, which we note
	$-1 \Rightarrow H$. The information of this cell is: independently
	of the angle, if the activity is negative, the condition is $H$ (Figure \ref{subf-cell-ML-3}). 
	\item Cell 5: $E \Rightarrow 1$, and $H \Rightarrow -1$, therefore $< 0 \Rightarrow H\vee EH$ and $> 0 \Rightarrow E\vee EH$ (Figure \ref{subf-cell-ML-5}). 
	\item Cell 21: $H\Rightarrow +1$, $E\Rightarrow -1$ and $EH \Rightarrow -1$, then $+1 \Rightarrow H$, $ -1 \Rightarrow E \vee EH$ (Figure \ref{subf-cell-ML-21}). 
\end{itemize}
\begin{rmk*}
	\normalfont In this analysis, as in the following arguments, we use the fact that$P\Rightarrow Q$ is equivalent to $\lnot Q\Rightarrow\lnot P$, and we will use many times that $E,H$ and $EH$ are exclusive. It is legitimate to question these assumptions : how can	the network be aware of the Boolean axioms in logic? The answer is
	that it has learned these elements during the learning process, because the
	asked output is $E,H$ and $EH$ and the metric sanctions any mixture of the
	conditions.
\end{rmk*}

\noindent Let us give a first example of possible fully conclusive reasoning:

\begin{prop}\label{prop:complete}
	Suppose we have three cells of the type of cell 5 of figure \ref{subf-cell-ML-5}, involving symmetrically all possible pairs of arguments, for instance,\\
	cell $I$: $1\Rightarrow E\vee H, -1\Rightarrow E\vee EH$;\\
	cell $II$: $1\Rightarrow E\vee EH, -1\Rightarrow H\vee EH$;\\
	cell $III$: $1\Rightarrow E\vee H,-1\Rightarrow H\vee EH$.\\
	Then the condition follows from the three activities, as soon as they are non-contradictory.
\end{prop}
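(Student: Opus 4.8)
The plan is to translate each cell's firing behaviour into an \emph{elimination rule}, and then to observe that the three rules together have the combinatorial structure of an orientation of a triangle, for which ``non-contradictory'' coincides with ``acyclic''.

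First I would rewrite the three type-$5$ cells as follows. Using the contrapositive of each stated implication together with the mutual exclusivity of $E$, $H$, $EH$ (which, as the remark preceding the proposition notes, the network has effectively learned, so that $\lnot(E\vee H)=EH$, etc.), each cell certifies, according to the sign of its activity, the falsity of exactly one condition: cell $I$ eliminates $EH$ when it fires positively and $H$ when it fires negatively; cell $II$ eliminates $H$ when positive and $E$ when negative; cell $III$ eliminates $EH$ when positive and $E$ when negative. In particular each of the three pairs $\{E,H\}$, $\{E,EH\}$, $\{H,EH\}$ is ``resolved'' by exactly one of the cells, the third (free) condition of that cell being left unconstrained --- this is the meaning of the hypothesis that the cells involve symmetrically all possible pairs.

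Next, for a fixed input the three activities have definite signs, so we obtain three eliminated conditions (with multiplicity). I would encode this as an orientation of the triangle whose vertices are $E$, $H$, $EH$ and whose three edges are the three cells, each edge directed towards the endpoint it does \emph{not} eliminate. Two facts then finish the argument. First, \emph{soundness}: the genuine condition is never eliminated by any cell --- the implications attached to the cells were read off their saturated branches and verified --- so both edges incident to the true vertex point towards it, i.e.\ it is a sink of the orientation. Second, \emph{combinatorics}: an orientation of a triangle is acyclic iff it has a sink, in which case the sink is unique, the only non-acyclic orientations being the two directed $3$-cycles. By definition the activities are non-contradictory exactly when some condition survives all three eliminations, i.e.\ when some vertex is a sink, i.e.\ when the orientation is acyclic; in that case the sink is unique and, by soundness, it is the true condition, which is therefore recovered from the three signs.

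Equivalently and more pedestrianly, one checks the $2^{3}=8$ sign patterns directly: $(+,+,+)$ and $(-,+,+)$ leave only $E$, the patterns $(+,-,+)$ and $(+,-,-)$ leave only $H$, the patterns $(-,+,-)$ and $(-,-,-)$ leave only $EH$, while the two remaining patterns $(+,+,-)$ and $(-,-,+)$ eliminate all three conditions and are precisely the contradictory cases excluded by hypothesis. There is no real obstacle here; the points that need care are the bookkeeping of which condition each sign eliminates (cleanest via the triangle picture), the soundness step, and the harmless measure-zero caveat that the free curve of a type-$5$ cell vanishes at two angles, where that cell returns no sign --- but at such an angle the other two cells, whose pairs both contain the true condition, already determine it, in accordance with the earlier convention of discarding a cell in an ambiguous regime.
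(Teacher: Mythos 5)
Your proof is correct and, at its core, is the same argument as the paper's: the paper's proof is exactly the exhaustive table of the eight sign patterns that you give in your final paragraph, and your entries agree with it (two patterns for each of $E$, $H$, $EH$, and $(+,+,-)$, $(-,-,+)$ contradictory). The elimination/triangle-orientation framing is a pleasant conceptual repackaging of that same check rather than a different proof.
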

\begin{proof}
	the following implications are easily verified:
	\begin{multline}
		(1,1,1)\Rightarrow E,\quad (1,1,-1)\Rightarrow \bot, \quad (1,-1,1)\Rightarrow H,\\
		(1,-1,-1)\Rightarrow H,\quad
		(-1,1,1)\Rightarrow E,\quad (-1,1,-1) \Rightarrow EH, \\
		(-1,-1,1) \Rightarrow \bot, \quad (-1,-1,-1)\Rightarrow EH.
	\end{multline}
\end{proof}
\begin{prop}\label{prop:good-enough}
	For an input corresponding to the discrete condition $a$, the above three cells together can
	reconstruct the answer $a$ by pure logical deduction.
\end{prop}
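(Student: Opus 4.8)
The plan is to reduce Proposition \ref{prop:good-enough} to the case analysis already performed in Proposition \ref{prop:complete}: I want to show that an input which genuinely corresponds to a condition $a$ always produces a sign triple that is \emph{non-contradictory}, and that this triple is decoded back to $a$. The first step is to rewrite the two implications defining each of the three cells in contrapositive form, using that $E,H,EH$ are mutually exclusive and exhaustive, so that $\lnot(E\vee H)=EH$, $\lnot(E\vee EH)=H$, $\lnot(H\vee EH)=E$. This gives: for cell $I$, $EH\Rightarrow -1$ and $H\Rightarrow +1$; for cell $II$, $H\Rightarrow -1$ and $E\Rightarrow +1$; for cell $III$, $EH\Rightarrow -1$ and $E\Rightarrow +1$. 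Equivalently, cell $I$ is saturated on the pair $\{H,EH\}$, cell $II$ on $\{E,H\}$, cell $III$ on $\{E,EH\}$, so the three cells saturate symmetrically on the three pairs, and in each cell exactly one argument (the "transgressing" one) is left free: $E$ for cell $I$, $EH$ for cell $II$, $H$ for cell $III$.

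Next I would run through the three conditions. If the input corresponds to $E$, then by the contrapositives above cells $II$ and $III$ both fire $+1$, while cell $I$ may fire either sign; so the triple is $(\ast,1,1)$. If the input corresponds to $H$, cell $I$ fires $+1$, cell $II$ fires $-1$, and cell $III$ is free, giving $(1,-1,\ast)$. If the input corresponds to $EH$, cells $I$ and $III$ both fire $-1$ and cell $II$ is free, giving $(-1,\ast,-1)$. The final step is to match these partially determined triples against the table of Proposition \ref{prop:complete}: both completions $(1,1,1)$ and $(-1,1,1)$ are decoded to $E$; both $(1,-1,1)$ and $(1,-1,-1)$ to $H$; both $(-1,1,-1)$ and $(-1,-1,-1)$ to $EH$. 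None of these six patterns is one of the two contradictory ones $(1,1,-1)$ or $(-1,-1,1)$, so the non-contradiction hypothesis of Proposition \ref{prop:complete} is automatically satisfied, and its deduction returns exactly the condition $a$ we started from.

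I do not expect a genuine obstacle here, since the argument is a finite check. The only points requiring a little care are: (i) verifying that the value of the free (transgressing) curve is genuinely irrelevant, i.e. that the two table entries compatible with each partial triple coincide, which is precisely what the matching above records; and (ii) noting explicitly that "non-contradictory" is not an extra assumption but a consequence of the input being a bona fide condition rather than an arbitrary activation vector — the contradictory patterns $(1,1,-1)$ and $(-1,-1,1)$ are exactly the ones excluded by the contrapositive saturation constraints, so they simply cannot arise in this situation.
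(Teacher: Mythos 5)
Your proof is correct and follows essentially the same route as the paper's: contraposition forces the two cells that saturate on the pair containing $a$ to fire at their determined values, and the conjunction of the two asserted disjunctions recovers $a$ via the table of Proposition~\ref{prop:complete}. You simply check all three conditions explicitly (and verify that the contradictory patterns $(1,1,-1)$ and $(-1,-1,1)$ cannot arise) where the paper invokes the $\mathfrak{S}_3$ symmetry to reduce to the case $a=E$; this is a harmless and slightly more self-contained variant of the same argument.
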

\begin{proof}
	From the symmetry under $\mathfrak{S}_3$ (group of permutations of $3$ objects),
	we can assume that the condition is $E$, then from the table, by contraposition, the cell $II$ fires at $+1$,
	and the cell $III$ also. From that, reading the table in the written direction, we get $\vdash E\vee EH$ and $\vdash E\vee H$,
	then $E$ is asserted to be true.
\end{proof}

\noindent Remarkably, in $L_3$ there existed cells for each of the three types $I,II,III$: $23$ and $25$ are of type $I$,
$5$,$6$,$10$ of type $II$ and $14$, $16$, $18$ of type $III$.\\
Thus the third layer can decide very easily and by pure logical reasoning what is the true condition.\\

\noindent We say that a set of cells which satisfies the result of proposition \ref{prop:complete} is \emph{complete}. And we say that
a set of cells which satisfies proposition \ref{prop:good-enough} is \emph{good enough}. 
\begin{rmk*}
	\normalfont It can happen that a triple has one of these properties without having the other one. Examples are given below.
\end{rmk*}
A set which is both complete and good enough is said to
be \emph{efficient}. \\
To be \emph{conclusive} for a group of logical cells \emph{a priori} depends on its possible activities, and then, on the input it receives.
Efficient always means conclusive for a given set of inputs, and reconstructing the right answer for the required objectives.\\
These  notions are more useful when every set of cells containing a complete (resp. good enough, resp. efficient) set has the
same property. This requirement is equivalent to the absence of contradiction in the propositions coming from real data in input.
A fundamental experimental fact that we observed in this study, is this absence of contradiction
in real data. However a contradiction could \emph{a priori} could happen in "generalization" data, if the new data generate
saturated answers violating the logic, but what we observed in all the unadapted data for generalization,
was more the vanishing of apparent logical structures, for instance no saturation at all.\\

\noindent Other types of cell could be like $21$:\\
cell $IV$: $1\Rightarrow E, -1\Rightarrow H\vee EH$;\\
cell $V$; $1\Rightarrow EH, -1 \Rightarrow E \vee H$.\\

\begin{prop}\label{prop:triple}
	The set of cells $\{I, IV, V\}$ and the set of cells $\{I, II, V\}$ are complete sets, but they are not good enough. However, taken all together, $\{I,II,IV,V\}$ form a set which is good enough, then is efficient.
\end{prop}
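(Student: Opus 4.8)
The plan is to reduce the whole statement to a finite case check over the $\pm1$ activity vectors of the cells, using only that $E,H,EH$ are mutually exclusive and jointly exhaustive together with the contrapositive rule ``$P\Rightarrow Q$ iff $\lnot Q\Rightarrow\lnot P$'', exactly in the style of the proofs of Propositions \ref{prop:complete} and \ref{prop:good-enough}.

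First I would record, for each cell type $I,II,IV,V$, the dictionary between an activity value $\epsilon$ and the subset of $\{E,H,EH\}$ compatible with $\epsilon$, and dually, for a condition $a$, the activity value(s) that $a$ can produce. A triple $\{X,Y,Z\}$ is then \emph{complete} precisely when, running over the eight vectors $(\epsilon_X,\epsilon_Y,\epsilon_Z)$, the intersection of the three compatible sets is always either empty (the $\bot$ case) or a singleton; and it is \emph{good enough} precisely when, for each condition $a$, every activity vector that $a$ can actually produce has intersected compatible set equal to $\{a\}$. For $\{I,IV,V\}$ and for $\{I,II,V\}$ the first check is a short table reproducing the style of the proof of Proposition \ref{prop:complete}, which establishes completeness.

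To see that $\{I,IV,V\}$ (resp. $\{I,II,V\}$) is \emph{not} good enough, I would exhibit one condition $a$ together with one activity vector attainable from an input of condition $a$ whose intersected compatible set strictly contains $\{a\}$ --- equivalently, two distinct conditions that produce the same attainable activity vector for that triple. The delicate point is that this has to be an activity vector the triple genuinely realizes given the receptive fields of the three cell types (so that it is not discarded as ambiguous), not an arbitrary vertex of the cube; isolating such a vector, and checking it really arises, is the main obstacle.

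Finally, for $\{I,II,IV,V\}$ I would prove good enough condition by condition, using the $\mathfrak{S}_3$-symmetry of the construction to reduce to one representative condition and then listing which of the four cells respond determinately, what each such response implies, and checking that the conjunction collapses to that single condition --- the role of the fourth cell being exactly to supply the separation missing from each triple. Then $\{I,II,IV,V\}$ is also complete, since it contains the complete set $\{I,IV,V\}$ and, by the observed absence of contradiction in real data, any superset of a complete set is complete; being both complete and good enough, it is efficient.
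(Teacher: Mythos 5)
Your completeness tables and your closing step (a superset of a complete set is complete, granted the observed absence of contradictions in real data) match the paper. The genuine gap is in your formalization of \emph{good enough}. You define it as: every activity vector that a condition $a$ can actually produce has intersected compatible set equal to $\{a\}$, equivalently no two conditions realize the same attainable vector. But for a \emph{complete} triple this holds automatically: any vector realized by $a$ has $a$ in its compatible set, hence a nonempty compatible set, hence (by completeness) the singleton $\{a\}$. Under your criterion ``complete but not good enough'' is therefore impossible, and the counterexample you plan to exhibit cannot exist. The paper's notion is different and directional: for each condition $a$ one first determines, by contraposition, which cells' activities are \emph{forced} by $a$ (the cells having a saturated curve for $a$), and then asks whether the written-direction propositions attached to those forced activities alone entail $a$. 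Cells whose activity under $a$ is not predictable in advance (e.g.\ cells $IV$ and $V$ under $H$) contribute nothing to the reconstruction, even though whatever value they happen to take would, combined with completeness, pin down the condition a posteriori. This is exactly how $\{I,IV,V\}$ fails for $H$: only cell $I$ is forced, yielding only $E\vee H$. (For $\{I,II,V\}$ the paper again argues with $H$, attributing only $H\vee EH$ to that group; under the forced-activity reading the cleanest failing condition is $E$, where only cell $II$ is forced, yielding only $E\vee EH$.) So the ``delicate point'' is not, as you suggest, which activity vectors are realizable, but which coordinates of the activity vector are predictable from the condition.

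A second, smaller problem: you cannot reduce the good-enough check for $\{I,II,IV,V\}$ to one representative condition by $\mathfrak{S}_3$-symmetry, because this four-cell set is not $\mathfrak{S}_3$-invariant (the crossed pair $\{I,II\}$ lacks the third crossed type, and the pair $\{IV,V\}$ lacks the cell preferring $H$). Different sub-pairs do the work for different conditions --- $\{I,II\}$ for $H$, $\{II,IV\}$ for $E$, $\{I,V\}$ for $EH$ --- so the three conditions must be checked separately, which is what the paper does.
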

\begin{proof}
	For the first triple:
	\begin{multline}
		(1,1,1)\Rightarrow \bot,\quad (1,1,-1)\Rightarrow E, \quad (1,-1,1)\Rightarrow \bot,\\
		(1,-1,-1)\Rightarrow H,\quad
		(-1,1,1)\Rightarrow \bot,\quad (-1,1,-1) \Rightarrow E, \\
		(-1,-1,1) \Rightarrow EH, \quad (-1,-1,-1)\Rightarrow \bot.
	\end{multline}\\
	\noindent For the second one:
	\begin{multline}
		(1,1,1)\Rightarrow EH,\quad (1,1,-1)\Rightarrow E, \quad (1,-1,1)\Rightarrow \bot,\\
		(1,-1,-1)\Rightarrow H,\quad
		(-1,1,1)\Rightarrow EH,\quad (-1,1,-1) \Rightarrow E, \\
		(-1,-1,1) \Rightarrow EH, \quad (-1,-1,-1)\Rightarrow \bot.
	\end{multline}\\
	Now consider an input of type $H$, nobody can tell what will be the predicted activity in $IV$ or $V$. But in $I$, we know that it will be $+1$, and in $II$, we know it will be $-1$, then in the first group we can only conclude $\vdash H\vee E$
	and in the second group $\vdash H\vee EH$.\\
	To verify the efficiency of the union of the groups, we have to look at $E$ and $EH$.\\
	In the case of $E$, cells $II$ and $IV$ express $+1$, then they assert respectively the truth of $E\vee EH$ and $E$,
	which gives $E$.\\
	In the case of $EH$, the cell $I$ tells $-1$, and the cell $V$ tells $+1$, then they respectively conjecture $E\vee EH$ and
	$EH$, thus together they tell that $EH$ is true.
\end{proof}

From the population of receptive fields in the layer $L_3$, we see that the network hesitates between two strategies, one is
a mixture of geography and logic, like the hidden layer $L_2$ before, with conditioning by angular regions, and one purely logical
with efficient groups of cells, that do not look at angles anymore. Of course this could be helpful to develop two possibly cooperating
strategies, but we asked us if a growing complexity will induce a choice or maintain the two ways of reasoning.\\

\indent The result is very instructive: if we increase the number of neurons in $L_3$ (say $50$ instead of $25$), the network regresses to the non-purely logical strategy it adopted with only one hidden layer, but if we increase the number of hidden layers (we introduce a third deeper hidden layer $L_4$ with $25$ neurons), the network totally forgets the primitive
(or initial) strategy, and develops further the logic, it continues using the efficient groups of cells just described above, and it invents new efficient
triples, more directly conclusive, that it was apparently not able to form before, when it had only two hidden layers.\\

With three hidden layers, one $L_2$ with $50$ neurons, and the two deeper ones $L_3$, $L_4$ with $25$ neurons, we found the following innovative composition: $11$ cells of crossed type, like $+1 \rightarrow E\vee H, -1\rightarrow E\vee EH$ (in fact, $1$ as this one, i.e.
like the above cell $I$, $5$ like the cell $II$, preferring $EH$, $5$ like the cell $III$, preferring $H$, which is not totally optimal, but complete and good
enough); $7$ cells of  the fully saturated type (without any stupid one with all saturations on the same sign), and here with optimal distribution ($2$ of the type
$E$ (i.e. $+\pm 1\rightarrow E, \mp 1\rightarrow H\vee EH$), $3$ of the type $H$ (i.e. $\pm 1\rightarrow H, \mp 1\rightarrow E\vee EH$) and $2$
of the type $EH$ (i.e. $\pm 1\rightarrow EH, \mp 1\rightarrow H\vee E$). Six cells were less informative, like $+1\rightarrow E\vee EH$, and a last one
was obscure. (Cf. Figure \ref{fig:ML3}, for a sample in $L_3$.) Remarkably, no cell corresponds to the primitive strategy, mixing geography and logic.\\
\noindent
\begin{figure}[ht]
	\begin{subfigure}{.42\textwidth}
		\centering
		\includegraphics[width=.7\linewidth]{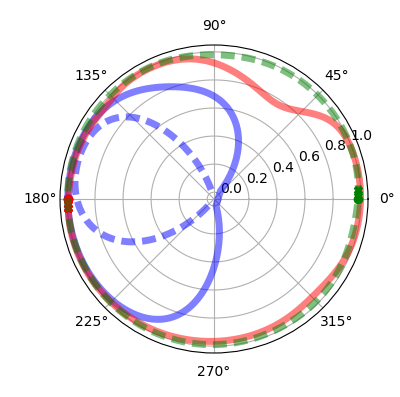}
		\caption{Layer $L_4$ Cell $3$}
		\label{subf-cell-ML3-3}
	\end{subfigure}
	\hfill
	\begin{subfigure}{.42\textwidth}
		\centering
		\includegraphics[width=.7\linewidth]{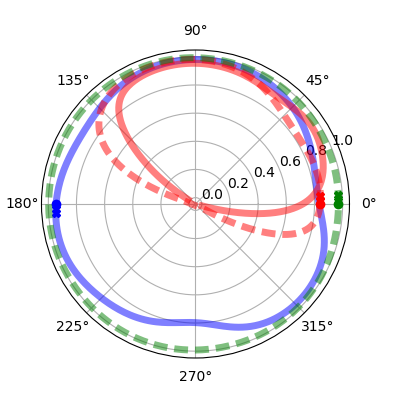}
		\caption{Layer $L_4$ Cell $6$}
		\label{subf-cell-ML3-6}
	\end{subfigure}
	\hfill
	\begin{subfigure}{.42\textwidth}
		\centering
		\includegraphics[width=.7\linewidth]{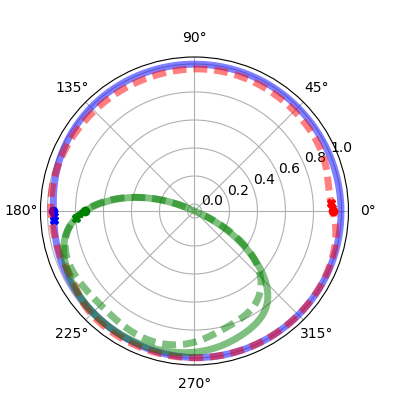}
		\caption{Layer $L_4$ Cell $8$}
		\label{subf-cell-ML3-8}
	\end{subfigure}
	\hfill
	\begin{subfigure}{.42\textwidth}
		\centering
		\includegraphics[width=.7\linewidth]{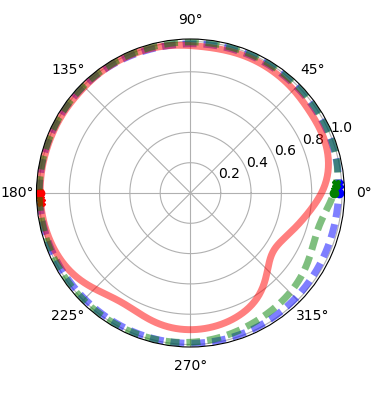}
		\caption{Layer $L_4$ Cell $12$}
		\label{subf-cell-ML3-12}
	\end{subfigure}
	\begin{subfigure}{.42\textwidth}
		\centering
		\includegraphics[width=.7\linewidth]{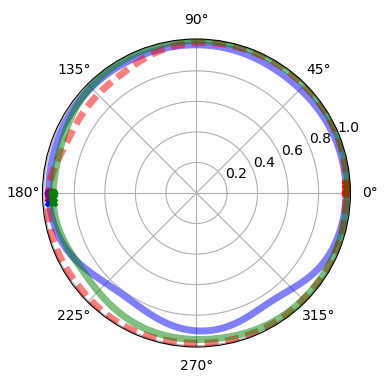}
		\caption{Layer $L_4$ Cell $14$}
		\label{subf-cell-ML3-14}
	\end{subfigure}
	\hfill
	\begin{subfigure}{.42\textwidth}
		\centering
		\includegraphics[width=.7\linewidth]{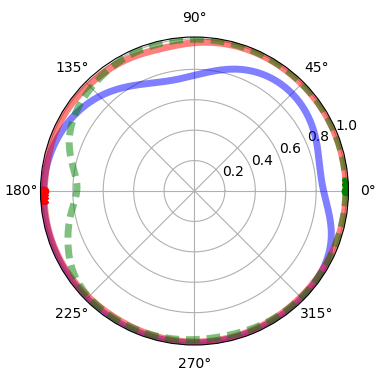}
		\caption{Layer $L_4$ Cell $22$}
		\label{subf-cell-ML3-22}
	\end{subfigure}
	\caption{Multi-Layer $L_4$ with $\theta\mapsto\phi^{j}_{a}(\cos(\theta), \sin(\theta))$ for $j\in \{3,6,8,12,14,22\}$}
	\label{fig:ML3}
\end{figure}
\begin{prop}\label{prop5}
	Consider three maximally saturated cells involving symmetrically all the different pairs of arguments, for instance\\
	cell $VI$: $1\Rightarrow E, -1\Rightarrow H\vee EH$;\\
	cell $VII$: $1\Rightarrow H, -1\Rightarrow E\vee EH$;\\
	cell $VIII$: $1\Rightarrow EH,-1\Rightarrow H\vee E$.\\
	Then the set $\{VI, VII, VIII\}$ is efficient.
\end{prop}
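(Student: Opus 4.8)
The plan is to check, as for the earlier triples in Propositions \ref{prop:complete}, \ref{prop:good-enough} and \ref{prop:triple}, the two conditions that together constitute efficiency: that $\{VI,VII,VIII\}$ is \emph{complete} in the sense of Proposition \ref{prop:complete}, and \emph{good enough} in the sense of Proposition \ref{prop:good-enough}. The organizing observation I would record first is that, because $E$, $H$, $EH$ are mutually exclusive and exhaustive (a fact the network has internalized through learning, as noted after Proposition \ref{prop:triple}), each of the three cells is equivalent to a biconditional: cell $VI$ fires $+1$ if and only if the condition is $E$ (its $-1$ branch asserting $H\vee EH$ is just the contrapositive), cell $VII$ fires $+1$ iff the condition is $H$, and cell $VIII$ fires $+1$ iff the condition is $EH$. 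With this reformulation both parts become short case analyses.

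For completeness I would tabulate the $2^3$ sign patterns of $(VI,VII,VIII)$, exactly in the style of the multi-line displays in the proofs of Propositions \ref{prop:complete} and \ref{prop:triple}, and read off the conclusion of each. Any pattern containing two or three $+1$'s asserts two distinct conditions at once and hence yields $\bot$; the all-minus pattern $(-1,-1,-1)$ forces $(H\vee EH)\wedge(E\vee EH)\wedge(H\vee E)$, which reduces to $EH\wedge(H\vee E)=\bot$; and the three patterns with exactly one $+1$, namely $(1,-1,-1)$, $(-1,1,-1)$, $(-1,-1,1)$, force $E$, $H$, $EH$ respectively. Thus whenever the activities are non-contradictory the condition is uniquely recovered, which is precisely the completeness assertion of Proposition \ref{prop:complete}.

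For the good-enough property I would invoke the $\mathfrak{S}_3$ symmetry of the configuration, exactly as in the proof of Proposition \ref{prop:good-enough}: permuting the three conditions $E,H,EH$ permutes the three cells $VI,VII,VIII$, so it suffices to treat one input, say the condition $E$. Then contraposition forces cell $VI$ to fire $+1$ (its $-1$ branch excludes $E$) and cells $VII$, $VIII$ to fire $-1$ (their $+1$ branches exclude $E$); the resulting pattern $(1,-1,-1)$ is in the completeness table with conclusion $E$, so the triple reconstructs the answer. The inputs $H$ and $EH$ produce $(-1,1,-1)$ and $(-1,-1,1)$ and are handled by symmetry. Having both properties, $\{VI,VII,VIII\}$ is efficient.

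I do not expect a genuine obstacle here; the case analysis is routine once the biconditional reformulation is in place. The one point that deserves care is the same caveat flagged after Proposition \ref{prop:triple}: completeness is a statement about \emph{all conceivable} activity triples, and the passage from "complete and good enough" to actually "conclusive on the data" relies on the experimentally observed absence of contradictory saturation patterns in real inputs, not on logic alone — so this empirical hypothesis should be stated explicitly where the proof concludes efficiency.
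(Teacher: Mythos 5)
Your proposal is correct and follows essentially the same route as the paper: an eight-row sign table showing that exactly the three single-$+1$ patterns are consistent and identify $E$, $H$, $EH$, followed by a contraposition argument showing that each actual condition produces its corresponding pattern (the paper spells out only the case $E$, relying implicitly on the symmetry you make explicit). Your biconditional reformulation and the closing caveat about non-contradiction on real data are harmless additions, not a different method.
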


\begin{proof}
	Let us begin by checking that a condition follows from the three activities, as soon as they are non-contradictory:
	\begin{multline}
		(1,1,1)\Rightarrow \bot,\quad (1,1,-1)\Rightarrow \bot, \quad (1,-1,1)\Rightarrow \bot,\\
		(-1,1,1)\Rightarrow \bot,\quad, (1,-1,-1)\Rightarrow E,\quad
		(-1,1,-1) \Rightarrow H, \\
		(-1,-1,1) \Rightarrow EH, \quad (-1,-1,-1)\Rightarrow \bot.
	\end{multline}\\
	Now suppose that the condition is $E$, then $VI$ fires at $+1$, $VII$ at $-1$ (because $E\vee EH\rightarrow -1$ by contraposition) and $VIII$ fires
	at $-1$ (because $E\vee H\rightarrow -1$ by contraposition); then, by the preceding assertion, with $(1,-1,-1)$, the cells together assert that $E$ is true.
\end{proof}

In the preceding layer $L_3$ we had only one cell of this maximally saturated type, then the situation of triples was described by propositions \ref{prop:complete},\ref{prop:good-enough},\ref{prop:triple}.\\
\indent This new possibility, invented in $L_4$, makes $18$ cells out of $25$ all having the appearance of a wise reasoning assembly.\\

\begin{rmk*}
	\normalfont A noticeable difference between the triples in proposition \ref{prop5} and proposition \ref{prop:complete}, is that in \ref{prop5} it establishes a one to one correspondence between the coherent activations (i.e. without contradiction) and the three conditions, but in \ref{prop:complete} it gives a two to one map, i.e. two different activities correspond
	to the same condition.
\end{rmk*}

Now let us look at the effect of doubling the number of cells in $L_3$. The logical functioning collapses; the population becomes even
less logical than in $L_2$ for the network with only one hidden layer. In this $L_2$, $35$ cells over $50$ joined their efforts to reconstruct a
condition by using coverings by informative intervals in a symmetric and uniform manner; here in the new $L_3$, only $17$ cells over
$50$ do that, with seven cells for $E$, seven for $EH$ and three for $H$. Moreover, the majority of the cells, $26$ exactly, are
concerned by the (less informative) unions, five for $E\vee EH$, six for $E\vee H$ and fifteen for $H\vee EH$. The exception which saves the honor (from
the logical point of view), is realized
by three crossed cells (not forming a complete triple, because two repeat the same message), plus two cells having three saturations not all
of the same sign. There exists another good point for this population: it doesn't contain a cell with three saturations
of the same sign, that $L_2$ contained. (But is it a good point to exclude all fantasy?)\\
\indent The explanation of this disaster seems to be related to the well known danger of over-fitting. This is certainly part of the truth, but this is not
all the truth, because the number of weights to learn is
$55\times 50+50 \times 50+4\times 50=50\times 109$, and in the network with three hidden layers the number of weights to learn equals
$55\times 50+50 \times 25+ 25\times 25+25\times 4=50\times 94,5$, which is not far. Of course the three layers imply more non-linearity, but
how to count that? We can just certify:\\
\begin{expres}
	\normalfont With a comparable number of parameters to adapt, the addition of a layer considerably increases
	the logical functioning, at the level of individual cells and of collective behaviors, and the addition of cells in one layer has the
	opposite effect, also at the levels of individual cells and of collective behavior.
\end{expres}
\begin{rmk*}
	\normalfont Please, no deduction about the necessity of a large number of layers in an administration. One can also contest our preference
	for logic, and logical invention, versus fantasy, and leisure, for a comparable result. Do not forget that both networks are successful for the task
	they have to accomplish. We have not yet compared their powers of generalization. The point of view we adopt here, is more our own intelligibility
	of the network's functioning. We do not contest that for some more complex tasks, the addition of many cells in a layer could be preferable,
	or that logic could come later from another road in more complex networks.
\end{rmk*}

\noindent Another remark to temperate the difference: both $L_4$ and the bigger $L_3$ introduced the largest variety of types of logical cells,
even if $L_4$ did that in a much more equilibrate and efficient manner.\\

\indent For the asymmetric model $z'$ (see figure \ref{fig-encoding}), we conducted analog experiments, with one hidden layer of $50$ cells, and two hidden layers
of respectively $50$ and $25$ cells. The main result was the failure to develop logic coherently in both cases. The network
learned well and performed very well the classification, but doesn't develop sufficiently many purely logical cells to be efficient.\\
\noindent
\begin{figure}[ht]
	\begin{center}
		\includegraphics[scale=0.6]{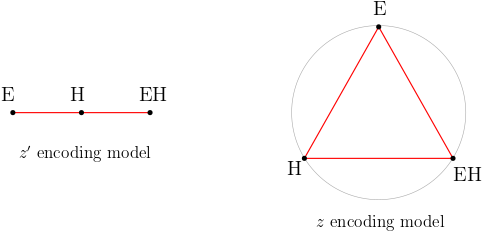}
		\caption{$z$ or $z'$ encoding}
		\label{fig-encoding}
	\end{center}
	\label{fig:encoding}
\end{figure}
\indent With one hidden layer $L_2$, most of the cells have two saturations of the same sign, they are $10$ of type $E$ and $8$
of type $EH$, no one of type $H$. They do the job of the analog cells in $L_2$ for the symmetric model $z$, at least for $E$ and $EH$.
\\
Ten of the cells have one saturation,
two for $E\vee H$, eight for $H\vee EH$, no one for $E\vee EH$. Half of the resting cells have three saturating graphs of the same sign, the other
half develop no saturation.\\
As we see, the condition $H$ encounters difficulties, and it was hard to reconstruct it directly or indirectly from the activity
in the layer in intelligible manner. However, the network is successful with the classification,  including $H$, then this is a case where the functioning is not explained by
what we see in the layers.\\

The symmetry which is respected in the model $z$ and not in the model $z'$ is the group $\mathfrak{S}_3$ of permutations of $A_3$;
the fact that in the model $z'$, $H$ is coded by a point inside the interval $[-1,1]$ and $E$, $EH$ by the boundaries, destroys the symmetry,
and we constat that it also destroys an important
part of the logic inside the network. Cf. Figure \ref{fig-encoding}. \\
\indent Then we get the important conclusion:
\begin{expres}
	\normalfont The topology of encoding must respect the symmetries, for the emergence of a logic
	in the inner layer, but not necessarily for the success of the neural network.
\end{expres}

As we will see in the discussion below, this is reminiscent of the appearance of Fourier analysis or Color analysis in the first hidden layers
of the CNNs, and CNNs are known to be more successful than simple DNNs for image analysis.\\

\indent With two hidden layers, this difficulty persisted. We observed superficially the same kind of progress we saw in the model $z$: it appeared
one graph with three saturations not all of the same sign, and three crossed cells. However, both of them were of the type $\pm 1\rightarrow H\vee E, \mp 1\rightarrow H\vee EH$.
Eight cells had two saturations of the same sign, but no one of the type $H$, and eleven with one saturation, of the type $E\vee H$ or $H\vee EH$, not $E\vee EH$.
Thus
the penalty to $H$ persisted. This implied that it was not possible to logically deduce or reconstruct the condition from the activity. Even taken all together, the
neurons of $L_3$ didn't form a good enough set.\\

\noindent There remain natural questions: 
\begin{itemize}[$-$]
	\item what are the weights of the logical cells for going to the output layer?
	\item Do they reflect the logical preferences?
	\item Same question for significative subsets of the last hidden layer.
	\item Do we see a correlation between synergies of weights and synergies in reasoning?
\end{itemize}

\subsection{Logical values}

\noindent We start with a tentative definition of the individual value of a neuron from the logical point of view, in our simple example:\\

\begin{defn}
	Let's give the value $1/2$ to every assertion of the form
	$1$ implies $E$ (here $1$ can be either $+1$ or $-1$ and independently $E$ can be replaced by $H$ or $EH$), the value $1/4$ for an assertion of the form
	$+1$ (resp. $-1$) implies $E\vee H$ (resp. $E\vee EH$, resp. $H\vee EH$). Then, by convention, the individual information
	value of the cell $y$ is the sum of the assertions it gives.
\end{defn}

\begin{rmk}
	\normalfont The individual value of a cell can be $0,1/4,1/2,3/4$.
	No cell can get the score $1$ because of the construction of the assertions by contraposition, from
	at most three implications of the form $a\Rightarrow \epsilon$, where $a$ is a condition and $\epsilon=\pm 1$.
\end{rmk}

\begin{rmk}
	\normalfont The above propositions indicate that the value of a group is not the sum of the values of
	the components. For the efficiency it even works in the wrong sense.
\end{rmk}

\begin{rmk}
	\normalfont In the above definition, we have decided that an atom, like $E$ is more precious than
	a union like $\neg E= H\vee EH$. This is apparently justified because we want to know the exact
	condition; however, proposition \ref{prop-triple} shows that it is more difficult to justify when thinking in terms of the neuronal assembly and of the
	collection of propositions regarding $E$, $H$ and $EH$. In fact, to find an algorithm which is able to decide the
	truth or not of every proposition about the three conditions is equivalent to an algorithm which can prove any of
	the atom when it is true, but it is also equivalent to an algorithm which can prove that they are false when it is the case.
	For instance $\neg E$ and $\neg H$ imply $EH$. That is because we are working in a Boolean logic, which gives us an
	important \emph{a priori} knowledge. In the Boolean setting there exists  a duality between prove and disprove.
\end{rmk}

\begin{rmk}
	\normalfont Note that the above duality doesn't totally disappear in intuitionist logic. Suppose we can prove $E\vee H$
	and $E\vee EH$, in a context where we know that $EH$ and $H$ are contradictory (i.e.  $H\wedge EH\Rightarrow\bot$), then we have a
	proof of $E$ or a proof of $H$, and separately a proof of $E$ or a proof of $EH$, then we have at least one proof of $E$.
	On the other side, if we know that $H$ is true, what can \emph{a priori} exclude that $EH$ is true if we don't have
	assumed they are contradictory.\\
\end{rmk}
\noindent In a general finite Heyting algebra, there exists a dissymmetry between the number of truth values of propositions and of truth values of
propositions of the form $\neg P$. Also, negative propositions are scarcer.
\begin{rmk}
	\normalfont The experiments we will report in the next sections,
	invited us, in particular with respect to a problem of classification, to measure the logical value of a cell or a group
	of cells, by the set of elementary propositions that a given activity exclude, i.e. conjecture to be false. The above definition 1
	accords with that, giving $1/4$ for each exclusion.\\
	This corresponds nicely with the notion of \emph{content}, that Carnap and Bar-Hillel studied in 1952. For a more complete
	discussion see the companion paper on probabilities and the forthcoming theoretical paper on semantic information.
	However, looking at more complex experiments, we will need a less rigid notion of information.
\end{rmk}

The preceding remarks justify that we orient ourselves on the search of a definition of logical value which is
more collective, i.e. concerns the whole layer, and which moreover, gives an equal value to the truth and falsity, i.e.
which concerns decidalibity.\\

\noindent In what follows, we consider a network which has learned, i.e. the weights are fixed, and collections
of vectors are given in the input layers, one for learning, one for testing, one for generalizing. Without contraindication,
the collection which is considered is the collection which was used for learning.\\

\noindent The intuition: the more the hidden layer can easily deduce the condition $a$ in the set $A$ from its activity, the higher
its logical information quantity is.\\
Note this is the point of view of an observer on the layer, knowing the receptive field of each neuron, i.e. the manner
this neuron reacts to every input during the learning, or testing or generalizing. This implies no obligation
for the network by itself, which is working as it prefers. Thus this measure of information has to be completed by
the analysis of the weights, to pass from this inner layer to the following ones, and the manner these weights take
care of the logical content, in order to achieve the role assigned to the networks, here in the final layers.

\begin{rmk}
	\normalfont In the next section, with a little more complex experiment, we will compare statistically the weights
	and the logic, and we will see that they perfectly agree, showing that the network elaborate proofs through the weights.
\end{rmk}

\indent In more general contexts, the above three conditions are replaced by some discrete variable of interest,
in a given set, as in ordinary classification. We describe them by the truth or no of some statement in a formal
language $\mathbb{L}$, i.e. under some declaration of types $X, Y,...$ and variables $z_X, z_Y, ...$, a proposition $P$, and so on. The objective is to
decide if yes or no a subset of propositions is true or false.\\
In the simple example below, the propositions described the Boolean calculus over $A_3$, with three elements.\\
\noindent The set of propositions of interest is supposed finite and closed by opposite, it is written $\mathcal{P}$ and
we want to know if they are true or not, i.e. prove or disprove.
\begin{rmk*}
	\normalfont This can be embedded in an intuitionist framework, because we are not forced to ask that necessarily, $P\vee\neg P$ is true.
\end{rmk*}
\
\noindent In the simplest example we considered below, $\mathcal{P}$ is the whole algebra of subsets of $A_3$, except the empty set.
\begin{rmk*}
	\normalfont If we include the conditioning by angular intervals, as does the model $z$ with $L_1$, $L_2$ and even a part of $L_3$, the
	information is always maximal. Then something finer has to be taken in account, which is the economy of the theory. Here, the restriction to the
	Boolean algebra plays this selective role.
\end{rmk*}

\noindent First we saw the important role played by a \emph{quantization} of the receptive fields of the neurons. This is a non-trivial
point, because discrete or continuous is also a matter of observation, or level of description.
Presently we don't enter into this difficulty, and we assume that for all the cells in the layer, it is possible to decide
if they have a good quantization or not, in function of the concerned properties to be proved or disproved.\\
And for simplicity we assume the quantization is binary, $-1$ or $1$, as in the example. It is not a big difficulty to extend
the discussion to several disjoint intervals in $[-1,+1]$.\\
The decision of the cell is given by the number $+1$ or $-1$. For the cells which are uncertain, we could add a $0$, but we will see in one minute how to give them an information zero.\\
Letter $C$ denotes a set with elements $+1$ and $-1$ and perhaps $0$. *It will be integrated in the formal language $\mathbb{L}$ by adding a type $C$.*\\
The quantized activities of the layer $L_k$ are represented by a subset of the product $C^{n}$ when the layer contains $n$ neurons.\\

\indent Then we attribute to the layer $L_k$ a vector of propositions $I_k$ which is made as follows:\\
for the neuron $y$, two propositions of the form $1\Rightarrow P_y$ and $-1\Rightarrow Q_y$, in the
language $\mathbb{L}$, which could be noted $P^{+}_y$ and $P^{-}_y$ respectively.\\
If the neuron is uncertain (i.e. telling uh or $0$) we adopt the convention that its $P_y=Q_y=\top$.\\
\indent Each possible quantized activity $\epsilon$ is described by a vector of coordinates $\epsilon_y$
in the set $C={-1,0,1}$. Then it defines a family of propositions $P^{\epsilon}_y, y \in L_k$. These propositions constitute $I_k^{\epsilon}$;
they
can be understood as the axioms of a theory $\mathbb{T}^{\varepsilon}$.\\

At least three collections of theories are interesting to consider: 
\begin{enumerate}[label=\arabic*)]
	\setcounter{enumi}{-1}
	\item the full family $\mathcal{T}^{0}_k$, corresponding to all the vectors $\epsilon$,	without exception,
	\item the sub-family $\mathcal{T}^{1}_k$ made by the \emph{consistent} theories only (i.e. without contradiction),
	\item the sub-family $\mathcal{T}^{2}_k$ of the preceding corresponding to the vector that can really happen in the layer, given the set of vectors in the input layers (learning, testing or generalizing). 
\end{enumerate}
The second one is more convenient than the first, because inconsistence is not comfortable, but is it preferable to the third one? The sets of inputs is difficult to describe, however its properties are determinant in all the
applications, and the role of the network is to define (or extract) a structure from the data, which allows it to generalize the efficient functioning
to other data. Moreover, we saw before that efficiency depends on a set of data. Then it is probably much better to work with the third species of sets of axioms.\footnote{if "everything works as it should be", the collection of these families $\mathcal{T}_k$ for all the layers forms an object in the topos of the network.}
\begin{defn}
	The minimal logical information of the working layer $L_k$ is the minimum of the ratio between the number of propositions that can be
	logically deduced from the axioms of any theory $\mathbb{T}^{\varepsilon}_k$ belonging to $\mathcal{T}^{2}_k$, and the cardinal of the consequences of the wanted propositions.
\end{defn}

\noindent In our simplest example, we can compare the input layer with the inner layer, and also compare the different models, $z$ versus $z'$, then
two hidden layers versus one, then three layers versus two. From the above discussions, they are evidently disposed in a growing order of logical information.
(The only ambiguous case is $z$ with a too fat $L_3$, which is difficult to compare with the model $z'$, because they don't have the same defects.)\\

\indent As the above propositions $2,3,4,5$ show, it would be nice to have a more localized notion of information, involving the set of subsets
of $L_k$. (We will introduce such a definition below, involving together logic and probability.)\\
\indent In reality, the practical information value is not only given by the whole collection
of theories, for instance the set $\mathcal{T}^{2}_k$, it must also take in account  the collection of demonstrations of the propositions
of interest or their opposite, starting from the concrete axioms $P^{\varepsilon}_y, y \in L_k$, and measure their difficulty, number of branches
of trees in a proof, and number of useful initial propositions, then here, the number of cells which are involved comes into the play.\\
\indent A kind of Galois theory could exist in this context, describing
how these sets and proofs are changed by adding (or deleting) a certain set of logical cells.\\
\indent This has to do with the stability of the deduction, which has also its importance, practical and theoretical. Two aspects of stability appear naturally:
\begin{enumerate}[label=\arabic*)]
	\item the deletion or dysfunction of few cells can destroy the information value or not;
	\item the function can be easily recovered or not by a few re-learning or training.
\end{enumerate}

These two aspects being probably not independent.

\section{Simple networks for doing predicate calculus}\label{sec-predicate}

In this second experiment, we test the hypothesis that a simple network of few layers is also able to decide between propositions involving
existential and universal quantifiers, $\exists$, $\forall$, and when doing that, spontaneously constructs cells that perform logical analysis in the
inner layers, having the same kinds of properties than the logical cells of section \ref{sec-first-exp}: they introduce discrete responses, in such a
manner that spiking or not spiking implies propositions, from which it is possible to answer the final question by logical deductions.\\

\noindent The goal of the DNN in this case is to recognize if two different objects are disjoint, intersecting or in relation of inclusion.
It can be images with two colored rigid objects in a space, or two different voices pronouncing sentences.\\
In the spatial case, the input data are collections of dots in two possible colors, in the temporal one they are collections of sounds
in two possible timbers.\\
A first layer $HL_0$ is made of neurons detecting only one color, or only one timber, a kind of transducer.\\

\noindent Our experiment shows that a fairly successful network exists with only one hidden layer, but for observing logical cells, and for the aptitude of generalization,
the network must have at least two or three layers, depending on the nature of the space or time interval.\\

\subsection{A first experiment in two forms}

\subsubsection{The experimental setting}

\noindent For simplicity we speak of images and colors, named Red and Green, and we consider only homogeneous one dimensional spaces $D$: one is a circle, one is a segment;
both have a discretization of the order of $100$ dots or unit segments (See Figures \ref{subf-interval-linear} and \ref{subf-interval-circular}).\\

\noindent
\begin{figure}[ht]
	\begin{subfigure}{.62\textwidth}
		\centering
		\includegraphics[width=.9\linewidth]{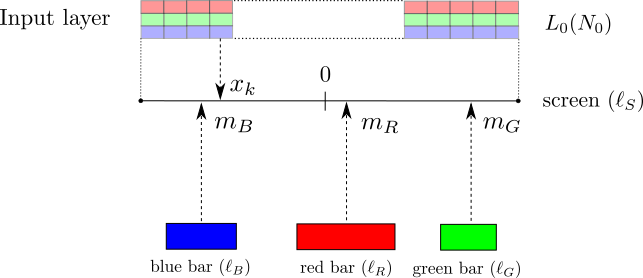}
		\caption{Experiment on a line}
		\label{subf-interval-linear}
	\end{subfigure}
	\hfill
	\begin{subfigure}{.36\textwidth}
		\centering
		\includegraphics[width=.9\linewidth]{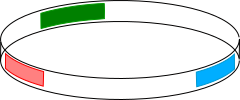}
		\caption{Experiment on a circle}
		\label{subf-interval-circular}
	\end{subfigure}
	\caption{Input Layer for $3-$bar experiments}
\end{figure}

Rigidity of the objects $R$ and $G$ means that, for each image and each color the dots form a segment of constant length. These lengths are of the order of $3$ or $5$ out of $18$ in the linear case, as in the circular case.\\

\noindent Important: in this first experiment, we do not change the length of the objects, then one of them, say the red,
can never be included into the other, say the green, but the green object can be strictly included into the red one.\\

\noindent
\begin{figure}[ht]
	\centering
	\includegraphics[width=.55\linewidth]{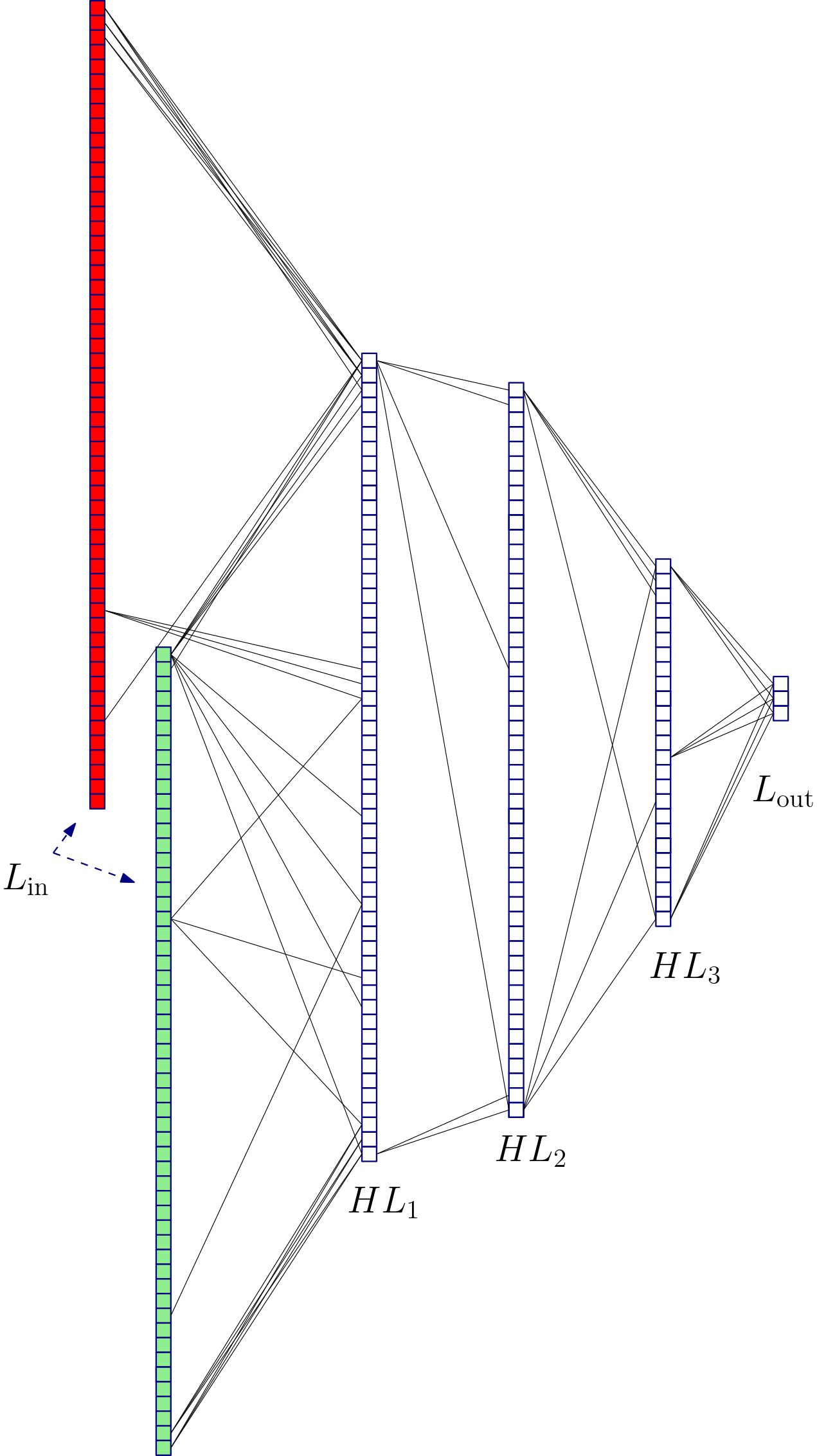}
	\caption{Architecture of the network for the predicate experiments}
	\label{fig-archi-exp2}
\end{figure}

\noindent As it is illustrated in Figure \ref{fig-archi-exp2}, a first layer $HL_0$ contains two families of neurons, each one having $55$ elements (a number coming from the first experiment) for detecting one of the two colors. By convention, we say they detect red or green dots. This layer is not considered as a hidden layer, it performs a transduction, as the cones do in the retina.\\
We considered two types of receptors, the simple one is Gaussian or wrapped Gaussian (theta distribution), the complex one is made by a difference of two Gaussian curves or wrapped Gaussian curves, introducing a negative answer when the color disappears of the receptive field, then detecting the contrast. The type of receptor influences the learning and the generalization, but it appeared that the simple one has better performance.\\

\begin{rmk*}
	 \normalfont The addition of a layer which locally combines the activities of $HL_0$ has a negative effect as well.
\end{rmk*}

\noindent The first truly hidden layer $HL_1$ has $55$ cells (to have the possibility of comparing with section \ref{sec-first-exp}).\\
When we add a second hidden layer $HL_2$,
it will have $50$ neurons, and when we add a third one $HL_3$ it will have $25$ neurons, except with some mentioned exception). See Figure \ref{fig-archi-exp2} for an illustration. \\

\noindent The last layer has three neurons, named $D$ for disjunction, $IO$ for intersection without inclusion (intersection only) and $II$ for inclusion.\\

\noindent The mapping (to be learned) from a layer to the next one, is of the type $\tanh(C\sum x)$, with $C$ equals $1$ or $2$. In all cases $C=2$ gives better results.\\
Between the last hidden layer and the output layer, we choose a
linear mapping, without using the $\tanh$ activation function, and to normalize the result as a probability, adapted to the cross entropy. The main reason is that this gives better results.
%\noindent The exact parameters for layers, metrics and learning are presented with more details in the last section Methods (read me).\\

\noindent A metric was chosen on the last layer, for measuring the accuracy of the answer, then for the training phase. The choice of this metric has a strong influence on the results.\\
\indent We already saw this point in the first experiment, but in this case, to ontain a good performance, either in accuracy either for the logical behavior, it is not sufficient to respect the symmetry between the three points, for instance by using a two dimensional coding. In fact all the results below need the use of the cross-entropy, i.e.  the Kullback-Leibler distance between the feedback normalized beliefs (given by the network) in the three options and the right one (non random). 
Thus the quantity to minimize is $-\log p_i$ for the condition $i$; $i=1,2,3$ for $D,IO,II$ respectively.\\
This metric is known to improve most classification problems \cite{Ghosh2017RobustLF}; it is particularly adapted to connecting semantic information and statistical information. 

In both the linear and circular cases, a dramatic improvement from a quantitative point of view appeared with two hidden layers instead of one. IT is the case for both the minimal loss function after training and the number of residual errors. Note that this last number becomes stable with two hidden layers (around $1/100$ for the circular case and for the linear case). We will discuss this limit later, however it obviously represents the limited precision of the receptive fields, which makes them unable in many case to distinguish the intersection or the inclusion versus intersection only, when the boundaries of the objects are close. But it was
not the goal of this study to improve the performance in this direction.

\subsubsection{Theory, predicative cells, conclusive or not}\label{conclusive}

In order to describe logical cells, we constructed for each neuron $i$, a receptive field to an intermediate proposition $P$.
\begin{enumerate}
	\item For local propositions : for every point $a\in D$, the distribution of the responses of the cell $i$, when Green and Red appear together at the position $a$, noted $R(a)\wedge G(a)$, the same for $R$ but not $G$ appearing in $a$, denoted $R(a) \wedge \neg G(a)$, the same for $G$ but not $R$ appearing in $a$, noted $G(a) \wedge \neg R(a)$, and finally the same for $\neg R(a) \wedge \neg G(a)$.
	\item We also consider global propositions, describing the reaction of the cell $i$ when the presented objects are disjoint (condition $D$), when they intersect without inclusion (condition $IO$, intersection only) and when the green object is included in the red one (condition $II$).\\
	\noindent Proposition $D$ is $\forall x\in D, \neg (R(x)\wedge G(x))$.\\
	Proposition $II$ is $\forall x\in D, G(x)\Rightarrow R(x)$.\\
	And proposition $IO$ is $(\exists x\in D, G(x)\wedge R(x))\wedge (\exists y\in D, G(y)\wedge\neg R(y))$.
\end{enumerate}

All these distributions were represented by a color code (rectangles for
the local questions, segments for the global ones), blue for $-1$, red for $+1$,
and barycenters of the colors for activity in between $-1$ and $+1$ (see Figure \ref{fig-cell25}). 
\noindent
\begin{figure}[ht]
	\begin{subfigure}{.62\textwidth}
		\centering
		\includegraphics[width=.95\linewidth]{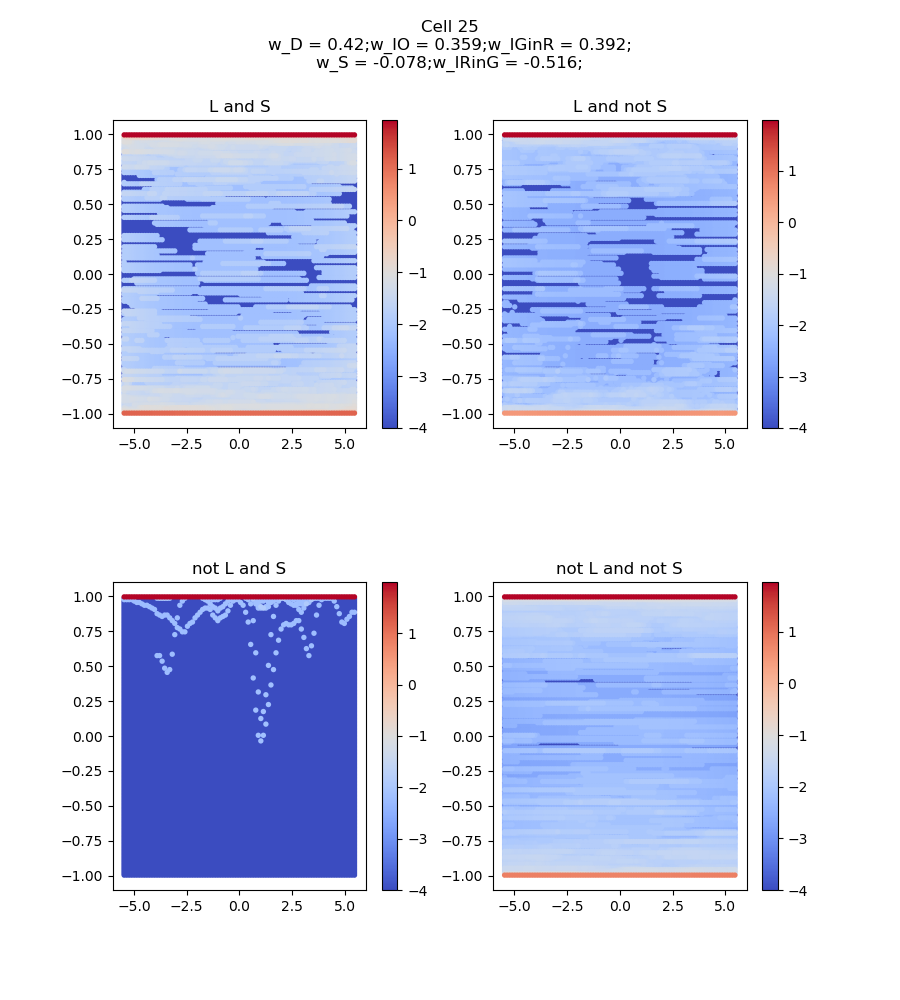}
		\caption{$HL_3$ Cell 25 - Activity Distribution}
		\label{subf-cell25-act-dis}
	\end{subfigure}
	\hfill
	\begin{subfigure}{.36\textwidth}
		\centering
		\includegraphics[width=.95\linewidth]{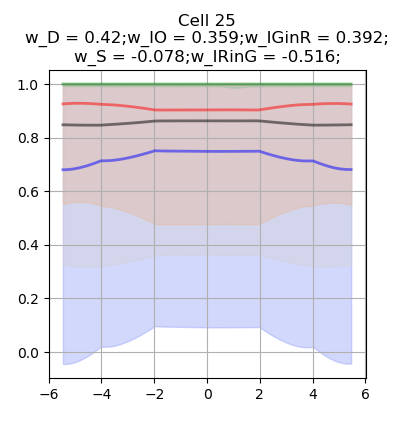}
		\caption{$HL_3$ Cell 25 - View}
		\label{subf-cell25-view}
	\end{subfigure}
	\caption{Cell $25$ in layer $HL_3$}
		\label{fig-cell25}
\end{figure}

In addition, we took advantage from the fact that the input image can be fully described by two bounded real parameters, which are the positions of the
two centers of the intervals. In the circular case they are two angles, giving a point in a flat torus; in the linear case, this gives a point in a square.
Therefore the complete activity of the cell as a function of the input, can be represented by a colored square, with the above color code. We called this
representation the \emph{raw activity} of the individual cell (see Figure \ref{fig-cell2-rawact} for the linear case). 

\noindent
\begin{figure}[ht]
	\centering
	\includegraphics[width=.55\linewidth]{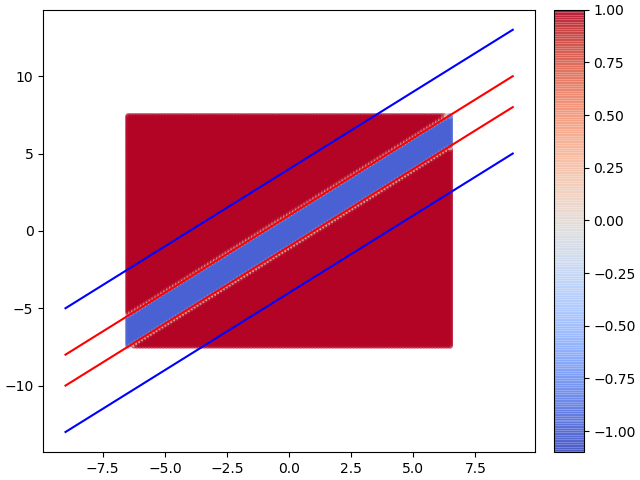}
	\caption{Linear case - $HL_3$ Cell 2 - Raw Activity}
	\label{fig-cell2-rawact}
\end{figure}

It appears that the most readable representation is by far the raw activity, but the other representations give a finer idea of the
variability and allow to confirm what appears on the raw activity.
\begin{rmk*}
	\normalfont All results are very noisy with only one hidden layer $HL_1$, but become  very intelligible with two hidden layers.
\end{rmk*}
	In the circular case, we can see a Fourier analysis on the torus in $HL_1$, as it is illustrated in Figures \ref{subf-Fourier15} and \ref{subf-Fourier37}, which is pursued in part in $HL_2$, where it also
	appears almost discretized cells for the three propositions $D$, $II$ and $IO$. In $HL_1$ the Fourier analysis is made separately on the two middle angles $\theta_R, \theta_G$, but in $HL_2$ the analysis is done with respect to the phase difference $\theta_R- \theta_G$, which is an evident progression with respect to the	logic, allowing the individual cells to represent the characteristic functions of the objectives.\\
	For the linear segment case, we got the same behavior, with a kind of partial Fourier analysis corresponding to the action of $\mathbb{Z}/2\mathbb{Z}$ by symmetry around the middle of the full segment, but we observed an important difference with respect to the circular network, because, with two hidden layers, no cell corresponds to $IO$.

\noindent
\begin{figure}[ht]
	\begin{subfigure}{.48\textwidth}
		\centering
		\includegraphics[width=.95\linewidth]{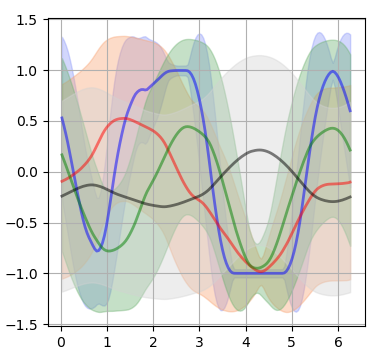}
		\caption{Fourier analysis of cell 15}
		\label{subf-Fourier15}
	\end{subfigure}
	\hfill
	\begin{subfigure}{.48\textwidth}
		\centering
		\includegraphics[width=.95\linewidth]{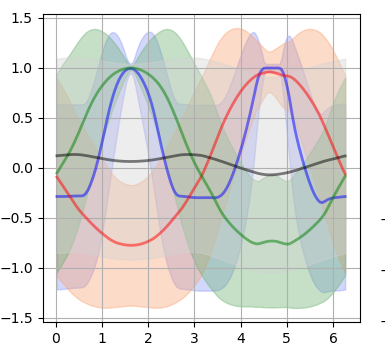}
		\caption{Fourier analysis of cell 37}
		\label{subf-Fourier37}
	\end{subfigure}
	\caption{Fourier analysis of $HL_1$ in the circular case}
\end{figure}

We concluded that the $IO$ cells in $HL_2$ for the circular case with two hidden layers is almost certainly a consequence of the good approximation of the characteristic function of this proposition by an harmonic of degree $3$.\\

What is amazing, is that the Fourier analysis completely vanishes when we introduce a third hidden layer $HL_3$, a consequence is the total disappearance of cells interested in $IO$, intersection only. This is true in both the circular and linear case.\\
We conclude that, with three hidden layers, something in the training appears, which allows neurons to get access to higher frequencies, for representing step functions, that where too difficult to represent with the first harmonics
of the Fourier analysis.\\
With three hidden layers, the cells in $HL_2$ are still affected by noise but fully intelligible for the two conditions $D$ and $II$. In $HL_3$ the representation is perfect, fully quantized at $-1$ and $+1$, without any cell $IO$. We say that a cell is of type $D$, resp. $II$, resp. $IO$, if conditioned by an input of the respective type it tells $\pm 1$ and conditioned by an input of the two other types, it tells $\mp 1$. \\
A consequence is a nice predicative logic: with one $D$ cell and one $II$ cell, the conclusion of the output is accessible. Suppose for simplicity that
both cells prefer $+1$ for their respective type, then the pairs of possible activities $(+1,-1)$, $(-1,-1)$, $(-1,-1)$ correspond respectively to
a prediction $D$, $II$, $IO$.\\ 

The fact that the cells quantize at two opposite values has the consequences that the receptive fields for the propositions
$G(a)\wedge R(a)$ and $\neg R(a)\wedge G(a)$ saturate at $-1$ or $+1$, the first one corresponding to the condition $D$, the second one to the condition $II$.\\

\textbf{In numbers:}
\begin{enumerate}
	\item in the circular case, with two layers, in $HL_1$ it is difficult to detect a logical functioning, but in $HL_2$, with $50$ neurons, the Fourier analysis in $\theta_R- \theta_G$
	reconstructs fairly good raw activities, giving $20$ cells of type $D$, $15$ cells of type $II$ and $13$ cells of type $IO$, and two strange cells. Cf. Figures ... \\
	We tried also with a second layer of $25$ neurons and got the same kind of spectrum: $6$ cells $D$, $8$ cells $II$ and $11$ cells $IO$.
	\item With three hidden layers, $HL_2$ seems at first sight to resemble the preceding, but it contains no cell of type $IO$, $26$ cells $D$ and $24$ cells $II$.
	The layer $HL_3$ develops an impressive quantization, giving $15$ cells $D$ and $10$ cells $II$.
	\item In the linear case, things are less imaginative, certainly because a continuous Fourier analysis is missing.\\
	With two layers, in $HL_1$ we recognize $18$ cells corresponding to an asymmetric representation of $\mathbb{Z}/2\mathbb{Z}$ (Figure \ref{fig-Fourier-asymmetric}), and $32$ cells to a symmetric one (Figure \ref{fig-Fourier-symmetric}). In hidden layer $HL_2$, $21$  $D$ cells, $22$ cells $II$, $4$ cells Fourier symmetric, $1$ asymmetric, and $1$ not interpretable for us. 
	\item  With three hidden layers, in $HL_2$ we found $17$ $D$ cells, $22$ $II$ cells,  $5$ symmetric Fourier cells, $6$ asymmetric ones, and in $HL_3$, no Fourier cell, $15$ cells of type $D$ (Figure \ref{subf-HL3-type-II}) and $11$ of type $II$ (Figure \ref{subf-HL3-type-II}).
\end{enumerate}

\noindent
\begin{figure}[ht]
	\begin{subfigure}{.48\textwidth}
		\centering
		\includegraphics[width=.95\linewidth]{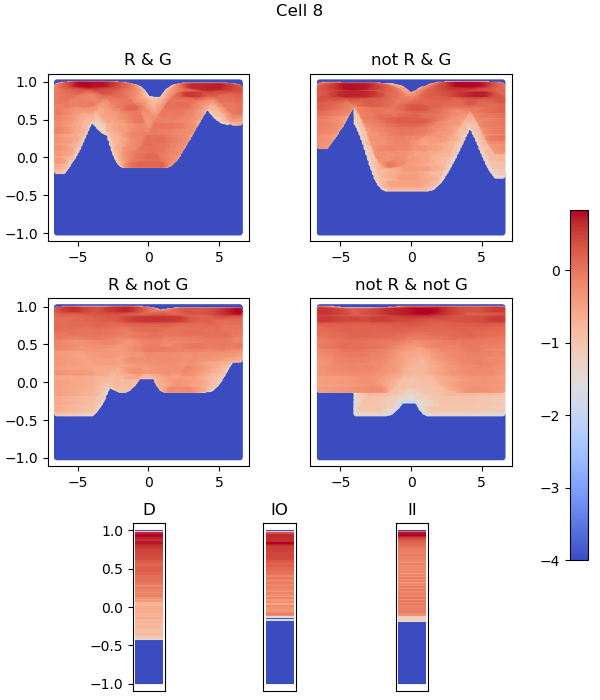}
	\end{subfigure}
	\hfill
	\begin{subfigure}{.48\textwidth}
		\centering
		\includegraphics[width=.95\linewidth]{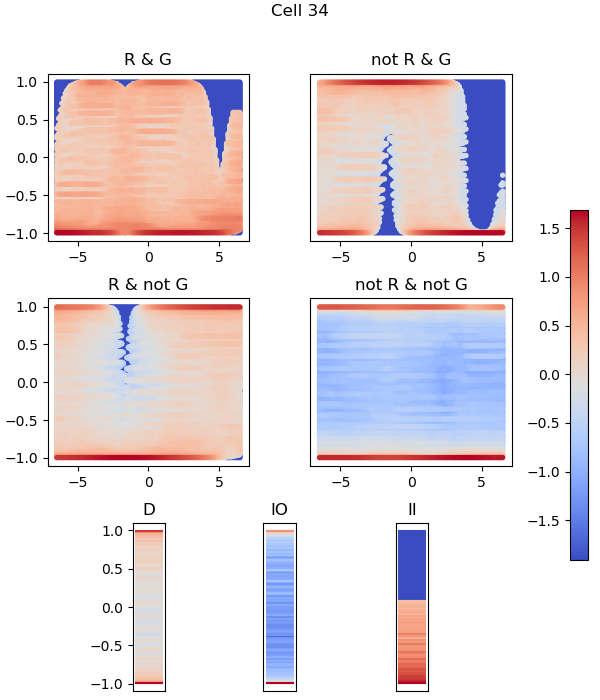}
	\end{subfigure}
	\caption{Fourier analysis on $\mathbb{Z}/2\mathbb{Z}$: the asymmetric case}
	\label{fig-Fourier-asymmetric}
\end{figure}

\noindent
\begin{figure}[ht]
	\begin{subfigure}{.48\textwidth}
		\centering
		\includegraphics[width=.95\linewidth]{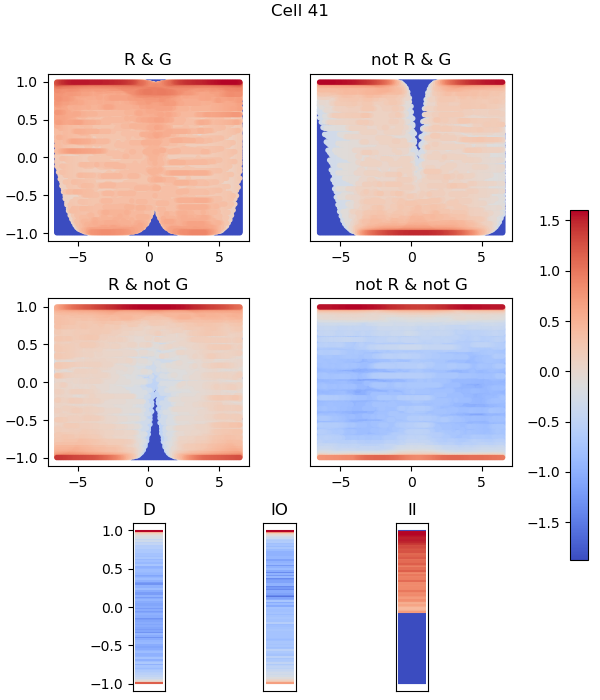}
	\end{subfigure}
	\hfill
	\begin{subfigure}{.48\textwidth}
		\centering
		\includegraphics[width=.95\linewidth]{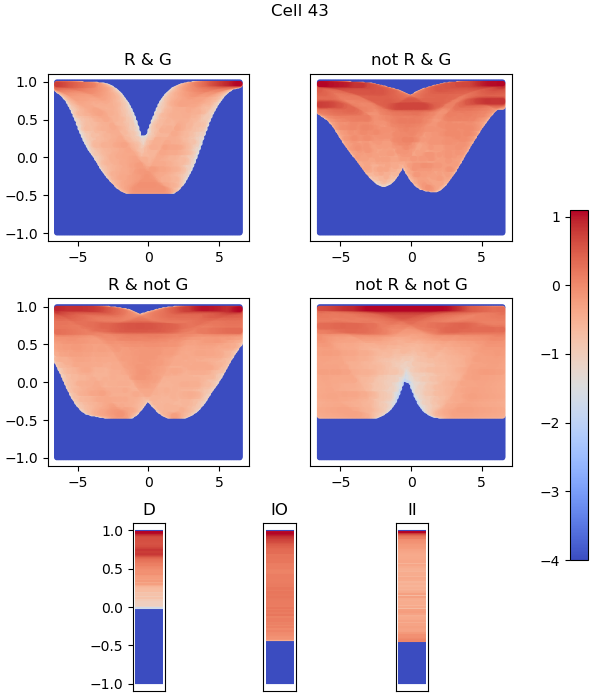}
	\end{subfigure}
	\caption{Fourier analysis on $\mathbb{Z}/2\mathbb{Z}$: the symmetric case}
	\label{fig-Fourier-symmetric}
\end{figure}

\noindent
\begin{figure}[ht]
	\begin{subfigure}{.48\textwidth}
		\centering
		\includegraphics[width=.95\linewidth]{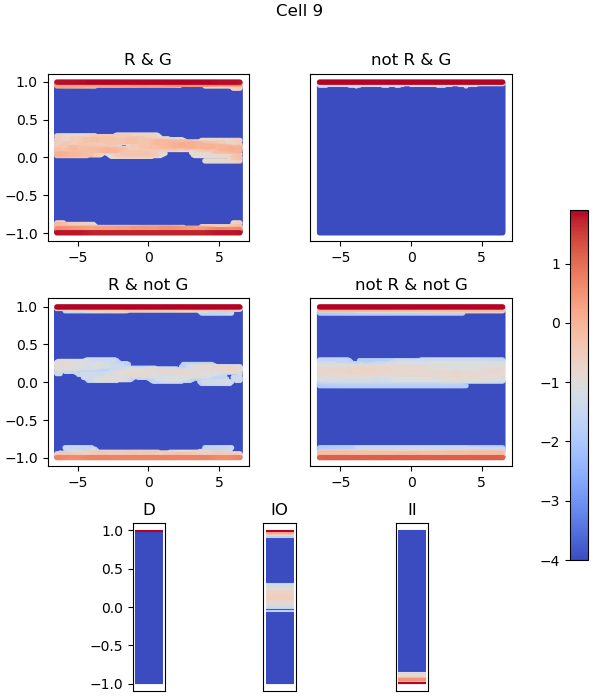}
		\caption{$HL_3$ - Cell of type $II$}
		\label{subf-HL3-type-II}
	\end{subfigure}
	\hfill
	\begin{subfigure}{.48\textwidth}
		\centering
		\includegraphics[width=.95\linewidth]{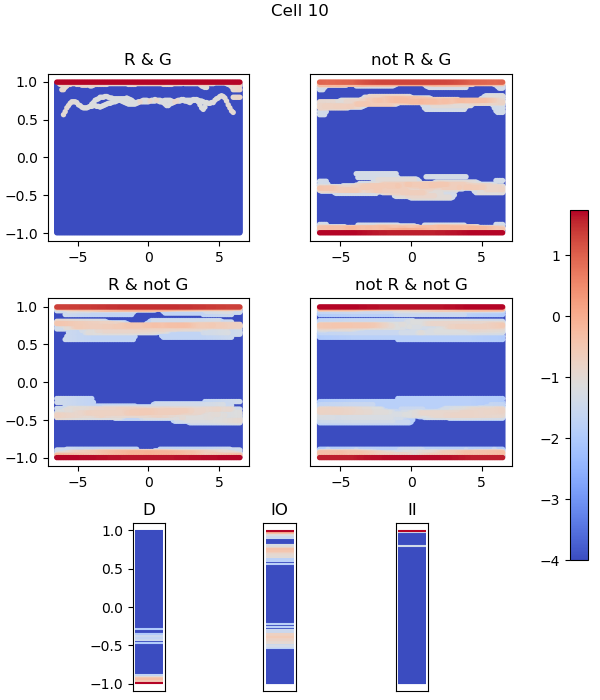}
		\caption{$HL_3$ - Cell of type $D$}
		\label{subf-HL3-type-D}
	\end{subfigure}
	\caption{$3$ hidden layers - $HL_3$}
\end{figure}

\textbf{Important remark:} the distribution of preference does not reflect at all the statistics of the imputs, which are apparent on the raw graphics. In fact, the number of $D-$cases is larger than the number of $IO-$cases, which is much lager than the number of $II-$cases.	There is a tendency to the equilibrium between $II$ and $D$. We plan to study the variability of these populations in a further study. 

\noindent We saw, in this experiment, which was expected to test predicate logic, the same kind of results we saw in the first experiment, with ordinary propositional calculus. Clearly, the Boolean logic at the output dominates. And we could conjecture  that this will happen in any classification problem. However, this is not the full picture. This is clear for the condition $IO$, because its "mature" treatment relies on an indirect logical reasoning, the proposition never being directly accessible. This reminds the hidden predicate calculus, where the expression of this proposition is twice more complex than the other ones, $D$ and $II$.\\
Of course, we must be conscious that the above networks learn by minimization of a certain functional $F$. They learn to approximate the desired responses,
and all that is fully supervised. The logic, at least, results from the analytical properties of $F$ and the nature of the images.
Therefore we can interpret the difficulty of accessing $IO$ by the difficulty to represent a characteristic function which is more complex than the other ones; this complexity is precisely the complexity of the logical formula in predicate calculus.
However, we will see below that a bit more solicited network can success in representing sometimes this function for $IO$, and we never see a mixture between it and the characteristic functions of $II$ or $D$. It agrees with the quantization, and perhaps depends on it. The combination is made by logics, not by interpolation. Thus we can at least suggest that the minimum of the
functional $F$ with $3$ layers and more, has a logical flavor, and that this logic is influenced both by the output classification and by the predicative formulas with respect to the localized input, i.e. the composite nature of the objects in the scene, and the composite nature of their reciprocal relations.

\subsubsection{Tests of generalization}

The ability to generalize was tested on three different sets of data: changing the lengths from $3$ and $5$ to $4$ and $6$ out of $18$ with and without an invertion of the colors, and a last one, just exchanging the colors without changing the lengths.\\
\indent The results were not so bad: $12\%$ of errors for the change of lengths
respecting the colors, and the same for the exchange of colors without changing the lengths, but surprisingly only $8\%$ of error for both the change of length
and the exchange of colors. This is as if the network were more perturbed  when correlations that it had established by itself are violated in the new data.\\

\noindent \textbf{Important:} in all cases the raw activities are similar to the raw activities without generalization, only slightly deformed. However, very nicely, when the small and large bars changed their color in the generalization test with respect to the learning condition, the cells that were attached to saturation of $R \wedge\neg G$ (resp. $G\wedge \neg R$) became almost saturated for the other, $G\wedge \neg R$ (resp. $R \wedge\neg G$). In fact these cells had no notion of color, they were interested by the comparison of lengths, and the local property $S(a)\wedge\neg L(a)$, $S$ for small and $L$ for large.

\noindent Therefore, the network gave the impression that it had understood by itself that what is important, to decide about the inclusion or not, was to distinguish the short object from the large one!\\

This was a reason to modify the problem, with data of several lengths and inverting the colors.\\

\subsection{A second experiment with a richer learning}

In the above experiment, the length of the objects did not vary, and we saw a not too bad but limited capacity of generalization. We decide, then, to consider a larger collection of images (or conversations between two persons) where the objects in red or green (or the sentences)
can change their lengths.  The conjecture is that the network will also succeed, constructing by itself in $HL_3$ predicative cells of different types,
able to conclude by logical proofs.\\
In this set of experiments as well, we consider colored objects in a one dimensional space. And we present the results for the linear segment, not the circle.\\

\indent The experimental setting is the same as before (see Figure \ref{fig-archi-exp2}), a layer $HL_0$ with two populations of small Gaussian sensors, one for green and the second one for red;
then a layer $HL_1$ with $55$ cells, a layer $HL_2$ with $50$ cells, a layer $HL_3$ with $25$ cells. The activation function is $\tanh (C\sum X)$, with $C=1$ or $C=2$, with the mentioned exception of the mapping from $HL_3$ to the output $HL_4$, which is linear and normalized as a probality. The overall metric is the cross-entropy. In the back-propagation algorithms, we vary the sizes of batches and the number of iterations.\\
However, the input images now contain bars of several lengths, for instance $2,4,6,8$ or $3,5,7$ of both colors out of $23$ units for the diameter of the space. Discretization is still $100$ for the total space.\\

\begin{rmk*}
	\normalfont A priori, coincidence of the objects in space can happen. However, we observed that the network was unable to detect this case, when we included it in the objectives. We supposed that this was due to the few data where coincidence occurs. We then repeat the coincidences in such a manner that it happens as often as other situations. The result is good for the accuracy, but mostly destroys the logical behavior. Therefore we decided to avoid coincidences. We could have decided to include them without asking about them, but in this case, the questions about the inclusion of red in green or the converse would have conflicted.
\end{rmk*}

We asked four final questions: $D$ disjunction, $IO\equiv (G\cap R\neq\emptyset) \wedge (G\cap R^{c}\neq\emptyset)$, $II_R\equiv R \subsetneq G$, $II_G\equiv G\subsetneq R$.\\
We did that with four lengths, $2,4,6,8$ for the two objects, and tested for $3,5,7$ for both (out of a total length $23$), with the same discretization as before, $100$.\\
The network in this case gave the best results we got from the beginning, with a testing as good as the training, around $1\%$ accuracy. Moreover, the predicative cells were excellent, showing a low variability in $HL_2$ and being very well quantized in $HL_3$, corresponding to some partitions of the following set of ten propositions $D\wedge (L=R)$, $II\wedge (L=R)$, $(R \subset G)\wedge (L=R)$, $(G\subset R)\wedge (L=R)$, $D\wedge (L=G)$, $II\wedge (L=G)$, $(R \subset G)\wedge (L=G)$, $(G\subset R)\wedge (L=G)$, and surprisingly $IO\wedge (L=R)$ and $IO \wedge (L=G)$.\\
This means that the network has \textsl{understood} the existence of red and green objects, not only the fact that one is small and the other one is large.\\

However, and it is a fundamental result: all the propositions that were decided (proved or disproved) by the individual cells in this experiment (as in the other one) belong to the algebra generated by the four propositions forming the objectives: $D$, $IO$, $R \subset G$, $G\subset R$.\\

\noindent \textbf{In numbers:} In $HL_2$, containing $50$ cells, we observed $20$ cells of type $D$, $6$ of type $II=(G\subset R)\vee (R\subset G)$, $7$ of type $R\subset G$, $13$ of type $G\subset R$, $1$ of type $IO$, (but vaguely) and three cells difficult to interpret, being saturated at one value or perhaps doing Fourier analysis on $\mathbb{Z}/2\mathbb{Z}$.\\ 	
In $HL_3$, we observed $9$ $D$ cells, $5$ $II$ cells, $0$ $IO$ cells, $3$ cells $R\subset G$, $6$ cells $G\subset R$, no bizarre cells, but a wonderful pair of cells, number $1$ and $25$ (See Figure \ref{fig-Exceptional-pair-Raw}):\\
\noindent cell $1$ tells $+1$ when $L=R$ and $\neg D$ are true or when $L=G$ and $IO$ (here very net) are true, it tells $-1$ when $L=R$ and $D$  are true or when $L=G$ and $\neg IO$ are true, then when it spikes at $+1$ we know $(IO \wedge (L=G))\vee (L=R)\wedge ((IO)\vee (G\subset R))$ which is equivalent to $IO \vee (G\subset R)$, and when it spikes at $-1$, we know $(D\wedge (L=R))\vee (L=G)\wedge (D\vee(R\subset G))$, which is equivalent to $D\vee (R\subset G)$; the cell $25$ does the analog, but exchanging $+1$ with $-1$ (which has no importance) and $R$ and $G$, which has an importance, because the conclusion of its "reasoning" is as follows: $+1$ implies $D\vee (G\subset R)$, $-1$ implies $IO\vee (R\subset G)$.\\ Thus, considering the pair of neurons $1$, $25$, we have $(+1,+1)$ implies $G\subset R$, $(+1,-1)$ implies $IO$, $(-1,+1)$ implies $D$ and $(-1,-1)$ implies $R\subset G$. \textbf{This pair completely solves the classification problem}.\\

\noindent
\begin{figure}[ht]
	\begin{subfigure}{.48\textwidth}
		\centering
		\includegraphics[width=.95\linewidth]{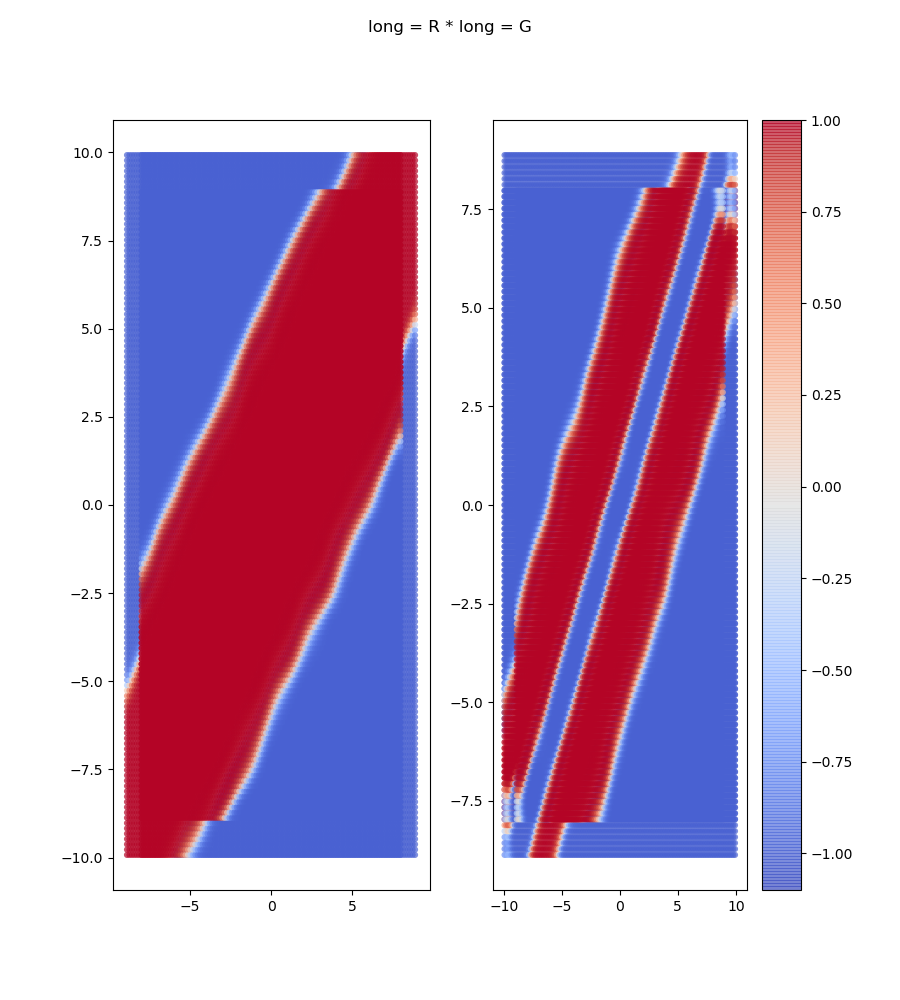}
		\caption{$HL_3$ - Cell $1$}
		\label{subf-HL3-Rawcell1}
	\end{subfigure}
	\hfill
	\begin{subfigure}{.48\textwidth}
		\centering
		\includegraphics[width=.95\linewidth]{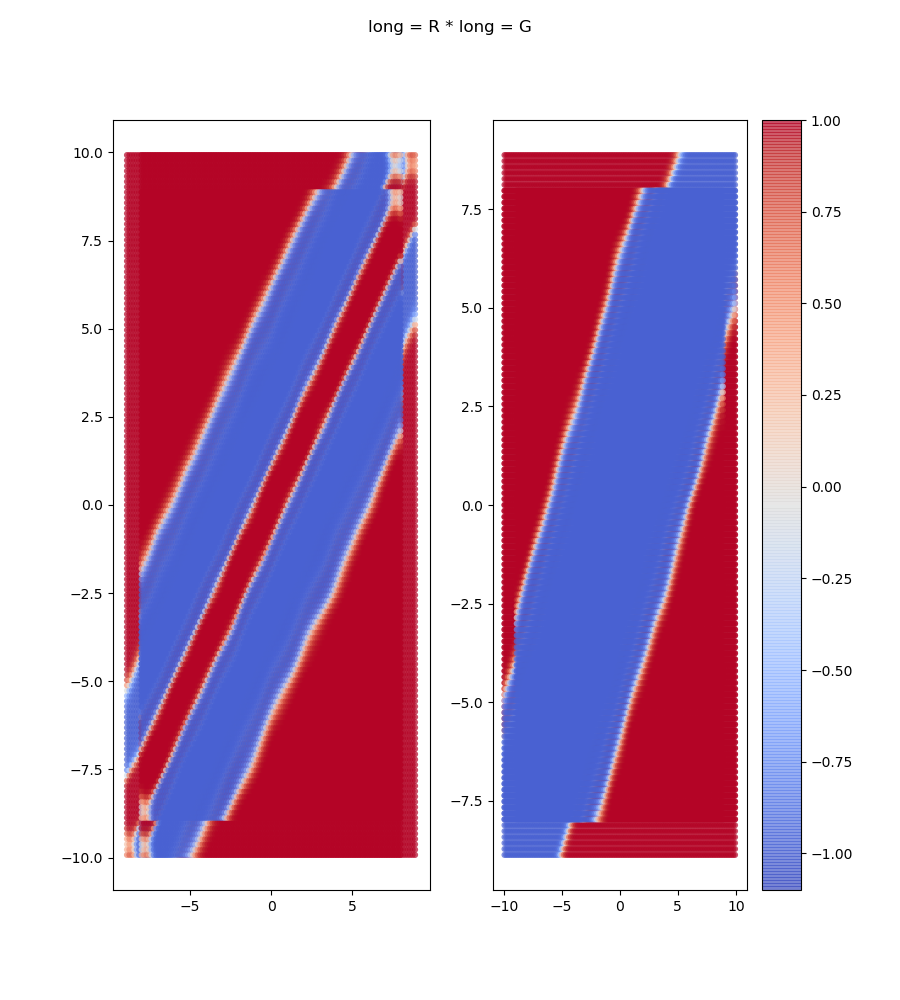}
		\caption{$HL_3$ - Cell $25$}
		\label{subf-HL3-Rawcell25}
	\end{subfigure}
	\caption{Raw activities of the exceptional pair of cells in $HL_3$}
	\label{fig-Exceptional-pair-Raw}
\end{figure}

\noindent
\begin{figure}[ht]
	\centering
	\includegraphics[width=.7\linewidth]{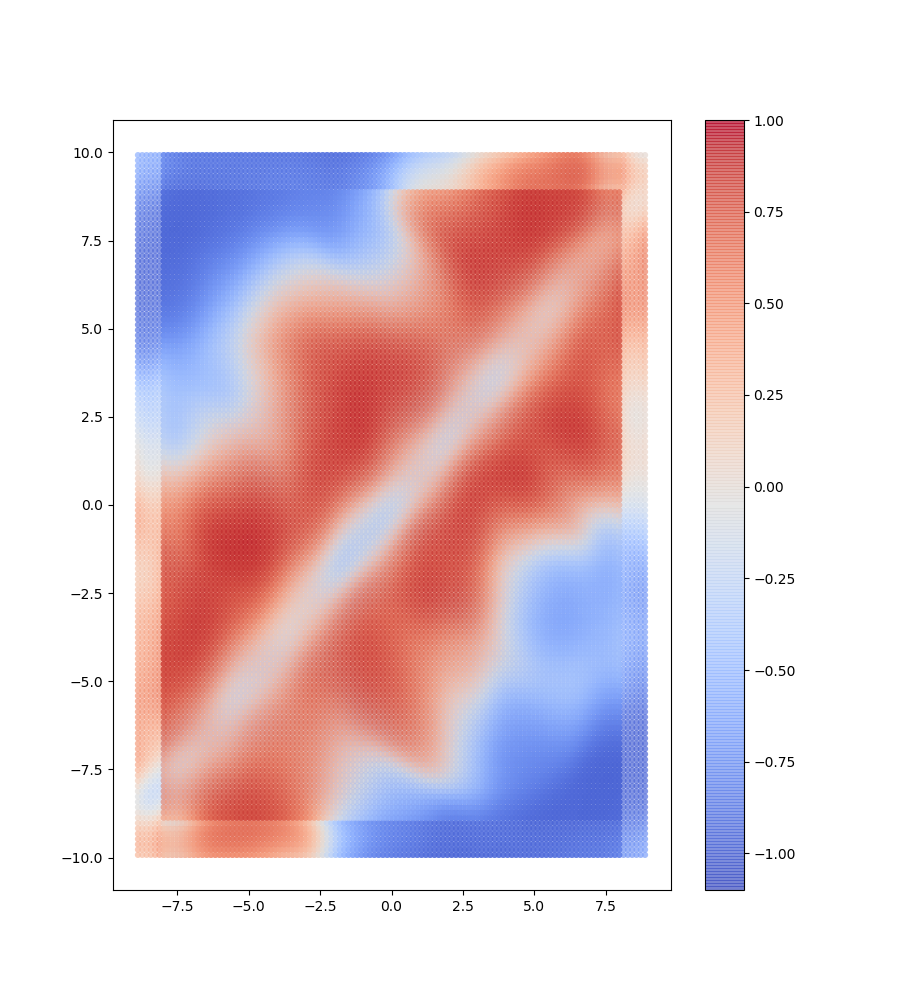}
	\caption{$HL_2$ - Cell $30$}
	\label{fig-HL2-cell30}
\end{figure}

\noindent We also analyzed two less complex problems:
\begin{enumerate}
	 \item two bars of lengths $3,5$ exchanging the colors $R$ and $G$ one time over two, during learning. This learned well, showed logic, but nothing remarkable: in $HL_2$, not too noisy, $18$ cells $D$, $18$ cells $II$, no cell $IO$, $8$ cells $II_R$, $4$ cells $II_G$, and $2$ bizarre cells. In $HL_3$, $13$ cells $D$, $7$ cells $II$, $2$ cells $II_R$, $3$ cells $II_G$, nothing for $IO$.
	 \item two bars of varying lengths $2,4,6,8$ during learning, and $3,5,7$ for testing, but the red bar being always longer than the green one. Again the result was good an logic, but without surprise in $HL_3$: in $HL_2$, $19$ cells $D$, $18$ cells $II$, $12$ cells doing a sort of Fourier analysis ($9$ symmetric, $3$ asymmetric), plus one interesting amazing cell, number $30$ telling something about $IO$, but partly localized (see Figure \ref{fig-HL2-cell30}):  $-1$ tells $\neg II$, $+1$ tells $\neg IO$, however the message is noisy.\\
	 In $HL_3$, $15$ cells $D$ and $10$ cells $II$, well quantized.
\end{enumerate}

\subsection{A third experiment, the blue object incoming}

We decided to progress towards future experiments where the network could generalize in a much wider sense, not behaving like an interpolator, but showing an understanding of what a new object (or a third voice) is and analyzing its properties by itself, by analogy with the preceding questions and the generated internal cells. We think that this would probably require a change of architecture, but we started studying what kind of cells may appear in our simple architecture in presence of a new object.\\

Two objects are presented within the line of length $23$, a red one of length $6$ and a green one of length $5$. However, in half of the cases, a blue object of length $3$  appears. It is detected by $55$ cells reacting to the blue color somewhere.\\
Again three layers, with six questions at output: $D$, $II$ and $IO$ regarding $R$ and $G$, conditioned by $B$ or $\neg B$. We note $G=S$ (for small) and $R=L$ (for large).\\
The questions asked inside are as before, except they are doubled by the conditioning $B$ or $\neg B$.\\
The results are excellent, both for the quality of learning and the very few errors made.\\

\noindent
\begin{figure}[ht]
	\begin{subfigure}{.48\textwidth}
		\centering
		\includegraphics[width=.95\linewidth]{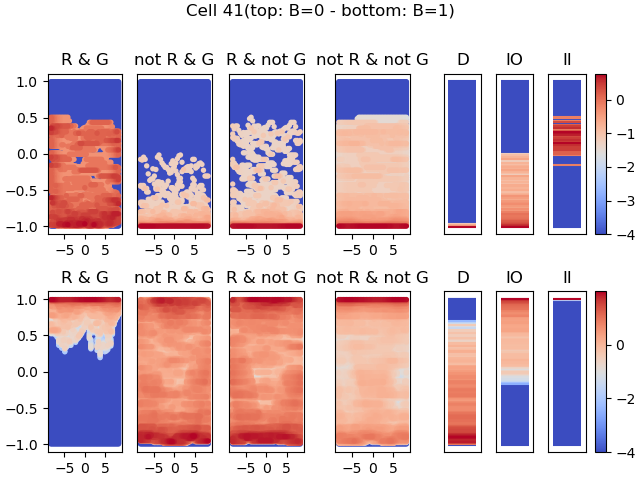}
		\caption{$HL_2$ - Cell $41$}
		\label{subf-HL2-cell41}
	\end{subfigure}
	\hfill
	\begin{subfigure}{.48\textwidth}
		\centering
		\includegraphics[width=.95\linewidth]{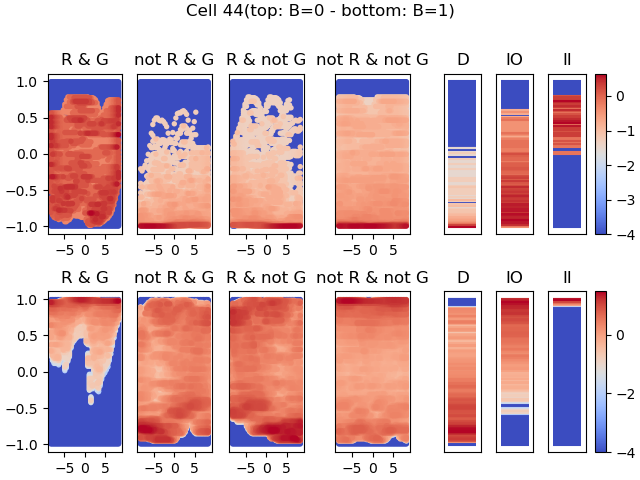}
		\caption{$HL_2$ - Cell $44$}
		\label{subf-HL2-cell44}
	\end{subfigure}
	\caption{Log activities of the exceptional cells in $HL_2$}
	\label{fig-pair-41-44-blue}
\end{figure}

\noindent
\begin{figure}[ht]
	\begin{subfigure}{.48\textwidth}
		\centering
		\includegraphics[width=.95\linewidth]{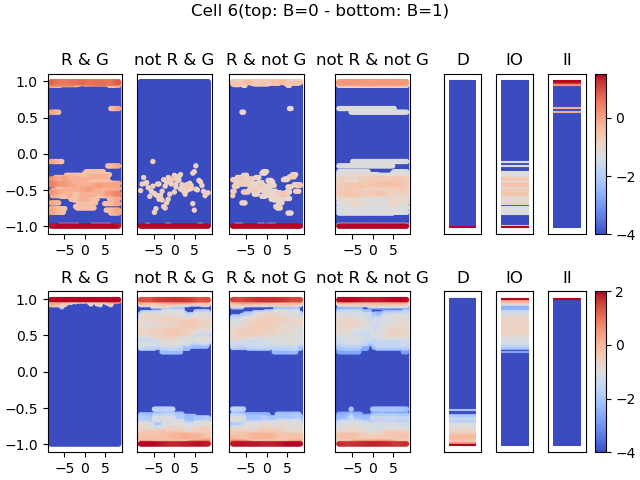}
		\caption{$HL_3$ - Cell $6$}
		\label{subf-HL3-cell6}
	\end{subfigure}
	\hfill
	\begin{subfigure}{.48\textwidth}
		\centering
		\includegraphics[width=.95\linewidth]{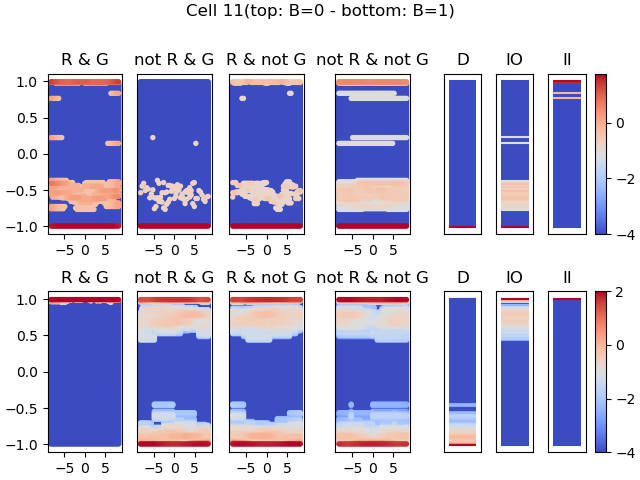}
		\caption{$HL_3$ - Cell $11$}
		\label{subf-HL3-cell11}
	\end{subfigure}
	\caption{Log activities of saturated cells in $HL_3$}
	\label{fig-pair-6-11-blue}
\end{figure}

\noindent In $HL_2$, almost all cells saturate at two values and all of these ones express logical proposition in the Boolean algebra of the objectives,
the most numerous population reacts at the presence or absence of $B$ (they are $15$, then very redundant), after that comes the population of $D$ cells conditioned by $B$ (they are $9$ and $D$ cells without conditioning ($7$ cells, with the same activity for $B$ and $\neg B$, they could have exchanged, but doing that they were not in the "final algebra"!), then the $D$ cells conditioned by $\neg B$ ($6$ cells), and the $II$ cells without conditioning (also $6$ cells, same sign for $B$ and $\neg B$),
then only one $II$ cell conditioned by $B$ (why so few? it is mysterious), two apparently uninformative cells saturated at one value, and two exceptional cell, number $41$ and $44$ (see Figure \ref{fig-pair-41-44-blue}), which are of type $II$ for $\neg B$ and of type $D$ for $B$. Each of these cells tells $+1$ when $\neg B\wedge II$ or $B\wedge D$, and tells $-1$ when $\neg B\wedge \neg II$ or $B\wedge \neg D$. Therefore it gives the following axioms
\begin{align*}
P_+&=(\neg B,II)\vee (B,D);\\
P_-&=(\neg B,IO)\vee (\neg B,D)\vee (B,IO)\vee (B,II)=IO\vee (\neg B,D)\vee (B,II).
\end{align*}
Remarkably, two such cells with exactly the same message, but a much better saturation, appears in $HL_3$, the ones numbered $6$ and $11$ (See Figure \ref{fig-pair-6-11-blue}). The above
preferred propositions, $P_-$ being the closest to $IO$ in this experiment, also belong to the final output algebra.\\
In addition, in $HL_3$ we found, $5$ color cells (telling if $B$ is here or not), $5$ pure $D$ cells, $1$ pure $II$ cell, $4$ $D$ cells
conditioned by $B$, $2$ $D$ cells conditioned by $\neg B$, $4$ $II$ cells conditioned by $B$, $2$ $II$ cells conditioned by $\neg B$, and last
but not least the two above mentioned original cells.\\

\noindent \textbf{Exercise:} analyze the possibility of proofs starting with the axioms of the most economical groups of neurons in these populations.\\

\noindent All the cells we observed in all the experiments quantize only for unions of propositions belonging to
the objectives. Only the individual choices and the possibility of proofs they offer manifest inventions.\\
For instance, the network with the richest learning, with the four objectives $D$, $IO$, $II_R$, $II_G$ could have developed cells of type $L=R$ versus $L=G$. This proposition does not belong to the output algebra, but without any doubt, it contains useful information in order to conclude; for instance, with a $II$ cell, we immediately conclude if it is $II_R$ or $II_G$ which is true.\\
Therefore to go further, we have to consider "objectives propositions" that do not form necessarily a classification task.

\subsection{The important role of symmetries}

All periodic translations (i.e. rotations) in the circular case, plus the reflections in any pair of antipodal points form the group of isometries of the circle, which can easily be quantized and can be identified with a dihedral group $D_{N}$ (we chose $N=100$) which approximates the group of the isometries of the circle $O_2(\mathbb{R})$.
The irreducible linear representations are of dimension $2$ and can be identified with the vector space $V_n$ of linear combinations of $\cos nx$ and $\sin nx$, for $n\in \mathbb{N}$, named harmonics of degree $n$.\\
Remarkably, the first network of two layers made an pertinent Fourier analysis: in order to discriminate the green from the blue, it possesses six cells
of degree $n=1$. Their phases follow a uniform distribution. Moreover, most of the blue cells correspond to second degree harmonics, and several cells
to degree $3$ (See Figure \ref{fig-Fourier}). 

\noindent
\begin{figure}[H]
	\begin{subfigure}{.48\textwidth}
		\centering
		\includegraphics[width=.75\linewidth]{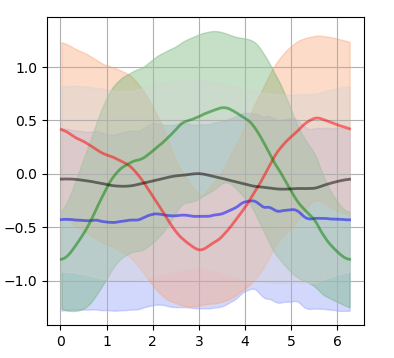}
		\caption{$HL_2$ - Cell $4$}
		\label{subf-Fourier-cell4}
	\end{subfigure}
	\hfill
	\begin{subfigure}{.48\textwidth}
		\centering
		\includegraphics[width=.75\linewidth]{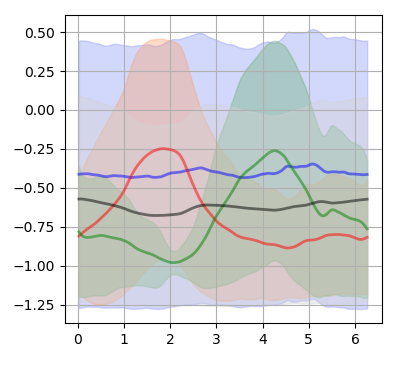}
		\caption{$HL_2$ - Cell $5$}
		\label{subf-HL2-cell5}
	\end{subfigure}
	\begin{subfigure}{.48\textwidth}
		\centering
		\includegraphics[width=.75\linewidth]{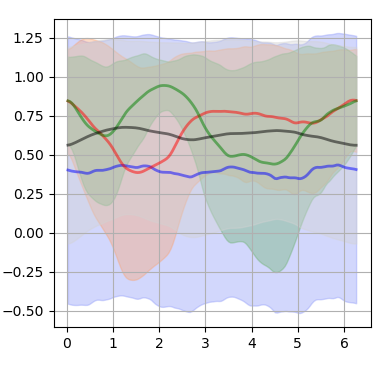}
		\caption{$HL_2$ - Cell $7$}
		\label{subf-Fourier-cell7}
		\end{subfigure}
	\hfill
	\begin{subfigure}{.48\textwidth}
		\centering			\includegraphics[width=.75\linewidth]{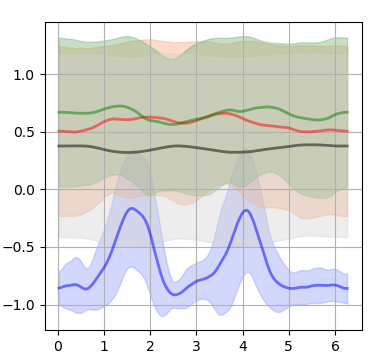}
		\caption{$HL_2$ - Cell $27$}
		\label{subf-Fourier-cell27}
	\end{subfigure}
	\begin{subfigure}{.48\textwidth}
		\centering
		\includegraphics[width=.75\linewidth]{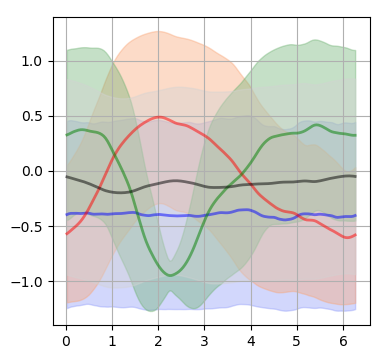}
		\caption{$HL_2$ - Cell $35$}
		\label{subf-Fourier-cell35}
	\end{subfigure}
	\hfill
	\begin{subfigure}{.48\textwidth}
		\centering
		\includegraphics[width=.75\linewidth]{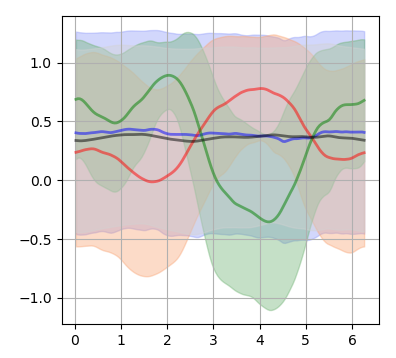}
		\caption{$HL_2$ - Cell $47$}
		\label{subf-Fourier-cell47}
	\end{subfigure}
	\caption{$HL_2$ - $R\wedge G$: Blue; $R\wedge\neg G$: Red; $\neg R \wedge G$: Green; $\neg R \wedge\neg G$: Black}
	\label{fig-Fourier}
\end{figure}

Moreover many curves are symmetric with respect to an axis, expressing an interest for the reflections.\\
With three hidden layers, this spontaneous use of Fourier analysis becomes less evident in $HL_2$, and is replaced by invariants under $D_{100}$ in $HL_3$.\\

In the linear case, where $D$ is a finite segment, there is only one
non-trivial symmetry, the reflection with respect to the middle, or mirror symmetry. This implies that, with one hidden layer, most of the propositional curves are anti-symmetric with respect to zero, the center of the segment (See Figure \ref{fig-Fourier-linear}). 

\noindent
\begin{figure}[H]
	\begin{subfigure}{.48\textwidth}
		\centering
		\includegraphics[width=.75\linewidth]{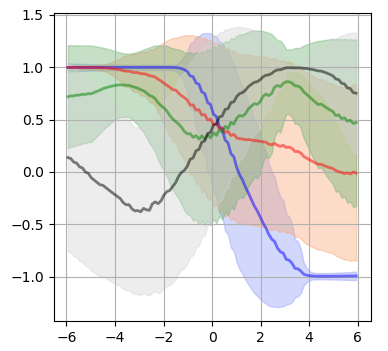}
		\caption{$HL_2$ - Cell $3$ - Linear}
		\label{subf-Fourier-cell3-linear}
	\end{subfigure}
	\hfill
	\begin{subfigure}{.48\textwidth}
		\centering
		\includegraphics[width=.75\linewidth]{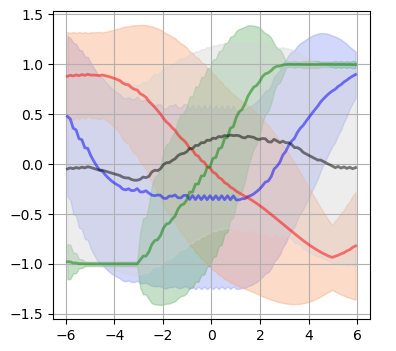}
		\caption{$HL_2$ - Cell $4$ - Linear}
		\label{subf-HL2-cell4-linear}
	\end{subfigure}
	\begin{subfigure}{.48\textwidth}
		\centering
		\includegraphics[width=.75\linewidth]{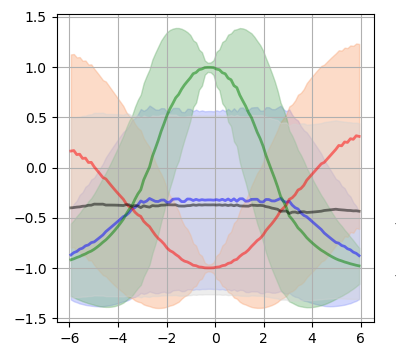}
		\caption{$HL_2$ - Cell $5$ - Linear}
		\label{subf-Fourier-cell5-linear}
	\end{subfigure}
	\hfill
	\begin{subfigure}{.48\textwidth}
		\centering			\includegraphics[width=.75\linewidth]{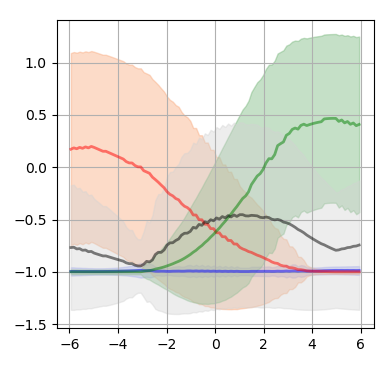}
		\caption{$HL_2$ - Cell $11$ - Linear}
		\label{subf-Fourier-cell11-linear}
	\end{subfigure}
	\begin{subfigure}{.48\textwidth}
		\centering
		\includegraphics[width=.75\linewidth]{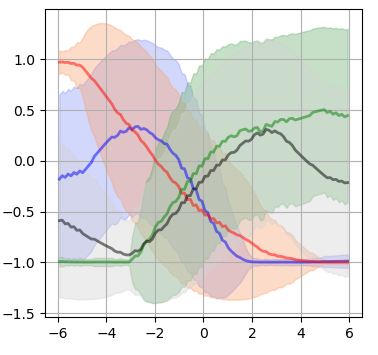}
		\caption{$HL_2$ - Cell $22$ - Linear}
		\label{subf-Fourier-cell22-linear}
	\end{subfigure}
	\hfill
	\begin{subfigure}{.48\textwidth}
		\centering
		\includegraphics[width=.75\linewidth]{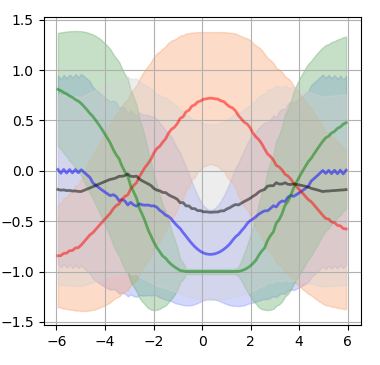}
		\caption{$HL_2$ - Cell $31$ - Linear}
		\label{subf-Fourier-cell31-linear}
	\end{subfigure}
	\caption{$HL_2$ - $R\wedge G$: Blue; $R\wedge\neg G$: Red; $\neg R \wedge G$: Green; $\neg R \wedge\neg G$: Black}
	\label{fig-Fourier-linear}
\end{figure}

For two hidden layers, in $HL_2$, most of these curves are symmetric with respect to the vertical axis. This corresponds to the two simplest representations of the cyclic group $C_2$. With three layers all the cells, with two exceptions, are predicative, with almost all curves horizontal, thus in the trivial representation of $C_2$. \\

With only two layers, the network was a little bit more successful in the linear case; maybe in this case, Fourier analysis on the circle is more disturbing than
helpful.\\

Then we notice that the network takes great care of the symmetries in the data. Moreover it is remarkable that by itself with so few cells,
it does a generalized Fourier analysis, using the simplest irreducible linear representations of the group of symmetries.\\

Also, remind that in the first experiment, the simple fact that the metric at the end was changed in order to respect the symmetries in the logic
of the questions asked, was determinant for the success of the network.\\

\paragraph{Toposic interpretation (not necessary to read)}
In toposic terms, this corresponds to the importance of the groupoid attached to each layer. This groupoid takes in account the symmetries in the data and the semantic of the questions. The logic in the topos of pre-sheaves over a groupoid is Boolean. This agrees with the Boolean character of the semantic in each individual layer. Then the apparition of generalized Fourier analysis in this context, is not surprising from the point of view of topos, because the linear
representations (over any commutative ring) are nothing else that the Abelian objects in the topos in fiber of each layer. The proper logic of the topos of the site
of the network gives the opposite: a non-Boolean progress in the logic when going deeper in the layers.\\
\noindent The groupoid $G_3$ of the first model was made of three objects $a,b,c$, non related, and one circle of morphisms for each object. Each layer corresponds
to a representation of $G_3$. The logic is made by the vertices, edges and face of the triangle $\Omega_3$, plus the empty set.\\
The external symmetries of $G_3$ itself are described by the group of permutations $\mathfrak{S}_3$.\\

\begin{rmk*}
	\normalfont According to the questions asked in our second series of experiments, we even could have expected the intervention of
	a sort of quantization of the group of all homeomorphisms of the space (circular or linear); perhaps when augmenting the complexity of the data
	and the number of neurons, this would appear.
\end{rmk*}

\subsection{The important role of redundancy}

Most  cells in the last hidden layer are logical and, before the intervention of the blue object, they are repeated several times almost identically.
Even with the blue one, the tendency to repeat is obvious. As this does not appear in the preceding layers, this shows the existence of a standard way of giving less weight to less logical cells in order to produce logical behaviors.\\

There is an analogy with statistics: the advantage to use independent identically distributed variables. But also this can be linked to learning,
because the minimum of the metric can have highly specific properties.\\

In this second set of experiments, we saw that the probabilities are necessary if we want to conclude with logical arguments from the last hidden layer.
For instance, each time a cell saturates at $+1$ for the proposition
$R(a)\wedge G(a)$, if its activity for a given image $\xi$ is not $+1$ (sufficiently clearly) we directly conclude $D$, but if it is $+1$ we cannot,
except if we add information coming from the other curve; if the distributions of activity associated to the three other local propositions do not clearly saturate at $+1$, we are allowed (and probably also the network) to deduce from the activity $+1$ (or almost $+1$), that there exists a point $a$ where $G(a)$ and $V(a)$ is true. Consequently, in a statistical sense, if many cells repeat this message it has more chance to be true, and we get a proposition not far from a certitude. This will be an important ingredient of a discussion about semantic information.\\

There remains an important experimental question: is it visible in the weights chosen by learning that the network uses the logical deduction? And how the statistical argument
is taken into account?\\
The existence of the logical cells and the evidence of a role of probabilistic inference give no great doubt that both logic and statistics are used, but it clearly needs to be made more precise.\\

\section{How do weights perform logic and deduction?}

First, we developed an automatic detection and analysis of the logical behaviors of cells, adaptable to all the above reported
experiments. \\
%Cf. methods for the details.\\
\indent First step has been to record the reactions of individual cells $a\in L_k;k=2,3$, to the input data $\xi$, conditioned by a 
known answer $\sigma$ to one of the questions that were asked inside,
and frequently reflected the objectives of the network. For instance, we recorded the distribution activity of $a$ for the inputs where we know that the green object is included in the red
one.\\
Then its activity, a real number $x_a(\xi|\sigma)$ between $-1$ and $1$, is partitioned in three sets labelled by $-1$, $0$ and $+1$,
according to the values smaller than $-1/3$, between $-1/3$ and $1/3$ and greater than $+1/3$. The cell is considered to be \emph{logical} for the condition $\sigma$
if more than $80 \%$ of the conditioned activities belong to one of the interval of the partition. It appeared in all the above experiments that when a cell is
logical, the chosen segments contain an extremity, thus attributing the number $-1$ or $+1$ to $a$ for the condition $\sigma$. (Remark we have now, in more complex experiments, cells which also choose the interior $]-1/3,1/3[$ under some condition.) Remark that, in most cases with three hidden layers or more, most cells are logical
for at least two conditions, and have distributions much more concentrated than $80 \%$, around $95 \%$ (see table \ref{fig:distri}).\\
\begin{table}
	\centering
	\includegraphics[width=.45\linewidth]{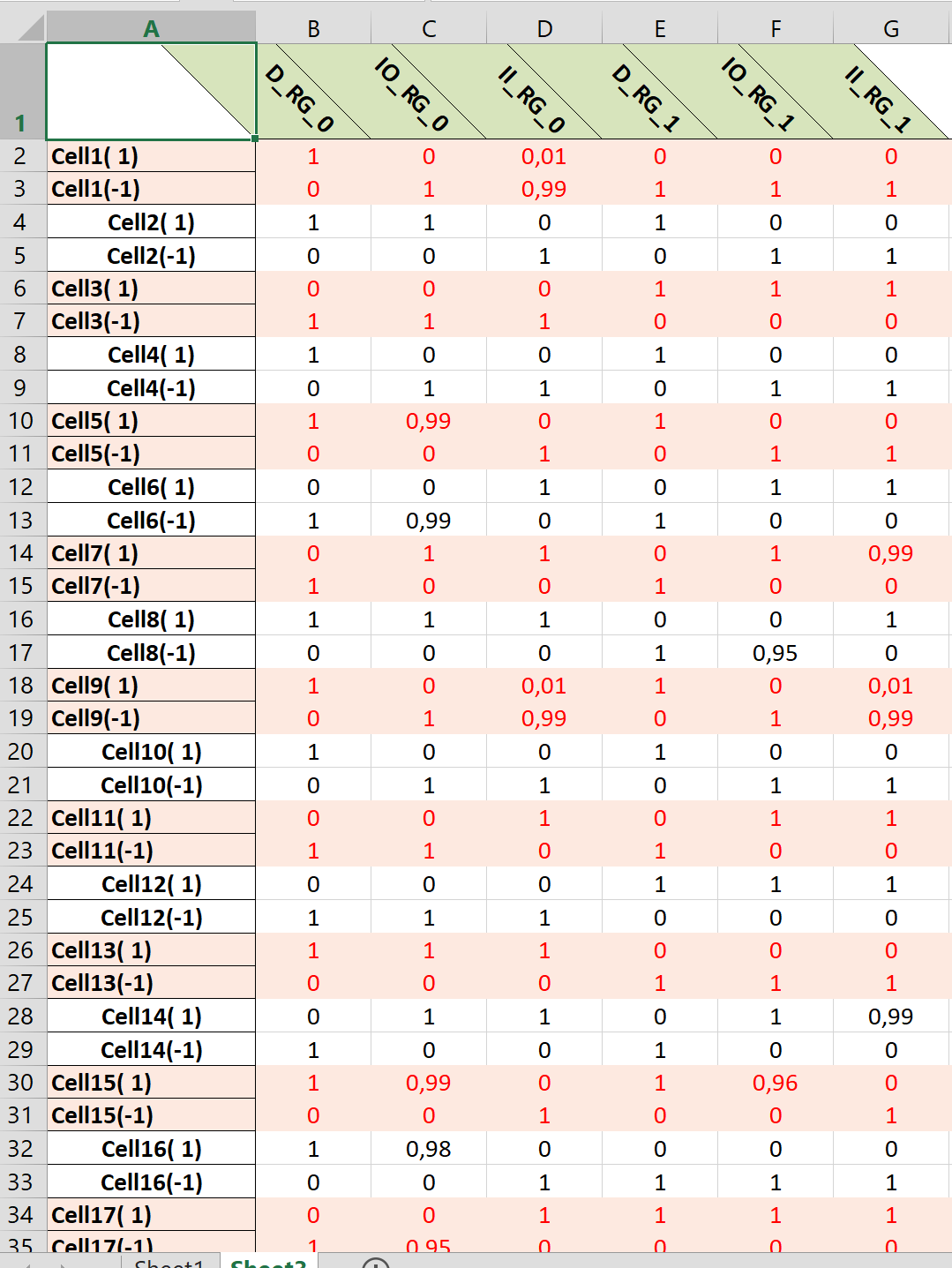}
	\caption{Logical cells behavior}
	\label{fig:distri}
\end{table}

Next step is to decode the activity: when the logical cell fires in the opposite interval $\mp 1$ of its preferred interval for $\sigma$, 
we consider that it excludes $\sigma$, and we put a $0$ in the table
for this proposition $\sigma$, meaning $\neg\sigma$ is asserted by $a$ at this value $\mp 1$. If nothing can be decided, we put $1$. This gives a matrix, whose columns represent the
conditions and its lines the pairs $(a,\varepsilon)$, where $a$ is a cell and $\varepsilon$ is $+1$ or $-1$. The logical score is defined by the number of
$0$ in the line of $a,\varepsilon$.\\
\indent Starting with these data, we computed the predictions of the pairs $(a,b)$ and of the triples $(a,b,c)$ of cells, at given values
$\varepsilon_a,\varepsilon_b,...$ in $\{\pm 1\}$. This gave matrices which four, resp. eight, lines for each pair, resp. triple. The core of
the matrix in general possesses less rows, because not all vector of signs can be realized given the possible inputs. For instance, if the
objectives are $D_{RG}$, $II_{RG}$, $IO_{RG}$, at most three vectors are accessible.\\
It appeared that no pair can reconstruct (except one exception) all the conditions, but a non-negligible subset of triples can do (around $5 \%$
of the possible triples). To each triple, we gave a score $N(a,b,c)$, which is the number of conditions that it can reconstruct. The efficient
ones are named here \emph{conclusive}, cf. section \ref{sec-predicate}. For instance in the case of three colors, $R,G,B$ where $B$ is sometimes present sometimes not, and the
objectives are $D_{RG}B$, $II_{RG}B$, $IO_{RG}B$, $D_{RG}\neg B$, $II_{RG}\neg B$, $IO_{RG}\neg B$, the conclusive triples have a score of $6$.
The core matrix $A$ of a triple has six lines and six columns.\\

On the weights side, we computed, when the network has learned, the ones corresponding to the connection between the last hidden layer and the neurons
of the output. They are real numbers of any possible signs. For instance, in the above experiment with three colors, each of the $25$ 
hidden neuron $a$ in $HL_3$ defines a column vector of $\mathbb{R}^{6}$.\\
\indent For a given set $D$ of $d$ neurons in $HL_3$ we get a matrix $W(D)$ with $6$ rows and $d$ columns. Remind that between the last
hidden layer and the output the transformation is linear. Thus we can define a \emph{quantized expression} of $D$, which associates to any
quantized activity vector of the $d$ neurons an answer in the numerical output layer $\mathbb{R}^{6}$.\\
In the particular case of the triples that we described in the preceding paragraph, we have for each triple, a product of $6\times 6$ matrices
\begin{equation}
M(a,b,c)=W(a,b,c).A(a,b,c),
\end{equation}
which describes the estimation of a condition $\sigma_{out}$ made by the triple from a condition $\sigma_{in}$.\\

\noindent The hypothesis of a logical functioning is that this matrix is closed to the identity, then almost diagonal with diagonal values close to $1$.
Therefore we define a \emph{weighted logical score} of the triple by the formula
\begin{equation}
\mu_W(a,b,c)=\| \diag (M)\|_{\ell_1}-\frac{1}{6}\| M\|_{\ell_1};
\end{equation}
the $\ell_1-$norm being the sum of the absolute values of the coefficients.\\

\noindent Beside this score, we can take as \emph{brut weight score} the norm $\| W\|_{\ell_1}$ of $W(a,b,c)$.\\
\noindent
\begin{figure}[htb!]
	\begin{subfigure}{.48\textwidth}
		\centering
		\includegraphics[width=.95\linewidth]{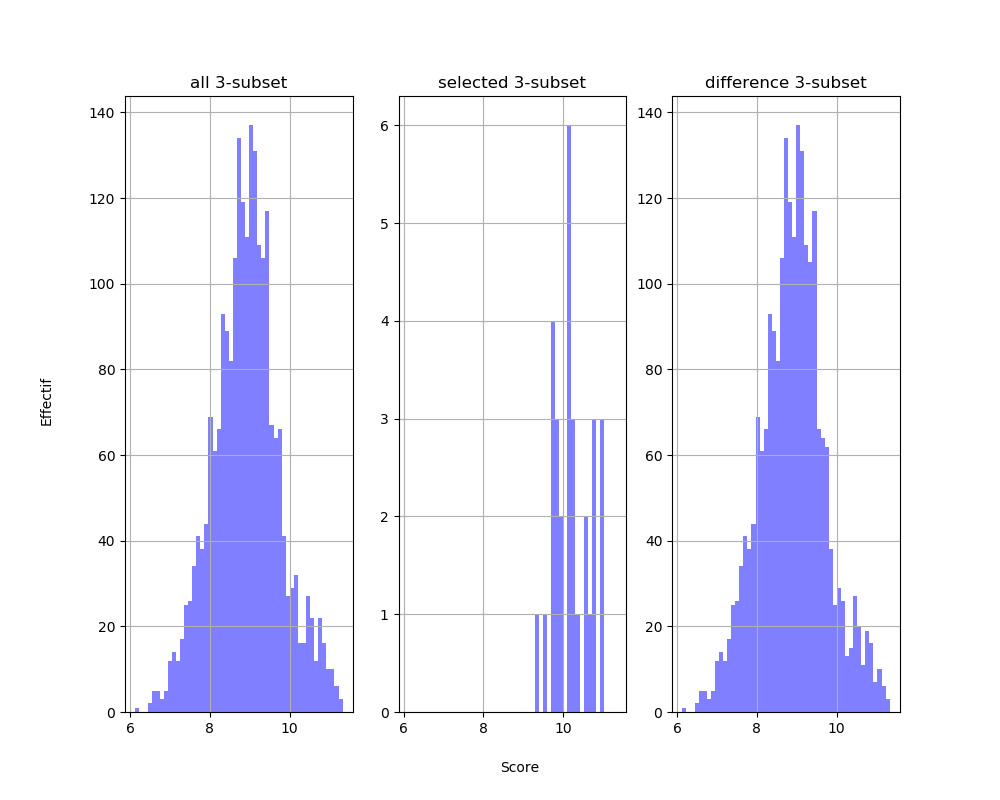}
		\caption{Raw $\ell_1$ norm of the weight matrices}
		\label{subf:weights}
	\end{subfigure}
	\hfill
	\begin{subfigure}{.48\textwidth}
		\centering
		\includegraphics[width=.95\linewidth]{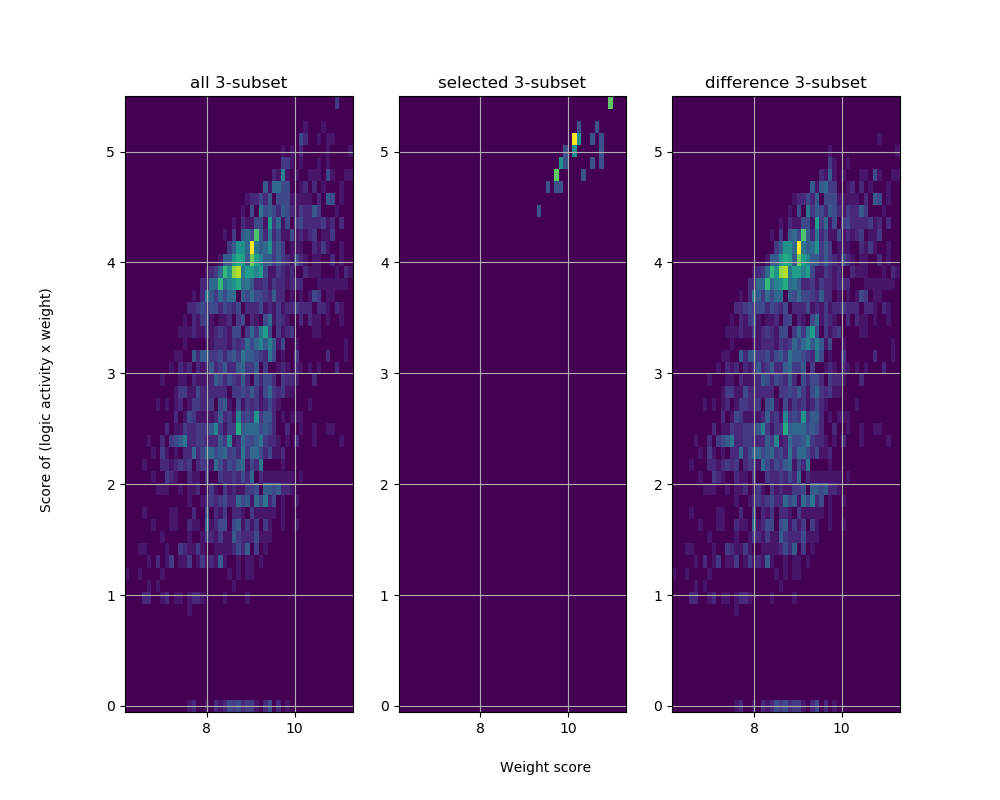}
		\caption{Correlation between $\ell_1$ norm and logical values}
		\label{subf:logicLW}
	\end{subfigure}
	\begin{subfigure}{.48\textwidth}
		\centering
		\includegraphics[width=.95\linewidth]{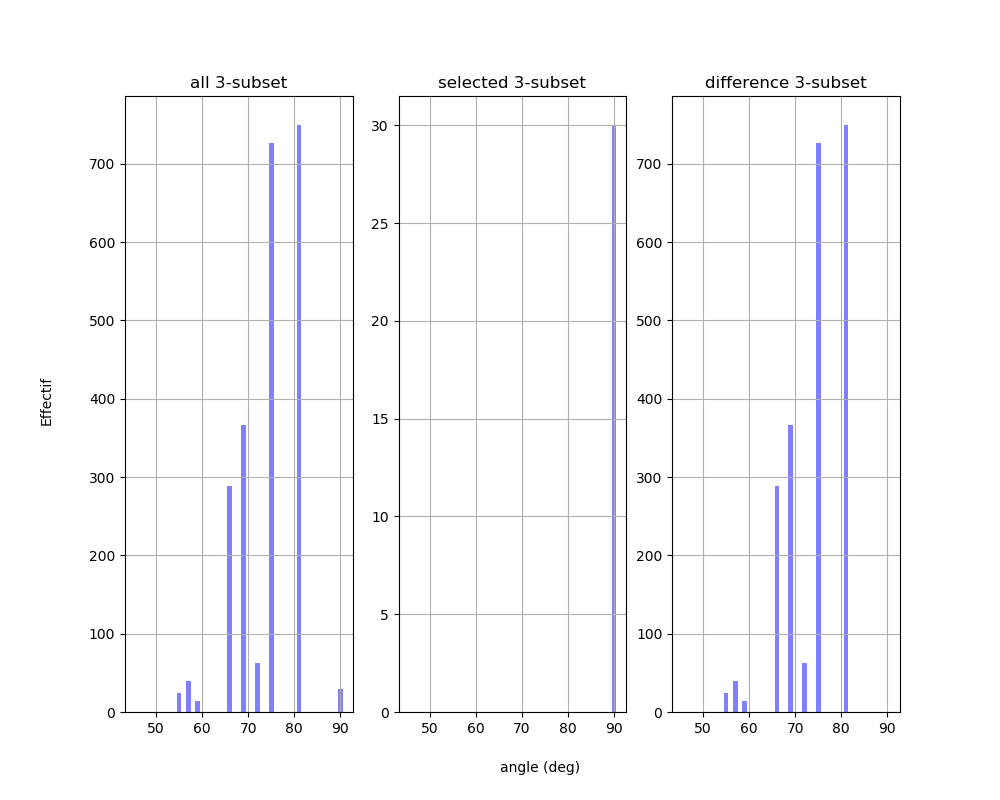}
		\caption{Histogram of deductions}
		\label{subf:logicL}
	\end{subfigure}
	\hfill
	\begin{subfigure}{.48\textwidth}
		\centering
		\includegraphics[width=.95\linewidth]{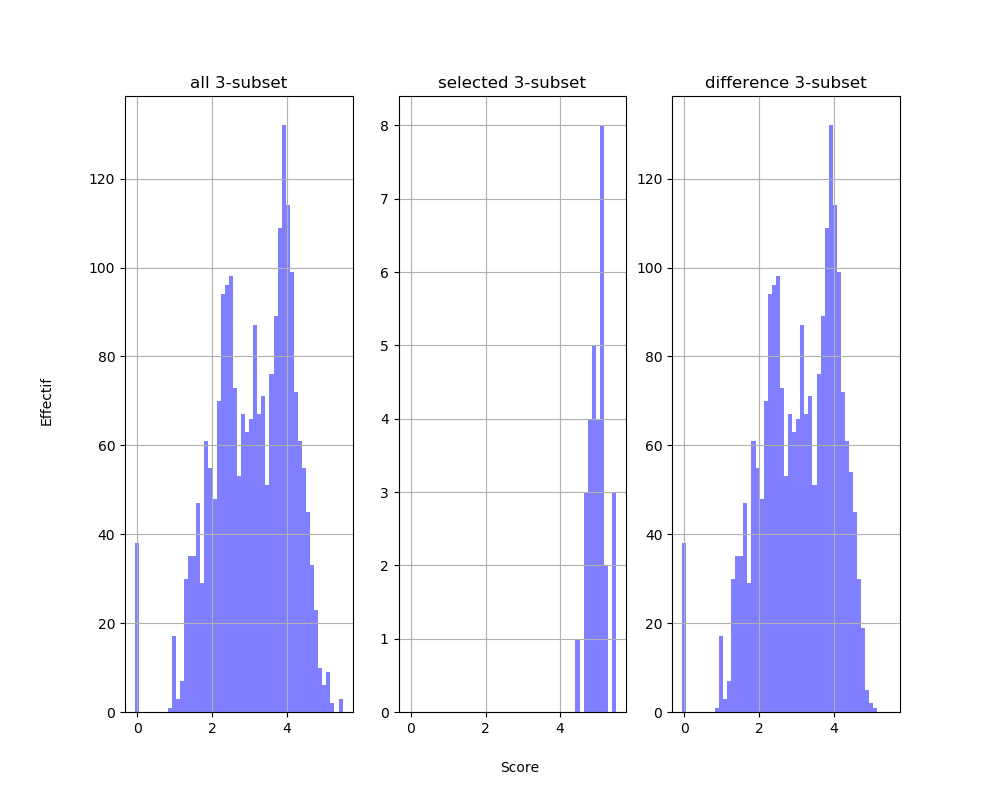}
		\caption{Histogram of deductive power}
		\label{subf:logicxW}
	\end{subfigure}
	\begin{subfigure}{.48\textwidth}
		\centering			\includegraphics[width=.95\linewidth]{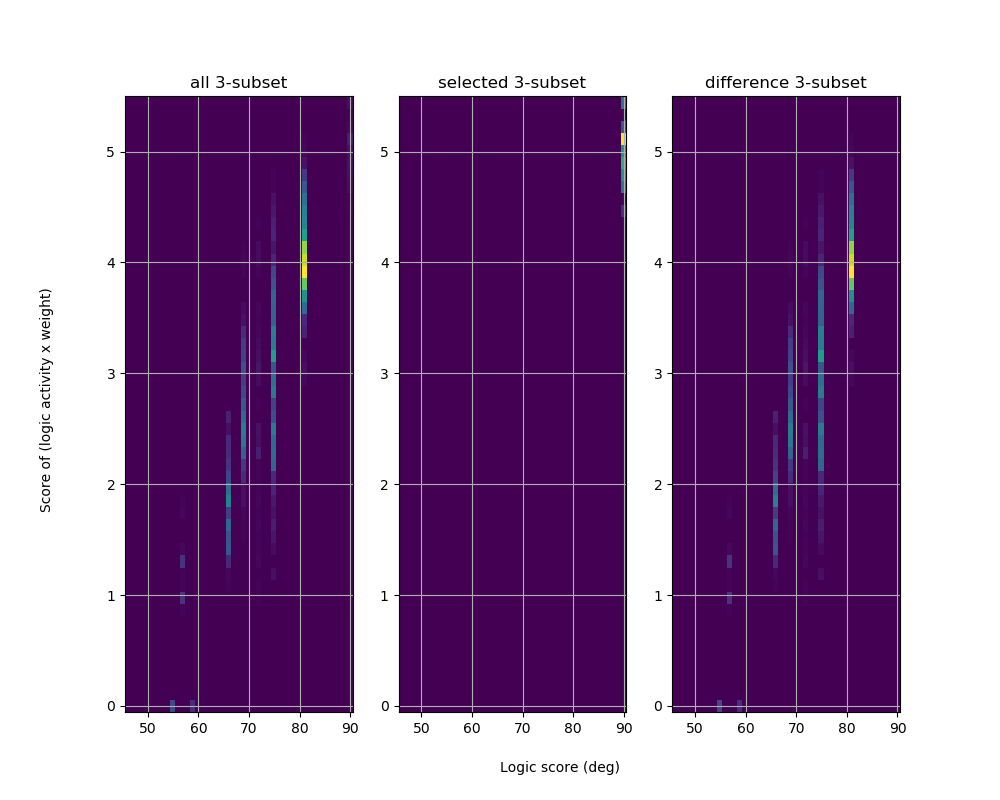}
		\caption{Correlation between the logic score and the weight score}
		\label{subf:logicxWL}
	\end{subfigure}
	\caption{Triple of conclusive cells and others}
	\label{fig:weights}
\end{figure}

\noindent The most important statistical result is the positive correlation
between the pure logical score $N(a,b,c)$ of a triple and irs weight's scores, either the logical one either the brut one.\\

\noindent The analysis of the weights distributions, according to $\mu_W$ or to $\| W\|$ clearly shows that the logical deduction
is reflected in the weights. Cf. figure \ref{fig:weights}.\\

\noindent To get a good linear regression in both comparisons, we needed to take a convex function of $N$, computed as
an angle, associated to the scalar product of the matrix $A$ withe the matrix $\mathbf{1}_A$. This indicates that the passage from logic to weight is concave like.\\

The statistics are on the triple of cells, the selected triples are the conclusive ones that resolve completely the $6$ objectives. Figure \ref{subf:weights} shows the raw $L1$ norm of the weight's matrices $3\times 6$.
Figure \ref{subf:logicLW} shows the correlation between this norm and the logical value of the triples, i.e. what the three cells prove together.\\
Figure \ref{subf:logicL} is the histogram of deductions (counted as an angle).\\
Figure \ref{subf:logicxW} is the histogram of deductive power of the weights applied to the quantized activities.\\
Eventually, figure \ref{subf:logicxWL} is the correlation between the logic score of the triple (counted as an angle) and the preceding score of the weight
proofs.

%\noindent The figures can be taken from the file named $JCB-07-12-2020$, $ThreeBar\\
%OneNetwork$.\\
%The statistics are on the triple of cells, the selected triples are the conclusive ones that resolve completely
%the $6$ objectives. $Weight-Size-R1$ shows the brut $L1$ norm of the weight's matrices $3\times 6$.
%$LogicAndWeight-vs-Weight-Size3-R1$ shows the correlation between this norm and the logical
%value of the triples, i.e. what the three cells prove together.\\
%$Logic-Size3-R1$ is the histogram of deductions (counted as an angle, cf. Xavier's text).\\
%$LogicWeight-Size3-R1$ is the histogram of deductive power of the weights applied to
%the quantized activities.\\
%$LogicxWeight-vs-Logic-Size3-R1-L3$ is the correlation between the logic score of the
%triple (counted as an angle, cf. Xavier's text) and the preceding score of the weight
%proofs.\\
%
%\noindent Then we conclude that the attractor in the learning has an intrinsic logical meaning.\\

\section{Conclusion and discussion: Questions and Perspectives}
Our main result is the observation of spontaneous development of logical
activity in simple DNNs. The corresponding information structure is not only
statistical, even if statistics play an important role, because it relies on logical deduction and inference, and it is related to semantic as it is understood
usually in linguistic or meta-linguistic. As discussed in the text, other characteristics of this structure are redundancy and symmetry. Another discovery,
which was not explicit in the conjectured Information Bottleneck principle
or in the Infomax principle, is the fundamental role of individual cells: even
if their characteristics heavily depend on the layer where they are embedded,
each neuron develops its own personality, for contributing to the collective
answer of the layer to the objectives, in function of the stimuli.\\

\noindent The tasks in the experiments were of the type of classification problems;
the main invention of the network consisted in the introduction of the full
Boolean algebra over the elements of the classification. Proofs are supported
by the weights, and correspond to a spontaneous modularity. Moreover, the
elegance of the proofs follows a remarkable progress, when adding layers and
adapting the metric.

\subsubsection*{Perspectives}	
\begin{enumerate}[label=\arabic*)]
	\item Then the natural next step was to obtain more inner propositions and
	theories, than union of the elementary objectives. We have obtained partial
	results in this direction, that we will present soon in Logical Information
	Cells II \cite{logic-DNN-2}. 
	
	In these new experiments, we played with the same kind of data (colored bars, with three colors) and similar problems (about global topology), by changing the architectures, going from chains to recurrent neural networks RNN, then to graphs of interacting RNNs. The networks developed completely new types of cells, inaccessible by adding layers en layers, and allowing to address new kinds of problems.
	
	An entirely new phenomenon appears: the network develops by itself
	propositions and theories that do not belong to the Boolean algebra generated by the objectives. This answers positively (however weakly) the question of the invention of logic.
	\item One of the most important challenge with artificial neural networks is
	to obtain understandable generalization out of the learning data. In the
	examples we constructed, the main ingredient for obtaining results in this
	direction is the change of the architecture, other ingredients being more
	complex task and larger network with more cells.
	\item In these experiments, logical cells were still present. This gives a positive
	answer to the scaling problem: can logical behaviors of cells resist to
	the enlargement of DNNs?
	\item {\bf Problem:} develop further the relation between the examples of generalization due to changes of architectures and the theoretical arguments based
	on invariance structures, as in \cite{belfiore2021topos}.
	\item From the semantic point of view, the main problem is : how to make
	the cells able to use (at least implicitly) a sufficient semantic, i.e. types and
	contexts, objects and properties, dependent judgments of type and truth.
	One of the difficulties is to find methods rendering evident the use of abstract reasoning and semantic activity, beyond the combinatorics (that is not
	nothing, but far from thinking). The problem is similar to the problem posed by the research of reasoning in animals other than humans, even without
	considering consciousness, or causality. The next problem is to construct net-
	works and learning methods able to transmit this kind of abstract knowledge
	to another network.
	\item According to the suggestions of our theoretical paper \cite[section 3.2]{belfiore2021topos}, we also have to explore the gain we may expect from spontaneous activity.
\end{enumerate}
\section*{Acknowledgments}
The authors wish to warmly thank Merouane Debbah for his deep interest, the help and the support he gave, and Zhenrong Liu (Louise) for her constant and very kind help at work. They also warmly thank Ingmar Land, Enrique Yamamoto and Apostolos Destounis for their very stimulating interest, and for having confirmed
on their own by doing a lot of other original experiments, the preliminary results of the present paper.
Special thanks are due to Alain Berthoz, who attracted the interest of DB on the articles of Neromyliotis and Moschovakis \cite{nero-moscho-17,nero-moscho-18}, that was the starting point of this whole line of research.
\vspace{1cm}
\bibliographystyle{alpha}
\bibliography{bib/semantic}

\newcommand{\etalchar}[1]{$^{#1}$}
\begin{thebibliography}{HZM{\etalchar{+}}00}

\bibitem[BB21]{belfiore2021topos}
Jean-Claude Belfiore and Daniel Bennequin.
\newblock Topos and {S}tacks of {D}eep {N}eural {N}etworks.
\newblock arXiv, 2021.
\newblock 2106.14587.

\bibitem[BBD{\etalchar{+}}11]{bao-basu}
Jie {Bao}, Prithwish {Basu}, Mike {Dean}, Craig {Partridge}, Ananthram {Swami},
  Will {Leland}, and James~A. {Hendler}.
\newblock Towards a theory of semantic communication.
\newblock In {\em 2011 IEEE Network Science Workshop}, pages 110--117, 2011.

\bibitem[BBDH14]{basu-bao}
Prithwish Basu, Jie Bao, Mike Dean, and James~A. Hendler.
\newblock Preserving quality of information by using semantic relationships.
\newblock {\em Pervasive and Mobile Computing}, 11:188 -- 202, 2014.

\bibitem[BBG20]{logico-probabilist}
Jean-Claude Belfiore, Daniel Bennequin, and Xavier Giraud.
\newblock Logico-probabilistic information.
\newblock Internal technical report, Huawei, 2020.

\bibitem[BBG21]{logic-DNN-2}
Jean-Claude Belfiore, Daniel Bennequin, and Xavier Giraud.
\newblock Logical information cells, {Part II}.
\newblock Internal technical report, Huawei, 2021.

\bibitem[BS97]{Barwise1997-BARIFT}
Jon Barwise and Jerry Seligman.
\newblock {\em Information Flow: The Logic of Distributed Systems}.
\newblock Cambridge University Press, 1997.

\bibitem[CBH52]{CBH52}
Rudolf Carnap and Jehoshua Bar-Hillel.
\newblock An {O}utline of a {T}heory of {S}emantic {I}nformation.
\newblock Technical report, Research Laboratory of Electronics, MIT, 1952.

\bibitem[D'A11]{DAlfonso2011-DALOQS}
Simon D'Alfonso.
\newblock On quantifying semantic information.
\newblock {\em Information}, 2(1):61--101, 2011.

\bibitem[Flo04]{Floridi2004}
Luciano Floridi.
\newblock Outline of a theory of strongly semantic information.
\newblock {\em Minds and Machines}, 14, 05 2004.

\bibitem[FMT02]{fujii}
Naotaka Fujii, Hajime Mushiake, and Jun Tanji.
\newblock Distribution of eye- and arm-movement-related neuronal activity in
  the {SEF} and in the {SMA} and {Pre-SMA} of monkeys.
\newblock {\em Journal of neurophysiology}, 87:2158--66, 05 2002.

\bibitem[Geo00]{georgopoulos}
Apostolos~P. Georgopoulos.
\newblock Neural aspects of cognitive motor control.
\newblock {\em Current opinion in neurobiology}, 10:238--41, 05 2000.

\bibitem[GKS17]{Ghosh2017RobustLF}
Aritra Ghosh, H.~Kumar, and P.~S. Sastry.
\newblock Robust loss functions under label noise for deep neural networks.
\newblock In {\em AAAI}, 2017.

\bibitem[GSK86]{georgopoulos-1986}
Apostolos~P. Georgopoulos, Andrew~B. Schwartz, and Ronald~E. Kettner.
\newblock Neuronal population coding of movement direction.
\newblock {\em Science}, 233, 1986.

\bibitem[Heb49]{hebb-organization-of-behavior-1949}
Donald~O. Hebb.
\newblock {\em The organization of behavior: {A} neuropsychological theory}.
\newblock Wiley, New York, June 1949.

\bibitem[HZM{\etalchar{+}}00]{houde}
Olivier Houdé, Laure Zago, Emmanuel Mellet, Sylvain Moutier, Arlette Pineau,
  Bernard Mazoyer, and Nathalie Tzourio-Mazoyer.
\newblock Shifting from the perceptual brain to the logical brain: The neural
  impact of cognitive inhibition training.
\newblock {\em Journal of cognitive neuroscience}, 12:721--8, 10 2000.

\bibitem[JL09]{JL2009}
Stephen Johnston and Charles Leek.
\newblock Fixation region overlap: A quantitative method for the analysis of
  fixational eye movement patterns.
\newblock {\em Journal of Eye Movement Research}, 1:1--12, 02 2009.

\bibitem[JLA{\etalchar{+}}04]{JL-2004}
Stephen Johnston, E.Charles Leek, Christin Atherton, Neil Thacker, and Alan
  Jackson.
\newblock Functional contribution of medial premotor cortex to visuo-spatial
  transformation in humans.
\newblock {\em Neuroscience Letters}, 355(3), January 2004.

\bibitem[JLKG15]{johnson-laird}
Philip Johnson-Laird, Sangeet Khemlani, and Geoffrey Goodwin.
\newblock Logic, probability, and human reasoning.
\newblock {\em Trends in cognitive sciences}, 19, 03 2015.

\bibitem[KKS{\etalchar{+}}15]{kato}
Saul Kato, Harris~S. Kaplan, Tina Schrödel, Susanne Skora, Theodore~H.
  Lindsay, Eviatar Yemini, Shawn Lockery, and Manuel Zimmer.
\newblock Global brain dynamics embed the motor command sequence of {\em
  caenorhabditis elegans}.
\newblock {\em Cell}, 2015.

\bibitem[KOK03]{koechlin}
Etienne Koechlin, Chrystèle Ody, and Frédérique Kouneiher.
\newblock The architecture of cognitive control in the human prefrontal cortex.
\newblock {\em Science (New York, N.Y.)}, 302:1181--5, 12 2003.

\bibitem[LK16]{LK-book}
Christopher~C. Leary and Lars Kristiansen.
\newblock {\em A Friendly Introduction to Mathematical Logic}.
\newblock Prentice Hall, 2016.

\bibitem[NKH08]{nachev2008}
Parashkev Nachev, Christopher Kennard, and Masud Husain.
\newblock Functional role of the supplementary and pre-supplementary motor
  areas.
\newblock {\em Nature reviews. Neuroscience}, 9:856--69, 11 2008.

\bibitem[NM17]{nero-moscho-17}
Eleftherios Neromyliotis and Adonis Moschovakis.
\newblock Response properties of motor equivalence neurons of the primate
  premotor cortex.
\newblock {\em Frontiers in Behavioral Neuroscience}, 11, 04 2017.

\bibitem[NM18]{nero-moscho-18}
Eleftherios Neromyliotis and Adonis Moschovakis.
\newblock Response properties of saccade-related neurons of the post-arcuate
  premotor cortex.
\newblock {\em Journal of Neurophysiology}, 119, 03 2018.

\bibitem[Pea90]{pearl}
Judea Pearl.
\newblock Reasoning with belief functions: An analysis of compatibility.
\newblock {\em International Journal of Approximate Reasoning}, 1990.

\bibitem[RLM18]{ryan2018}
Kerrianne Ryan, Zhiyuan Lu, and Ian~A. Meinertzhagen.
\newblock The peripheral nervous system of the ascidian tadpole larva: Types of
  neurons and their synaptic networks.
\newblock {\em Journal of Comparative Neurology}, 526(4):583--608, 2018.

\bibitem[TPB01]{bottleneck-2000}
Naftali Tishby, Fernando Pereira, and William Bialek.
\newblock The information bottleneck method.
\newblock {\em Proceedings of the 37th Allerton Conference on Communication,
  Control and Computation}, 49, 07 2001.

\bibitem[TS10]{tremblay-small}
Pascale Tremblay and Steven~L. Small.
\newblock {From Language Comprehension to Action Understanding and Back Again}.
\newblock {\em Cerebral Cortex}, 21(5):1166--1177, 10 2010.

\bibitem[TZ15]{bottleneck-2015}
Naftali Tishby and Noga Zaslavsky.
\newblock Deep learning and the information bottleneck principle.
\newblock {\em 2015 IEEE Information Theory Workshop, ITW 2015}, 03 2015.

\bibitem[XQLJ20]{xie}
Huiqiang Xie, Zhijin Qin, Geoffrey~Ye Li, and Biing-Hwang Juang.
\newblock Deep learning enabled semantic communication systems.
\newblock {\em ArXiv}, 2020.

\end{thebibliography}
%\bibliography{.bib/logic}

\end{document}